\newtheorem{theorem}{Theorem}
\newtheorem{lemma}{Lemma}
\newtheorem{corollary}{Corollary}
\newtheorem{definition}{Definition}
\def\b{\ensuremath\boldsymbol}
\icmltitlerunning{}
\begin{document}

\AddToShipoutPictureBG*{%
  \AtPageUpperLeft{%
    \setlength\unitlength{1in}%
    \hspace*{\dimexpr0.5\paperwidth\relax}
    \makebox(0,-0.75)[c]{\normalsize {\color{black} To appear as a part of an upcoming textbook on dimensionality reduction and manifold learning.}}
    }}

\twocolumn[
\icmltitle{Johnson-Lindenstrauss Lemma, Linear and Nonlinear Random Projections, Random Fourier Features, and Random Kitchen Sinks: Tutorial and Survey}

\icmlauthor{Benyamin Ghojogh}{bghojogh@uwaterloo.ca}
\icmladdress{Department of Electrical and Computer Engineering, 
\\Machine Learning Laboratory, University of Waterloo, Waterloo, ON, Canada}
\icmlauthor{Ali Ghodsi}{ali.ghodsi@uwaterloo.ca}
\icmladdress{Department of Statistics and Actuarial Science \& David R. Cheriton School of Computer Science, 
\\Data Analytics Laboratory, University of Waterloo, Waterloo, ON, Canada}
\icmlauthor{Fakhri Karray}{karray@uwaterloo.ca}
\icmladdress{Department of Electrical and Computer Engineering, 
\\Centre for Pattern Analysis and Machine Intelligence, University of Waterloo, Waterloo, ON, Canada}
\icmlauthor{Mark Crowley}{mcrowley@uwaterloo.ca}
\icmladdress{Department of Electrical and Computer Engineering, 
\\Machine Learning Laboratory, University of Waterloo, Waterloo, ON, Canada}

\icmlkeywords{Tutorial}

\vskip 0.3in
]

\begin{abstract}
This is a tutorial and survey paper on the Johnson-Lindenstrauss (JL) lemma and linear and nonlinear random projections. We start with linear random projection and then justify its correctness by JL lemma and its proof. Then, sparse random projections with $\ell_1$ norm and interpolation norm are introduced. Two main applications of random projection, which are low-rank matrix approximation and approximate nearest neighbor search by random projection onto hypercube, are explained. Random Fourier Features (RFF) and Random Kitchen Sinks (RKS) are explained as methods for nonlinear random projection. Some other methods for nonlinear random projection, including extreme learning machine, randomly weighted neural networks, and ensemble of random projections, are also introduced. 
\end{abstract}

\section{Introduction}

Linear dimensionality reduction methods project data onto the low-dimensional column space of a projection matrix.
Many of these methods, such as Principal Component Analysis (PCA) \cite{ghojogh2019unsupervised} and Fisher Discriminant Analysis (FDA) \cite{ghojogh2019fisher}, learn a projection matrix for either better representation of data or discrimination of classes in the subspace. 
For example, PCA learns the projection matrix to maximize the variance of projected data. FDA learns the projections matrix to discriminate classes in the subspace. 
However, it was seen that if we do not learn the projection matrix and just sample the elements of projection matrix randomly, it still works. 
In fact, random projection preserves the distances of points with a small error after projection. 
Hence, this random projection works very well surprisingly although its projection matrix is not learned! 
For justifying why random projection works well, the Johnson-Lindenstrauss (JL) lemma \cite{johnson1984extensions} was proposed which bounds the error of random projection. 
Random projection is a probabilistic dimensionality reduction method. 
Theories for various random projection methods have a similar approach (e.g., see the proofs in this paper) and similar bounds. 
In terms of dealing with probability bounds, it can be slightly related to the Probably Approximately Correct (PAC) learning \cite{shalev2014understanding}. 
A survey on linear random projection is \cite{xie2017survey} and a good book on this topic is \cite{vempala2005random}. Slides \cite{liberty2006random,ward2014dimension} are also good slides about linear random projection. 

The theory of linear random projection has been developed during years; however, the theory of nonlinear random projection still needs to be improved because it is more complicated to analyze than linear random projection. 
Nonlinear random projection can be modeled as a linear random projection followed by a nonlinear function. 
Random Fourier Features (RFF) \cite{rahimi2007random} and Random Kitchen Sinks (RKS) \cite{rahimi2008weighted,rahimi2008uniform} are two initial works on nonlinear random projection. Some other methods have also been proposed for nonlinear random projection. They are Extreme Learning Machine (ELM) \cite{huang2006extreme,huang2011extreme,tang2015extreme}, randomly weighted neural network \cite{jarrett2009best}, and ensemble random projections \cite{schclar2009random,cannings2015random,karimi2017ensembles}. 

This is a tutorial and survey paper on linear and nonlinear random projection.
The remainder of this paper is organized as follows. We introduce linear projection and linear random projection in Section \ref{section_linear_random_projection}. The JL lemma and its proof are provided in Section \ref{section_JL_lemma}. Then, sparse random projection by $\ell_1$ norm is introduced in Section \ref{section_sparse_linear_random_projection}. Applications of linear random projection, including low-rank matrix approximation and approximate nearest neighbor search, are explained in Section \ref{section_applications_of_random_projection}. Section \ref{section_RFF_and_RKS} introduce the theory of RFF and RKS for nonlinear random projection. Some other methods for nonlinear random projection, including ELM, random neural network, and ensemble of random projections, are introduced in Section \ref{section_other_nonlinear_random_projections}. Finally, Section \ref{section_conclusion} concludes the paper. 

\section*{Required Background for the Reader}

This paper assumes that the reader has general knowledge of calculus, probability, linear algebra, and basics of optimization. 

\section{Linear Random Projection}\label{section_linear_random_projection}

\subsection{Linear Projection}

\subsubsection{Projection Formulation}

Assume we have a data point $\b{x} \in \mathbb{R}^d$. We want to project this data point onto the vector space spanned by $p$ vectors $\{\b{u}_1, \dots, \b{u}_p\}$ where each vector is $d$-dimensional and usually $p \ll d$. We stack these vectors column-wise in matrix $\b{U} = [\b{u}_1, \dots, \b{u}_p] \in \mathbb{R}^{d \times p}$. In other words, we want to project $\b{x}$ onto the column space of $\b{U}$, denoted by $\mathbb{C}\text{ol}(\b{U})$.

The projection of $\b{x} \in \mathbb{R}^d$ onto $\mathbb{C}\text{ol}(\b{U}) \in \mathbb{R}^p$ and then its representation in the $\mathbb{R}^d$ (its reconstruction) can be seen as a linear system of equations:
\begin{align}\label{equation_projection}
\mathbb{R}^d \ni \widehat{\b{x}} := \b{U \beta},
\end{align}
where we should find the unknown coefficients $\b{\beta} \in \mathbb{R}^p$. 

If the $\b{x}$ lies in the $\mathbb{C}\text{ol}(\b{U})$ or $\textbf{span}\{\b{u}_1, \dots, \b{u}_p\}$, this linear system has exact solution, so $\widehat{\b{x}} = \b{x} = \b{U \beta}$. However, if $\b{x}$ does not lie in this space, there is no any solution $\b{\beta}$ for $\b{x} = \b{U \beta}$ and we should solve for projection of $\b{x}$ onto $\mathbb{C}\text{ol}(\b{U})$ or $\textbf{span}\{\b{u}_1, \dots, \b{u}_p\}$ and then its reconstruction. In other words, we should solve for Eq. (\ref{equation_projection}). In this case, $\widehat{\b{x}}$ and $\b{x}$ are different and we have a residual:
\begin{align}\label{equation_residual_1}
\b{r} = \b{x} - \widehat{\b{x}} = \b{x} - \b{U \beta},
\end{align}
which we want to be small. As can be seen in Fig. \ref{figure_residual_and_space}, the smallest residual vector is orthogonal to $\mathbb{C}\text{ol}(\b{U})$; therefore:
\begin{align}
\b{x} - \b{U\beta} \perp \b{U} &\implies \b{U}^\top (\b{x} - \b{U \beta}) = 0, \nonumber \\
& \implies \b{\beta} = (\b{U}^\top \b{U})^{-1} \b{U}^\top \b{x}. \label{equation_beta}
\end{align}
It is noteworthy that the Eq. (\ref{equation_beta}) is also the formula of coefficients in linear regression where the input data are the rows of $\b{U}$ and the labels are $\b{x}$; however, our goal here is different. 

\begin{figure}[!t]
\centering
\includegraphics[width=2.2in]{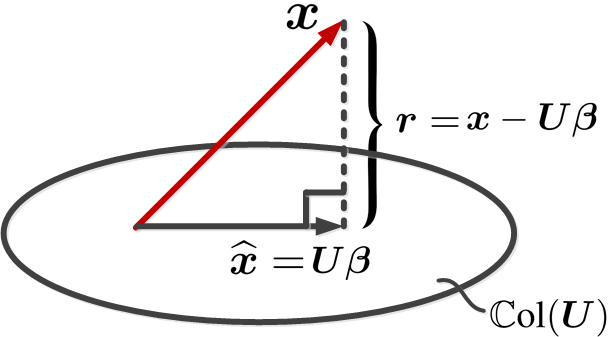}
\caption{The residual and projection onto the column space of $\b{U}$. Credit of image is for \cite{ghojogh2019unsupervised}.}
\label{figure_residual_and_space}
\end{figure}

Plugging Eq. (\ref{equation_beta}) in Eq. (\ref{equation_projection}) gives us:
\begin{align*}
\widehat{\b{x}} = \b{U} (\b{U}^\top \b{U})^{-1} \b{U}^\top \b{x}.
\end{align*}
We define:
\begin{align}\label{equation_hat_matrix}
\mathbb{R}^{d \times d} \ni \b{\Pi} := \b{U} (\b{U}^\top \b{U})^{-1} \b{U}^\top,
\end{align}
as ``projection matrix'' because it projects $\b{x}$ onto $\mathbb{C}\text{ol}(\b{U})$ (and reconstructs back).
Note that $\b{\Pi}$ is also referred to as the ``hat matrix'' in the literature because it puts a hat on top of $\b{x}$.

If the vectors $\{\b{u}_1, \dots, \b{u}_p\}$ are orthonormal (the matrix $\b{U}$ is orthogonal), we have $\b{U}^\top = \b{U}^{-1}$ and thus $\b{U}^\top \b{U} = \b{I}$. Therefore, Eq. (\ref{equation_hat_matrix}) is simplified:
\begin{align}
& \b{\Pi} = \b{U} \b{U}^\top.
\end{align}
So, we have:
\begin{align}\label{equation_x_hat}
\widehat{\b{x}} = \b{\Pi}\, \b{x} = \b{U} \b{U}^\top \b{x}.
\end{align}

\subsubsection{Projection onto a Subspace}\label{section_projection_subspace}

In subspace learning, the projection of a vector $\b{x} \in \mathbb{R}^d$ onto the column space of $\b{U} \in \mathbb{R}^{d \times p}$ (a $p$-dimensional subspace spanned by $\{\b{u}_j\}_{j=1}^p$ where $\b{u}_j \in \mathbb{R}^d$) is defined as:
\begin{align}
&\mathbb{R}^{p} \ni \widetilde{\b{x}} := \b{U}^\top \b{x}, \label{equation_projection_training_onePoint_severalDirections} \\
&\mathbb{R}^{d} \ni \widehat{\b{x}} := \b{U}\b{U}^\top \b{x} = \b{U} \widetilde{\b{x}}, \label{equation_reconstruction_training_onePoint_severalDirections}
\end{align}
where $\widetilde{\b{x}}$ and $\widehat{\b{x}}$ denote the projection and reconstruction of $\b{x}$, respectively.

If we have $n$ data points, $\{\b{x}_i\}_{i=1}^n$, which can be stored column-wise in a matrix $\b{X} \in \mathbb{R}^{d \times n}$, the projection and reconstruction of $\b{X}$ are defined as:
\begin{align}
&\mathbb{R}^{p \times n} \ni \widetilde{\b{X}} := \b{U}^\top \b{X}, \label{equation_projection_training_SeveralPoints_severalDirections} \\
&\mathbb{R}^{d \times n} \ni \widehat{\b{X}} := \b{U}\b{U}^\top \b{X} = \b{U} \widetilde{\b{X}}, \label{equation_reconstruction_training_SeveralPoints_severalDirections}
\end{align}
respectively.

If we have an out-of-sample data point $\b{x}_t$ which was not used in calculation of $\b{U}$, the projection and reconstruction of it are defined as:
\begin{align}
&\mathbb{R}^{p} \ni \widetilde{\b{x}}_t := \b{U}^\top \b{x}_t, \label{equation_projection_outOfSample_onePoint_severalDirections} \\
&\mathbb{R}^{d} \ni \widehat{\b{x}}_t := \b{U}\b{U}^\top \b{x}_t = \b{U} \widetilde{\b{x}}_t, \label{equation_reconstruction_outOfSample_onePoint_severalDirections}
\end{align}
respectively.

In case we have $n_t$ out-of-sample data points, $\{\b{x}_{t,i}\}_{i=1}^{n_t}$, which can be stored column-wise in a matrix $\b{X}_t \in \mathbb{R}^{d \times n_t}$, the projection and reconstruction of $\b{X}_t$ are defined as:
\begin{align}
&\mathbb{R}^{p \times n_t} \ni \widetilde{\b{X}}_t := \b{U}^\top \b{X}_t, \label{equation_projection_outOfSample_SeveralPoints_severalDirections} \\
&\mathbb{R}^{d \times n_t} \ni \widehat{\b{X}}_t := \b{U}\b{U}^\top \b{X}_t = \b{U} \widetilde{\b{X}}_t, \label{equation_reconstruction_outOfSample_SeveralPoints_severalDirections}
\end{align}
respectively.

For the properties of the projection matrix $\b{U}$, refer to \cite{ghojogh2019unsupervised}.

\subsection{Linear Random Projection}

Linear random projection is projection of data points onto the column space of a projection matrix where the elements of projection matrix are i.i.d. random variables sampled from a distribution with zero mean and (possibly scaled) unit variance. In other words, random projection is a function $f\!: \mathbb{R}^d \rightarrow \mathbb{R}^p$, $f\!: \b{x} \mapsto \b{U}^\top \b{x}$:
\begin{align}\label{equation_linear_random_projection}
\mathbb{R}^p \ni f(\b{x}) := \b{U}^\top \b{x} = \sum_{t=1}^p \b{u}_t^\top \b{x} = \sum_{j=1}^d \sum_{t=1}^p u_{jt}\, x_j,
\end{align}
where $\b{U} = [\b{u}_1, \dots, \b{u}_p] \in \mathbb{R}^{d \times p}$ is the random projection matrix, $u_{jt}$ is the $(j,t)$-th element of $\b{U}$, and $x_j$ is the $j$-th element of $\b{x} \in \mathbb{R}^d$. 
The elements of $\b{U}$ are sampled from a distribution with zero mean and (possibly scaled) unit variance. For example, we can use the Gaussian distribution $u_{j,t} \sim \mathcal{N}(0, 1), \forall j,t$. 
Some example distributions to use for random projection are Gaussian \cite{giryes2016deep} and Cauchy \cite{li2007nonlinear,ramirez2012reconstruction} distributions. 

It is noteworthy that, in some papers, the projection is normalized:
\begin{align}\label{equation_linear_random_projection_normalized}
f\!: \b{x} \mapsto \frac{1}{\sqrt{p}} \b{U}^\top \b{x}.
\end{align}
If we have a dataset of $d$-dimensional points with sample size $n$, we can stack the points in $\b{X} := [\b{x}_1, \dots, \b{x}_n] \in \mathbb{R}^{d \times n}$. 
Eq. (\ref{equation_linear_random_projection}) or (\ref{equation_linear_random_projection_normalized}) is stated as:
\begin{align}
\mathbb{R}^{p \times n} \ni f(\b{X}) := \b{U}^\top \b{X} \quad \text{or} \quad \frac{1}{\sqrt{p}} \b{U}^\top \b{X}.
\end{align}

In contrast to other linear dimensionality reduction methods which learn the projection matrix using training dataset for better data representation of class discrimination, random projection does not learn the projection matrix but randomly samples it independent of data.  
Surprisingly, projection of data onto the column space of this random matrix works very well although the projection matrix is completely random and independent of data. 
The JL lemma \cite{johnson1984extensions} justifies why random projection works. In the following, we introduce this lemma and its proof. 

\section{The Johnson-Lindenstrauss Lemma}\label{section_JL_lemma}


\begin{theorem}[Johnson-Lindenstrauss Lemma \cite{johnson1984extensions}]\label{theorem_JL_lemma}
For any set $\mathcal{X} := \{\b{x}_i \in \mathbb{R}^d\}_{i=1}^n$, any integer $n$ as the sample size, and any $0<\epsilon<1$ as error tolerance, let $p$ be a positive integer satisfying:
\begin{align}\label{equation_JL_lemma_p_lower_bound}
p \geq \Omega(\frac{\ln(n)}{\epsilon^2 - \epsilon^3}),
\end{align}
where $\ln(.)$ is the natural logarithm and $\Omega(.)$ is the lower bound complexity [n.b. some works state Eq. (\ref{equation_JL_lemma_p_lower_bound}) as:
\begin{align}\label{equation_JL_lemma_p_lower_bound_relaxed}
p \geq \Omega(\epsilon^{-2} \ln(n)).
\end{align}
by ignoring $\epsilon^3$ against $\epsilon^2$ in the denominator because $\epsilon \in (0,1)$]. 
There exists a linear map $f\!: \mathbb{R}^d \rightarrow \mathbb{R}^p$, $f\!: \b{x} \mapsto \b{U}^\top \b{x}$, with projection matrix $\b{U} = [\b{u}_1, \dots, \b{u}_p] \in \mathbb{R}^{d \times p}$, such that we have:
\begin{equation}\label{equation_JL_lemma_correct_projection}
\begin{aligned}
(1 - \epsilon) \|\b{x}_i - \b{x}_j\|_2^2 \leq \|&f(\b{x}_i) - f(\b{x}_j)\|_2^2 \\
&\leq (1 + \epsilon) \|\b{x}_i - \b{x}_j\|_2^2,
\end{aligned}
\end{equation}
for all $\b{x}_i, \b{x}_j \in \mathcal{X}$, with probability of success as:
\begin{align}\label{equation_random_projection_prob_success}
\mathbb{P}\Big(
(1 - \epsilon) \|\b{x}_i - &\b{x}_j\|_2^2 \leq \|f(\b{x}_i) - f(\b{x}_j)\|_2^2 \nonumber \\
&\leq (1 + \epsilon) \|\b{x}_i - \b{x}_j\|_2^2 
\Big) \geq 1 - \delta,
\end{align}
where $\delta := 2 e^{-(\epsilon^2 - \epsilon^3) (p/4)}$ and the elements of the projection matrix are i.i.d. random variables with mean zero and (scaled) unit variance. An example is $u_{ij} \sim \mathcal{N}(0, 1/p) = (1/\sqrt{p}) \mathcal{N}(0, 1)$ where $u_{ij}$ denotes the $(i,j)$-th element of $\b{U}$. 
\end{theorem}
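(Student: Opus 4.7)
My plan is to use the probabilistic method: sample the projection matrix $\b{U}$ with i.i.d.\ entries $u_{ij} \sim \mathcal{N}(0, 1/p)$ as suggested in the statement, show that for any single fixed vector the squared norm is preserved within a $(1\pm\epsilon)$ factor with probability at least $1-\delta$, and then take a union bound over all pairs in $\mathcal{X}$. The first reduction I would exploit is linearity of $f$: since $f(\b{x}_i) - f(\b{x}_j) = \b{U}^\top(\b{x}_i - \b{x}_j)$, the two-sided inequality in Eq.~(\ref{equation_JL_lemma_correct_projection}) depends only on the action of $\b{U}$ on the difference vector $\b{v} := \b{x}_i - \b{x}_j$. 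By the obvious homogeneity in $\b{v}$, it suffices to establish two-sided concentration of $\|\b{U}^\top\b{v}\|_2^2$ around $\|\b{v}\|_2^2$ for an arbitrary fixed $\b{v}$.

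Next I would identify the exact distribution at play. Each coordinate $(\b{U}^\top\b{v})_t = \sum_{j=1}^d u_{jt} v_j$ is, by Gaussianity and independence of the $u_{jt}$, a zero-mean Gaussian with variance $\|\b{v}\|_2^2/p$, and the $p$ coordinates are mutually independent. Consequently the rescaled variable $Y := (p/\|\b{v}\|_2^2)\,\|\b{U}^\top\b{v}\|_2^2$ has a $\chi^2_p$ distribution, with $\mathbb{E}[Y]=p$. The pairwise statement thus reduces to showing $\mathbb{P}(Y \geq (1+\epsilon)p) + \mathbb{P}(Y \leq (1-\epsilon)p) \leq \delta$ with $\delta$ as in the statement.

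The core analytic step is a Chernoff bound on this chi-squared variable. Using the standard MGF $\mathbb{E}[e^{tY}] = (1-2t)^{-p/2}$ for $t<1/2$, Markov's inequality yields $\mathbb{P}(Y \geq (1+\epsilon)p) \leq e^{-t(1+\epsilon)p}(1-2t)^{-p/2}$; optimizing in $t$ gives the exponent $-(p/2)\bigl(\epsilon - \ln(1+\epsilon)\bigr)$. A Taylor expansion of $\ln(1+\epsilon)$, combined with the fact that the resulting alternating series has decreasing terms for $\epsilon \in (0,1)$, gives $\epsilon - \ln(1+\epsilon) \geq \epsilon^2/2 - \epsilon^3/3 \geq (\epsilon^2-\epsilon^3)/2$, so the upper tail is at most $e^{-(\epsilon^2-\epsilon^3)p/4}$. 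An entirely analogous Chernoff argument applied to $-Y$ controls the lower tail by the same quantity, and summing the two tails produces the advertised $\delta = 2 e^{-(\epsilon^2-\epsilon^3)(p/4)}$.

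Finally, I would union-bound over the $\binom{n}{2} < n^2$ pairs of points, so the overall failure probability is at most $n^2 \delta$. Requiring this to be strictly less than $1$ translates, after taking logarithms, into $p \geq c\,\ln(n)/(\epsilon^2-\epsilon^3)$ for a universal constant $c$, which is exactly the lower bound in Eq.~(\ref{equation_JL_lemma_p_lower_bound}); the existence of a good $\b{U}$ then follows since a random draw succeeds with strictly positive probability. The step I expect to be the main obstacle is the Chernoff/Taylor-expansion bookkeeping in the third paragraph, in particular the lower-tail computation and pinning down the precise constants $1/4$ and the $\epsilon^3$ correction so that they match the stated $\delta$; the reduction to a chi-squared and the final union bound are routine by comparison.
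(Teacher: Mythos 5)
Your proposal is correct and follows essentially the same route as the paper's proof: reduction to the difference vector, identification of the rescaled squared norm as a $\chi^2_p$ variable, a Chernoff bound via the moment generating function $(1-2t)^{-p/2}$ optimized at $t=\epsilon/(2(1+\epsilon))$, the Taylor bound $\epsilon-\ln(1+\epsilon)\geq(\epsilon^2-\epsilon^3)/2$ yielding the exponent $-(p/4)(\epsilon^2-\epsilon^3)$, and a union bound over the $\binom{n}{2}$ pairs. The only cosmetic differences are that you omit the paper's preliminary expectation calculation (which is subsumed by the chi-squared identification) and phrase the conclusion via the probabilistic method rather than an explicit failure tolerance $\delta'$.
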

\begin{proof}
The proof is based on \cite{karimi2018exploring,indyk1998approximate,dasgupta1999elementary,dasgupta2003elementary,achlioptas2003database,achlioptas2001database,matouvsek2008variants}.
Also some other proofs exists for JL lemma, such as \cite{frankl1988johnson,arriaga1999algorithmic}. 

\hfill \break
The proof can be divided into three steps:

\textbf{Step 1 of proof:} 
In this step, we show that the random projection preserves local distances of points in expectation.
Let $\b{x}_i = [x_{i1}, \dots, x_{id}]^\top \in \mathbb{R}^d$. We have:
\begin{align}
&\mathbb{E}\big[\|f(\b{x}_i) - f(\b{x}_j)\|_2^2\big] = \mathbb{E}\big[\|\b{U}^\top \b{x}_i - \b{U}^\top \b{x}_j\|_2^2\big] \nonumber\\
&\overset{(a)}{=} \mathbb{E}\Bigg[\sum_{t=1}^p \big(\b{u}_t^\top \b{x}_i - \b{u}_t^\top \b{x}_j\big)^2\Bigg] \nonumber\\
&\overset{(b)}{=} \sum_{t=1}^p \mathbb{E}\big[\big(\b{u}_t^\top \b{x}_i - \b{u}_t^\top \b{x}_j\big)^2\big] \nonumber\\
&= \sum_{t=1}^p \Big( \mathbb{E}\big[\big(\b{u}_t^\top \b{x}_i)^2\big] + \mathbb{E}\big[(\b{u}_t^\top \b{x}_j\big)^2\big] \nonumber\\
&~~~~~~~~~~~ - 2\,\mathbb{E}\big[(\b{u}_t^\top \b{x}_i \b{u}_t^\top \b{x}_j\big)^2\big] \Big), \label{equation_expectation_difference_of_mapping}
\end{align}
where $(a)$ is because of definition of $\ell_2$ norm and $(b)$ is because expectation is a linear operator. 
We know $u_{ij}$ has zero mean and unit variance. Let its unit variance be normalized as $1/p$. Then, the covariance of elements of projection matrix is:
\begin{align}\label{equation_covariance_of_projection_matrix}
\mathbb{E}[u_{kt} u_{lt}] = \delta_{kl} := 
\left\{
    \begin{array}{ll}
        0 & \mbox{if } i \neq j \\
        1/p & \mbox{if } i = j,
    \end{array}
\right.
\end{align}
which is a normalized Kronekcer delta.
The first term in this expression is:
\begin{align*}
&\mathbb{E}\big[(\b{u}_t^\top \b{x}_i)^2\big] = \mathbb{E}\big[\big(\sum_{k=1}^d u_{kt} x_{ik}\big)^2\big] \\
&= \mathbb{E}\Big[\sum_{k=1}^d \sum_{l=1}^d u_{kt} u_{lt} x_{ik} x_{il}\Big] \overset{(a)}{=} \sum_{k=1}^d \sum_{l=1}^d \mathbb{E}[u_{kt} u_{lt} x_{ik} x_{il}] \\
&\overset{(b)}{=} \sum_{k=1}^d \sum_{l=1}^d x_{ik} x_{il} \mathbb{E}[u_{kt} u_{lt}] \overset{(\ref{equation_covariance_of_projection_matrix})}{=} \sum_{k=1}^d \sum_{l=1}^d x_{ik} x_{il} \delta_{kl} \\
&= \frac{1}{p} \sum_{k=1}^d x_{ik}^2 = \frac{\|\b{x}_i\|_2^2}{p},
\end{align*}
where $(a)$ is because of linearity of expectation and $(b)$ is because data are deterministic.
Likewise, the second term of Eq. (\ref{equation_expectation_difference_of_mapping}) is $\mathbb{E}\big[(\b{u}_t^\top \b{x}_j)^2\big] = \|\b{x}_j\|_2^2 / p$. 
The third term is:
\begin{align*}
&\mathbb{E}\big[(\b{u}_t^\top \b{x}_i \b{u}_t^\top \b{x}_j\big)^2\big] = \mathbb{E}\big[\big(\sum_{k=1}^d u_{kt} x_{ik}\big) \big(\sum_{l=1}^d u_{lt} x_{jl}\big)\big] \\
&= \mathbb{E}\Big[\sum_{k=1}^d \sum_{l=1}^d u_{kt} u_{lt} x_{ik} x_{jl}\Big] = \sum_{k=1}^d \sum_{l=1}^d \mathbb{E}[u_{kt} u_{lt} x_{ik} x_{jl}] \\
&= \sum_{k=1}^d \sum_{l=1}^d x_{ik} x_{jl} \mathbb{E}[u_{kt} u_{lt}] \overset{(\ref{equation_covariance_of_projection_matrix})}{=} \sum_{k=1}^d \sum_{l=1}^d x_{ik} x_{jl} \delta_{kl} \\
&= \frac{1}{p} \sum_{k=1}^d x_{ik} x_{jk} = \frac{\b{x}_i^\top \b{x}_j}{p}.
\end{align*}
Therefore, Eq. (\ref{equation_expectation_difference_of_mapping}) is simplified to:
\begin{align}
\mathbb{E}\big[\|f(\b{x}_i) &- f(\b{x}_j)\|_2^2\big] \nonumber\\
&= \sum_{t=1}^p \Big( \frac{\|\b{x}_i\|_2^2}{p} + \frac{\|\b{x}_j\|_2^2}{p} - 2\frac{\b{x}_i^\top \b{x}_j}{p} \Big) \nonumber\\
&= \sum_{t=1}^p \Big( \frac{\|\b{x}_i - \b{x}_j\|_2^2}{p} \Big) = \|\b{x}_i - \b{x}_j\|_2^2 \sum_{t=1}^p \frac{1}{p} \nonumber\\
&= \|\b{x}_i - \b{x}_j\|_2^2, \quad \forall i, j \in \{1, \dots, n\}.
\end{align}
This shows that this projection by random projection matrix preserves the local distances of data points in expectation. 

\hfill \break
\textbf{Step 2 of proof:}
In this step, we show that the variance of local distance preservation is bounded and small for one pair of points. The probability of error in local distance preservation is:
\begin{align*}
&\mathbb{P}\Big(\|f(\b{x}_i) - f(\b{x}_j)\|_2^2 > (1+\epsilon) \|\b{x}_i - \b{x}_j\|_2^2\Big) \\
&= \mathbb{P}\Big(\|\b{U}^\top \b{x}_i - \b{U}^\top \b{x}_j\|_2^2 > (1+\epsilon) \|\b{x}_i - \b{x}_j\|_2^2\Big) \\
&= \mathbb{P}\Big(\|\b{U}^\top (\b{x}_i - \b{x}_j)\|_2^2 > (1+\epsilon) \|\b{x}_i - \b{x}_j\|_2^2\Big).
\end{align*}
Let $\b{x}_D := \b{x}_i - \b{x}_j$ so:
\begin{align*}
\mathbb{P}\Big(\|\b{U}^\top \b{x}_D\|_2^2 > (1+\epsilon) \|\b{x}_d\|_2^2\Big).
\end{align*}
We know that $u_{ij} \sim (1/\sqrt{p}) \mathcal{N}(0, 1)$. 
So, the projected data are $\mathbb{R}^p \ni \b{y}_D = \b{U}^\top \b{x}_D$ and we have $\b{y}_D \sim (1/\sqrt{p}) \mathcal{N}(0, \|\b{x}_D\|_2^2) = \mathcal{N}(0, (1/p)\|\b{x}_D\|_2^2)$ because of the quadratic characteristic of variance. 

We define a matrix $\b{A} = [a_{ij}] \in \mathbb{R}^{d \times p}$ where $a_{ij} \sim \mathcal{N}(0, 1)$. Assume $\b{z}_D = [z_1, \dots, z_p]^\top := (\b{A}^\top \b{x}_D) / \|\b{x}_D\|_2$. Hence:
\begin{align*}
&\mathbb{P}\Big(\|\frac{1}{\sqrt{p}}\b{A}^\top \b{x}_D\|_2^2 > (1+\epsilon) \|\b{x}_D\|_2^2\Big) \\
&= \mathbb{P}\Big(\frac{1}{p}\|\b{A}^\top \b{x}_D\|_2^2 > (1+\epsilon) \|\b{x}_D\|_2^2\Big) \\
&\overset{(a)}{=} \mathbb{P}\Big(\frac{1}{\|\b{x}_D\|_2^2} \|\b{A}^\top \b{x}_D\|_2^2 > (1+\epsilon) p\Big) \\
&= \mathbb{P}\Big(\big\|\frac{\b{A}^\top \b{x}_D}{\|\b{x}_D\|_2}\big\|_2^2 > (1+\epsilon) p\Big) \\
&\overset{(b)}{=} \mathbb{P}\Big(\|\b{z}_D\|_2^2 > (1+\epsilon) p\Big) \\
&= \mathbb{P}\Big(\sum_{k=1}^p z_k^2 > (1+\epsilon)\, p\Big) \overset{(c)}{=} \mathbb{P}\Big(\chi_p^2 > (1+\epsilon)\, p\Big),
\end{align*}
where $(a)$ is because the sides are multiplied by $p / \|\b{x}_D\|_2^2$, $(b)$ is because of the definition of $\b{z}_D$, and $(c)$ is because the summation of $p$ squared standard normal distributions is the chi-squared distribution with $p$ degrees of freedom, denoted by $\chi_p^2$. 
We have:
\begin{align}
\mathbb{P}\Big(\chi_p^2 > &(1+\epsilon)\, p\Big) \overset{(a)}{=} \mathbb{P}\Big(e^{\lambda \chi_p^2} > e^{\lambda (1+\epsilon)\, p}\Big) \nonumber\\
&\overset{(b)}{\leq} \frac{\mathbb{E}[e^{\lambda \chi_p^2}]}{e^{\lambda (1+\epsilon)\, p}} \overset{(c)}{=} \frac{(1-2\lambda)^{-p/2}}{e^{\lambda (1+\epsilon)\, p}} \label{equation_prob_error_1}
\end{align}
where $(a)$ is because of taking power of $e$ with parameter $\lambda \geq 0$, $(b)$ is because of the Markov inequality (i.e., $\mathbb{P}(x \geq a) \leq \mathbb{E}[x]/a$), and $(c)$ is because of the moment generating function of the chi-squared distribution $x \sim \chi_p^2$ which is $M_x(t) = \mathbb{E}[e^{\lambda x}] = (1-2\lambda)^{-p/2}$. 
We want to minimize the probability of error. So, we minimize this error in Eq. (\ref{equation_prob_error_1}) with respect to the parameter $\lambda$:
\begin{align*}
\frac{\partial}{\partial \lambda} (\frac{(1-2\lambda)^{-p/2}}{e^{\lambda (1+\epsilon)\, p}}) \overset{\text{set}}{=} 0 \implies \lambda = \frac{\epsilon}{2(1+\epsilon)}.
\end{align*}
Substituting this equation in Eq. (\ref{equation_prob_error_1}) gives:
\begin{align*}
\mathbb{P}&\Big(\chi_p^2 > (1+\epsilon)\, p\Big) = \frac{(1+\epsilon)^{p/2}}{e^{(p \epsilon) / 2}} \\
&= ((1+\epsilon) e^{-\epsilon})^{p/2} \overset{(a)}{\leq} e^{-(p/4) (\epsilon^2 - \epsilon^3)},
\end{align*}
where $(a)$ is because $1+\epsilon \leq e^{(\epsilon - (\epsilon^2 - \epsilon^3)/2)}$. 
Above, we derived the probability of $\mathbb{P}(\|f(\b{x}_i) - f(\b{x}_j)\|_2^2 > (1+\epsilon) \|\b{x}_i - \b{x}_j\|_2^2)$.
Likewise, we can derive the same amount of probability for $\mathbb{P}(\|f(\b{x}_i) - f(\b{x}_j)\|_2^2 < (1-\epsilon) \|\b{x}_i - \b{x}_j\|_2^2)$. 
For the projection of pair $\b{x}_i$ and $\b{x}_j$, we want to have:
\begin{align}
\|f(\b{x}_i) - f(\b{x}_j)\|_2^2 \approx \|\b{x}_i - \b{x}_j\|_2^2.
\end{align}
Hence, the total probability of error for the projection of pair $\b{x}_i$ and $\b{x}_j$ is:
\begin{align}
\mathbb{P}\Big(&\Big[\|f(\b{x}_i) - f(\b{x}_j)\|_2^2 > (1+\epsilon) \|\b{x}_i - \b{x}_j\|_2^2\Big] \nonumber\\
&\cup
\Big[\|f(\b{x}_i) - f(\b{x}_j)\|_2^2 < (1-\epsilon) \|\b{x}_i - \b{x}_j\|_2^2\Big] \Big) \nonumber\\
&\leq 2\, e^{-(p/4) (\epsilon^2 - \epsilon^3)}. \label{equation_JL_prob_error_twoPoints}
\end{align}
This shows that the probability of error for every pair $\b{x}_i$ and $\b{x}_j$ is bounded. This also proves the Eq. (\ref{equation_random_projection_prob_success}).

\hfill \break
\textbf{Step 3 of proof:}
In this step, we bound the probability of error for all pairs of data points:
We use the Bonferroni's inequality or the so-called union bound \cite{bonferroni1936teoria}:
\begin{align}\label{equation_Bonferroni_inequality}
\mathbb{P}\Big(\bigcup_i E_i\Big) \leq \sum_{i} \mathbb{P}(E_i),
\end{align}
where $E_i$ is the $i$-th probability event. We have $\binom{n}{2} = n(n-1)/2$ pairs of points. Hence, the total probability of error for all pairs of points is:
\begin{align}
\mathbb{P}(\text{error}) &\leq \frac{n(n-1)}{2} \times 2\, e^{-(p/4) (\epsilon^2 - \epsilon^3)} \nonumber\\
&= n(n-1) e^{-(p/4) (\epsilon^2 - \epsilon^3)} \leq \delta', \label{equation_prob_error_JL_lemma}
\end{align}
where the upper-bound $0 \leq \delta' \ll 1$ can be selected according to $n$, $p$, and $\epsilon$. Therefore, the probability of error is bounded. This shows that Eq. (\ref{equation_JL_lemma_correct_projection}) holds with probability $1-\delta'$. Q.E.D.
\end{proof}

It is noteworthy that Eq. (\ref{equation_random_projection_prob_success}) can be restated as:
\begin{align}
\mathbb{P}\Big(
\Big| \|f(\b{x}_i) - f(\b{x}_j)\|_2^2 - \|\b{x}_i - &\b{x}_j\|_2^2 \Big| \geq \epsilon \Big) \leq \delta.
\end{align}
This shows that there is an upper-bound on the probability of error in random projection. Therefore, after linear random projection, the distance is preserved after projection for every pair of points. 

The following lemma proves the required lower-bound on the dimensionality of subspace for random projection to work well. 
\begin{lemma}[Lower bound on dimensionality of subspace \cite{indyk1998approximate}]
The dimensionality of subspace $p$ should satisfy Eq. (\ref{equation_JL_lemma_p_lower_bound}).
\end{lemma}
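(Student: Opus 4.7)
The plan is to derive the lower bound on $p$ directly from the union-bound inequality already established in Step 3 of the proof of Theorem \ref{theorem_JL_lemma}. Specifically, the probability that random projection fails to approximately preserve the pairwise distances for \emph{some} pair of points is bounded above by
\begin{align*}
n(n-1)\, e^{-(p/4)(\epsilon^2 - \epsilon^3)}.
\end{align*}
For the JL guarantee to hold with success probability at least $1 - \delta'$ (for some small $\delta' \in (0,1)$), it is sufficient to require that this upper bound does not exceed $\delta'$.

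First I would impose the requirement
\begin{align*}
n(n-1)\, e^{-(p/4)(\epsilon^2 - \epsilon^3)} \leq \delta'
\end{align*}
and then take natural logarithms on both sides, which yields
\begin{align*}
\ln\!\big(n(n-1)\big) - \frac{p}{4}(\epsilon^2 - \epsilon^3) \leq \ln(\delta').
\end{align*}
Rearranging for $p$ gives the explicit lower bound
\begin{align*}
p \geq \frac{4\,\ln\!\big(n(n-1)/\delta'\big)}{\epsilon^2 - \epsilon^3}.
\end{align*}

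To finish, I would use the elementary estimate $n(n-1) \leq n^2$, so $\ln(n(n-1)) \leq 2\ln(n)$, and observe that $\ln(n(n-1)/\delta') = \ln(n(n-1)) + \ln(1/\delta')$. Treating the failure probability $\delta'$ as a fixed constant, the term $\ln(1/\delta')$ is absorbed into the big-$\Omega$ notation, yielding
\begin{align*}
p \geq \Omega\!\left(\frac{\ln(n)}{\epsilon^2 - \epsilon^3}\right),
\end{align*}
which is precisely Eq. (\ref{equation_JL_lemma_p_lower_bound}). No significant obstacle is expected in this argument; the proof is essentially an algebraic inversion of the tail bound derived in the JL lemma. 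The only subtlety is conceptual, namely noting that the dependence on $\delta'$ is logarithmic and therefore mild, which justifies hiding it inside the asymptotic notation and so leaving only the dependence on $n$ and $\epsilon$ explicit.
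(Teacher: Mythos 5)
Your proposal is correct and follows essentially the same route as the paper: both invert the union-bound tail estimate $n(n-1)e^{-(p/4)(\epsilon^2-\epsilon^3)} \leq \delta'$ by taking logarithms, approximate $n(n-1)$ by $n^2$, and absorb the constant $\ln(1/\delta')$ into the $\Omega(\cdot)$ notation. No gap to report.
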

\begin{proof}
The probability of error, which is Eq. (\ref{equation_prob_error_JL_lemma}), should be smaller than a constant $\delta$:
\begin{align*}
&\mathbb{P}(\text{error}) = n(n-1) e^{-(p/4) (\epsilon^2 - \epsilon^3)} \leq \delta' \\
&\implies -(p/4) (\epsilon^2 - \epsilon^3) \leq \ln(\frac{\delta'}{n(n-1)}) \\
&\implies (p/4) (\epsilon^2 - \epsilon^3) \geq \ln(\frac{n(n-1)}{\delta'}) \\
&\implies (p/4) \geq \frac{1}{\epsilon^2 - \epsilon^3} \ln(\frac{n(n-1)}{\delta'}) \approx \frac{\ln(\frac{n^2}{\delta'})}{\epsilon^2 - \epsilon^3} \\
&\implies p \geq \frac{8\ln(n) - 4 \ln(\delta')}{\epsilon^2 - \epsilon^3} = \Omega(\frac{\ln(n)}{\epsilon^2 - \epsilon^3}).
\end{align*}
Q.E.D.
\end{proof}

\begin{definition}[Concentration of Measure \cite{talagrand1996new}]\label{definition_concentration_of_measure}
Inspired by Eq. (\ref{equation_random_projection_prob_success}), we have the inequality {\citep[Lemma A.1]{plan2013one}}:
\begin{align}
\mathbb{P}\Big((1-\epsilon) \|\b{x}\|_2^2 \leq \|\b{U}^\top \b{x}\|_2^2 \leq& (1+\epsilon) \|\b{x}\|_2^2\Big) \nonumber\\
&\geq 1 - 2e^{-c \epsilon^2 p}, \label{equation_concentration_of_measure}
\end{align}
where $c>0$ is a constant, $\epsilon$ is the error tolerance, and $p$ is the dimensionality of projected data by the projection matrix $\b{U} \in \mathbb{R}^{d \times p}$.
This is the concentration of measure for a Gaussian random matrix. This inequality holds for random projection.
\end{definition}
Comparing Eqs. (\ref{equation_random_projection_prob_success}) and (\ref{equation_concentration_of_measure}) shows that random projection is equivalent to concentration of measure.

In the following, we provide a lemma which is going to be used in the proof of Lemma \ref{lemma_prob_success_whp}. Lemma \ref{lemma_prob_success_whp} provides the order of probability of success for random projection and we will use it to explain why random projection is correct (i.e., works well) with high probability. 
\begin{lemma}[\cite{dasgupta1999elementary,dasgupta2003elementary}]\label{lemma_P_beta_and_exp}
Let $p<d$ and $L := \|\b{z}_D\|_2^2$ where $\b{z}_D$ is the random projection of $\b{x}_D := \b{x}_i - \b{x}_j$ onto a $p$-dimensional subspace. If $\beta < 1$, then:
\begin{align}\label{equation_P_beta_and_exp_1}
\mathbb{P}(L \leq \beta p/d) \leq \exp\big(\frac{p}{2} (1 - \beta + \ln \beta)\big).
\end{align}
If $\beta > 1$, then:
\begin{align}\label{equation_P_beta_and_exp_2}
\mathbb{P}(L \geq \beta p/d) \leq \exp\big(\frac{p}{2} (1 - \beta + \ln \beta)\big).
\end{align}
\end{lemma}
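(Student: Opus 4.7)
The plan is to follow the standard Chernoff-bound argument of Dasgupta and Gupta, in four stages. First I would reduce to a problem about independent Gaussians using rotational invariance: the distribution of a uniformly random $p$-dimensional subspace of $\mathbb{R}^d$ is invariant under rotations, so the projection of the fixed vector $\b{x}_D$ onto such a subspace has the same law (after scaling) as the projection of a uniformly random unit vector in $\mathbb{R}^d$ onto the fixed subspace $\textbf{span}\{\b{e}_1,\ldots,\b{e}_p\}$. Realizing the random unit vector as $\b{Y}/\|\b{Y}\|_2$ for $\b{Y}$ with i.i.d.\ $\mathcal{N}(0,1)$ entries, $L$ (after normalizing by $\|\b{x}_D\|_2^2$, which is WLOG by homogeneity) has the same distribution as $S := \sum_{i=1}^{p} Y_i^2 / \sum_{i=1}^{d} Y_i^2$, with mean $p/d$.

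Second, for the case $\beta<1$, I would rewrite $\{S \le \beta p/d\}$ as $\{\beta p \sum_{i=1}^d Y_i^2 - d \sum_{i=1}^p Y_i^2 \ge 0\}$ and apply the Chernoff trick: multiply by $t>0$, exponentiate, and invoke Markov. Because the exponent is a linear combination of independent $\chi_1^2$ variables and the MGF of $\chi_1^2$ is $(1-2s)^{-1/2}$, the expectation factorizes into a product and I obtain a bound of the form $(1-2t\beta p)^{-(d-p)/2}\bigl(1+2t(d-\beta p)\bigr)^{-p/2}$, valid on an appropriate range of $t$.

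Third, I would optimize this bound over $t$. A short derivative calculation yields a minimizer at which the two factors simplify dramatically, to $(d-p)/(d-\beta p)$ and $1/\beta$ respectively; this is the moment when the ungainly product collapses into something interpretable. A single use of $\ln(1+x) \le x$ on $\ln\bigl(1 + p(1-\beta)/(d-p)\bigr)$ then converts the residual $(d-p)/2$-power factor into $e^{\,p(1-\beta)/2}$, and multiplying by $\beta^{p/2}$ produces precisely $\exp\bigl(\tfrac{p}{2}(1-\beta+\ln\beta)\bigr)$, establishing (\ref{equation_P_beta_and_exp_1}). The $\beta>1$ case is entirely symmetric: the event $\{S \ge \beta p/d\}$ is treated by the same Chernoff-MGF factorization with the roles of the two groups of coordinates exchanged; the algebra runs in parallel and lands on the same functional form.

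I expect the main obstacle to be purely computational rather than conceptual: performing the optimization over $t$ cleanly enough that the two ungainly factors simplify to $(d-p)/(d-\beta p)$ and $1/\beta$, and then choosing the right application of $\ln(1+x)\le x$ so that the final bound is exactly $\exp\bigl(\tfrac{p}{2}(1-\beta+\ln\beta)\bigr)$ rather than something weaker. Everything else---rotational reduction, Chernoff, MGF of $\chi_1^2$, and the symmetry between the two tails---is standard, and nothing in the argument depends on the specific JL setup beyond the identification of $L$ with the Gaussian ratio $S$.
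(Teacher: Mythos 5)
Your proposal is correct and is precisely the Dasgupta--Gupta argument that the paper's proof defers to by citation: the rotational reduction to the chi-squared ratio $S=\sum_{i=1}^{p}Y_i^2/\sum_{i=1}^{d}Y_i^2$, the Chernoff/MGF factorization, the optimization over $t$ (which indeed yields the factors $(d-p)/(d-\beta p)$ and $1/\beta$), and the final $\ln(1+x)\le x$ step all check out for both tails. The only caveat is notational: the paper's earlier definition of $\b{z}_D$ via an unnormalized Gaussian matrix would give $\mathbb{E}[L]=p$ rather than $p/d$, so your reading of $L$ as the squared norm of the projection onto a uniformly random $p$-dimensional subspace of a unit vector is the correct interpretation under which the stated bound holds.
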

\begin{proof}
Refer to \cite{dasgupta1999elementary,dasgupta2003elementary}.
\end{proof}

\begin{lemma}[\cite{dasgupta1999elementary,dasgupta2003elementary}]\label{lemma_prob_success_whp}
If $p$ satisfies Eq. (\ref{equation_JL_lemma_p_lower_bound}) as:
\begin{align}\label{equation_JL_lemma_p_lower_bound_2}
p \geq 4(\frac{\epsilon^2}{2} - \frac{\epsilon^3}{3})^{-1} \ln(n) = \Omega(\frac{\ln(n)}{\epsilon^2 - \epsilon^3}),
\end{align}
the local distances of points are not distorted by random projection no more than $(1 \pm \epsilon)$ with probability $\mathcal{O}(1/n^2)$. In other words, the probability of error in random projection is of the order $\mathcal{O}(1/n^2)$.
\end{lemma}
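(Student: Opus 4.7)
The plan is to instantiate Lemma \ref{lemma_P_beta_and_exp} at the two endpoints $\beta = 1-\epsilon$ and $\beta = 1+\epsilon$, control each exponential bound via a Taylor expansion of $\ln(1 \pm \epsilon)$, and then plug in the hypothesized lower bound on $p$.

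First I would fix an arbitrary pair $(\b{x}_i, \b{x}_j)$ and set $L := \|\b{z}_D\|_2^2$ with $\b{x}_D := \b{x}_i - \b{x}_j$ as in Lemma \ref{lemma_P_beta_and_exp}. Since Step 1 of the JL proof shows that $\mathbb{E}\|\b{U}^\top \b{x}_D\|_2^2 = \|\b{x}_D\|_2^2$ and $L$ measures the corresponding normalized length, the event that the squared distance is distorted by more than a $(1 \pm \epsilon)$ factor decomposes into $\{L \leq (1-\epsilon)p/d\}$ and $\{L \geq (1+\epsilon)p/d\}$. Applying Eq. (\ref{equation_P_beta_and_exp_1}) with $\beta = 1-\epsilon < 1$ and Eq. (\ref{equation_P_beta_and_exp_2}) with $\beta = 1+\epsilon > 1$, each event has probability at most $\exp\bigl(\tfrac{p}{2}(1 - \beta + \ln \beta)\bigr)$.

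Next I would bound the two exponents uniformly using Taylor series. For the lower tail, $\epsilon + \ln(1-\epsilon) = -\sum_{k \geq 2} \epsilon^k/k$ is termwise negative, so it is at most $-(\epsilon^2/2 - \epsilon^3/3)$. For the upper tail, $-\epsilon + \ln(1+\epsilon) = \sum_{k \geq 2}(-1)^k \epsilon^k/k$ is an alternating series with monotonically decreasing magnitudes for $\epsilon \in (0,1)$; truncating after the $+\epsilon^3/3$ term yields an over-estimate, so $-\epsilon + \ln(1+\epsilon) \leq -(\epsilon^2/2 - \epsilon^3/3)$ as well. A union bound over the two failure modes therefore yields a per-pair distortion probability of at most $2 \exp\bigl(-\tfrac{p}{2}(\epsilon^2/2 - \epsilon^3/3)\bigr)$.

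Finally, substituting the hypothesis $p \geq 4(\epsilon^2/2 - \epsilon^3/3)^{-1}\ln(n)$ forces the exponent to be at most $-2\ln(n)$, so the per-pair failure probability collapses to $\leq 2/n^2 = \mathcal{O}(1/n^2)$, which is the claim. The main technical delicacy is the upper-tail bound: one must justify that truncating the alternating expansion of $\ln(1+\epsilon)$ at the third-order term produces an upper rather than a lower bound. Everything else is a routine substitution into the inequalities supplied by Lemma \ref{lemma_P_beta_and_exp}.
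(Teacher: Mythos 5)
Your proposal is correct and follows essentially the same route as the paper: instantiate Lemma \ref{lemma_P_beta_and_exp} at $\beta = 1-\epsilon$ and $\beta = 1+\epsilon$, bound the exponents $\pm\epsilon + \ln(1\mp\epsilon)$ by $-(\epsilon^2/2 - \epsilon^3/3)$ via truncated Taylor expansions (your justification of the alternating-series truncation for the upper tail is the same inequality $\ln(1+x) \leq x - x^2/2 + x^3/3$ the paper invokes), and substitute the hypothesis on $p$ to get a per-tail bound of $1/n^2$. The only cosmetic differences are that you merge the two tails with a union bound into a single $2/n^2$ statement, and you bound the lower tail with the full series rather than the paper's $\ln(1-x) \leq -(x + x^2/2)$, which changes nothing substantive.
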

\begin{proof}
The proof is based on \cite{dasgupta2003elementary}.
\hfill\break
\textbf{Step 1 of proof:}
Consider one side of error which is $\mathbb{P}(\|f(\b{x}_i) - f(\b{x}_j)\|_2^2 \leq (1-\epsilon) (p/d) \|\b{x}_i - \b{x}_j\|_2^2)$. Take $L = \|f(\b{x}_i) - f(\b{x}_j)\|_2^2$, $\beta = (1-\epsilon)$, and $\mu = (p/d) \|\b{x}_i - \b{x}_j\|_2^2$. As $0 < \epsilon < 1$, we have $\beta<1$. By Lemma \ref{lemma_P_beta_and_exp}, we have:
\begin{align}
&\mathbb{P}(\|f(\b{x}_i) - f(\b{x}_j)\|_2^2 \leq (1-\epsilon) (p/d) \|\b{x}_i - \b{x}_j\|_2^2) \nonumber\\
&\overset{(\ref{equation_P_beta_and_exp_1})}{\leq} \exp\big(\frac{p}{2} (1 - (1-\epsilon) + \ln (1-\epsilon))\big) \nonumber\\
&= \exp\big(\frac{p}{2} (\epsilon + \ln (1-\epsilon))\big) \overset{(a)}{\leq} \exp\Big(\frac{p}{2} \big(\epsilon - (\epsilon + \frac{\epsilon^2}{2})\big)\Big) \nonumber\\
&= \exp(-\frac{p\, \epsilon^2}{4}) \overset{(b)}{\leq} \exp(-2 \ln(n)) = \frac{1}{n^2},
\end{align}
where $(a)$ is because:
\begin{align*}
\ln(1-x) \leq -(x + \frac{x^2}{2}), \quad \forall x \geq 0,
\end{align*}
and $(b)$ is because:
\begin{align*}
&p \overset{(\ref{equation_JL_lemma_p_lower_bound_2})}{\geq} 4(\frac{\epsilon^2}{2} - \frac{\epsilon^3}{3})^{-1} \ln(n) \\
&~~~~~~~~~~ \implies \exp(-\frac{p\, \epsilon^2}{4}) \leq \exp(\frac{-6 \epsilon^2}{3\epsilon^2 - 2\epsilon^3} \ln(n)) \\
&~~~~~~~~~~~~~~~~~~~\overset{(c)}{\leq} \exp(-2 \ln(n)),
\end{align*}
where $(c)$ is because:
\begin{align*}
3\epsilon^2 \geq 3\epsilon^2 - 2\epsilon^3 \implies \frac{-6 \epsilon^2}{3\epsilon^2 - 2\epsilon^3} \leq -2.
\end{align*}
\textbf{Step 2 of proof:}
Consider one side of error which is $\mathbb{P}(\|f(\b{x}_i) - f(\b{x}_j)\|_2^2 \geq (1+\epsilon) (p/d) \|\b{x}_i - \b{x}_j\|_2^2)$. Take $L = \|f(\b{x}_i) - f(\b{x}_j)\|_2^2$, $\beta = (1+\epsilon)$, and $\mu = (p/d) \|\b{x}_i - \b{x}_j\|_2^2$. As $0 < \epsilon < 1$, we have $\beta>1$. By Lemma \ref{lemma_P_beta_and_exp}, we have:
\begin{align}
&\mathbb{P}(\|f(\b{x}_i) - f(\b{x}_j)\|_2^2 \geq (1+\epsilon) (p/d) \|\b{x}_i - \b{x}_j\|_2^2) \nonumber\\
&\overset{(\ref{equation_P_beta_and_exp_2})}{\leq} \exp\big(\frac{p}{2} (1 - (1+\epsilon) + \ln (1+\epsilon))\big) \nonumber\\
&= \exp\big(\frac{p}{2} (-\epsilon + \ln (1+\epsilon))\big) \nonumber\\
&\overset{(a)}{\leq} \exp\Big(\frac{p}{2} \big(-\epsilon+\epsilon-\frac{\epsilon^2}{2} + \frac{\epsilon^3}{3} \big)\Big) \nonumber\\
&= \exp\Big(\frac{-p}{2} \big(\frac{\epsilon^2}{2} - \frac{\epsilon^3}{3} \big)\Big) \overset{(b)}{\leq} \exp(-2 \ln(n)) = \frac{1}{n^2},
\end{align}
where $(a)$ is because:
\begin{align*}
\ln(1+x) \leq x - \frac{x^2}{2} + \frac{x^3}{3}, \quad \forall x > 0,
\end{align*}
and $(b)$ is because:
\begin{align*}
&p \overset{(\ref{equation_JL_lemma_p_lower_bound_2})}{\geq} 4(\frac{\epsilon^2}{2} - \frac{\epsilon^3}{3})^{-1} \ln(n) \\
&~~~~~~~~~~ \implies -\frac{p}{2} \leq -2(\frac{\epsilon^2}{2} - \frac{\epsilon^3}{3})^{-1} \ln(n) \\
&~~~~~~~~~~ \implies -\frac{p}{2} (\frac{\epsilon^2}{2} - \frac{\epsilon^3}{3}) \leq -2 \ln(n) \\
&~~~~~~~~~~ \implies \exp\Big(\!-\frac{p}{2} (\frac{\epsilon^2}{2} - \frac{\epsilon^3}{3})\Big) \leq \exp(-2 \ln(n)).
\end{align*}
Hence, we showed in both steps 1 and 2 that the probability of every side of error is bounded by $1/n^2$. Hence, the probability of error in random projection is of the order $\mathcal{O}(1/n^2)$. 
\end{proof}

\begin{definition}[With High Probability]
When the probability of success in an algorithm goes to one by increasing the input information or a parameter or parameters of the algorithm to infinity, the algorithm is correct with high probability, denoted by w.h.p.
\end{definition}

Lemma \ref{lemma_prob_success_whp} showed that the probability of success for random projection is of order $\mathcal{O}(1/n^2)$. 
The JL Lemma, whose probability of success is of order $\mathcal{O}(1/n^2)$, is correct w.h.p. because $\mathbb{P}(\text{error}) \rightarrow 0$ if $n \rightarrow \infty$ by increasing the number of data points. In other words, the more data points we have, the more accurate the random projection is in preserving local distances after projection onto the random subspace. 
Eq. (\ref{equation_JL_lemma_correct_projection}) shows that random projection does not distort the local distances by more than a factor of $(1 \pm \epsilon)$. Therefore, with a good probability, we have:
\begin{align}
\|\b{U}^\top (\b{x}_i - \b{x}_j)\|_2^2 \approx \|\b{x}_i - \b{x}_j\|_2^2,\,\, \forall i,j \in \{1, \dots, n\}.
\end{align}

It is shown in \cite{larsen2017optimality} that for any function $f(.)$ which satisfies Eq. (\ref{equation_JL_lemma_correct_projection}), the Eq. (\ref{equation_JL_lemma_p_lower_bound_relaxed}) holds for the lower bound on the dimensionality of subspace and it does not require $f(.)$ to be necessarily random projection. This shows that JL lemma is optimal. 
Moreover, it is shown in \cite{bartal2011dimensionality} that any finite subset of Euclidean space can be embedded in a subspace with dimensionality $\mathcal{O}(\epsilon^{-2} \ln(n))$ while the distances of points are preserved with at most $(1+\epsilon)$ distortion. 

\section{Sparse Linear Random Projection}\label{section_sparse_linear_random_projection}

The JL lemma, i.e. Theorem \ref{theorem_JL_lemma}, dealt with $\ell_2$ norm. There exist some works which formulate random projection and its proofs with $\ell_1$ norm to have sparse random projection. 
In the following, we introduce the lemmas and theorems which justify sparse random projection. The proofs of these theories are omitted for brevity and they can be found in the cited papers. 

The following lemma shows that if dataset is $k$-sparse, meaning that every vector has at most $k$ non-zero values, random projection preserves distances well enough. As the dataset is sparse, random projection is sparse.
\begin{lemma}[\cite{baraniuk2006johnson}]
Consider the subset of $k$-sparse signals defined as:
\begin{align}
\mathcal{S}_k := \{\b{x} \in \mathbb{R}^d : \#\{i : |x_i| > 0\} \leq k\},
\end{align}
where $\#\{.\}$ denotes the cardinality of set. If $p = \mathcal{O}(\epsilon^{-2} k \ln(d/k))$ and $f\!: \mathbb{R}^d \rightarrow \mathbb{R}^p$, $f\!: \b{x} \mapsto \b{U}^\top \b{x}$, then with high probability, we have:
\begin{equation}\label{equation_k_sparse_signal_correctness}
\begin{aligned}
(1 - \epsilon) \|\b{x}_i - \b{x}_j\|_2^2 \leq \|&f(\b{x}_i) - f(\b{x}_j)\|_2^2 \\
&\leq (1 + \epsilon) \|\b{x}_i - \b{x}_j\|_2^2,
\end{aligned}
\end{equation}
for all $\b{x}_i, \b{x}_j \in \mathcal{S}_k$. The probability of error from Eq. (\ref{equation_k_sparse_signal_correctness}) is $2(12/\epsilon)^k e^{-(p/2) (\epsilon^2/8 - \epsilon^3/24)}$. 
\end{lemma}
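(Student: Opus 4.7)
The plan is to combine the single-vector concentration inequality already developed for the standard Johnson-Lindenstrauss map (Definition~\ref{definition_concentration_of_measure}) with an $\epsilon$-net argument over the union of low-dimensional coordinate subspaces. Since $\b{x}_i - \b{x}_j$ has at most $2k$ nonzero entries whenever $\b{x}_i, \b{x}_j \in \mathcal{S}_k$, the first step is to reduce (\ref{equation_k_sparse_signal_correctness}) to showing that, with high probability, $(1-\epsilon)\|\b{v}\|_2^2 \leq \|\b{U}^\top \b{v}\|_2^2 \leq (1+\epsilon)\|\b{v}\|_2^2$ uniformly over $\b{v} \in \mathcal{S}_{2k}$; by homogeneity it suffices to consider unit-norm $\b{v}$.

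Next, I would enumerate over supports. For each index set $T \subset \{1,\dots,d\}$ with $|T| \leq 2k$, the vectors supported on $T$ form a $|T|$-dimensional coordinate subspace, whose unit sphere admits by a standard volumetric argument an $(\epsilon/4)$-net $\mathcal{Q}_T$ with $|\mathcal{Q}_T| \leq (12/\epsilon)^{|T|}$. For each fixed $\b{q} \in \mathcal{Q}_T$, Definition~\ref{definition_concentration_of_measure} (equivalently, the tail bound derived in Step~2 of the proof of Theorem~\ref{theorem_JL_lemma}, applied with tolerance $\epsilon/2$) gives $\mathbb{P}\bigl(\bigl|\|\b{U}^\top \b{q}\|_2^2 - 1\bigr| > \epsilon/2\bigr) \leq 2\, e^{-(p/2)(\epsilon^2/8 - \epsilon^3/24)}$. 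Union-bounding over the $\leq (12/\epsilon)^{2k}$ net points and over the $\binom{d}{2k} \leq (ed/(2k))^{2k}$ possible supports gives a failure probability that is exponentially small provided $p = \Omega(\epsilon^{-2} k \ln(d/k))$, which is exactly where the $\ln(d/k)$ factor in the sample complexity comes from.

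The remaining step is to extend the bound from the net to the entire unit sphere of each subspace. This is the standard approximation argument: given a unit $\b{v}$ supported on $T$, choose $\b{q} \in \mathcal{Q}_T$ with $\|\b{v}-\b{q}\|_2 \leq \epsilon/4$, expand
$\|\b{U}^\top \b{v}\|_2^2 = \|\b{U}^\top \b{q}\|_2^2 + 2\,(\b{U}^\top\b{q})^\top \b{U}^\top(\b{v}-\b{q}) + \|\b{U}^\top(\b{v}-\b{q})\|_2^2$,
use the net bound to control each piece, and solve the resulting self-referential inequality for the worst-case distortion $A := \sup_{\b{v}} \bigl|\|\b{U}^\top\b{v}\|_2^2 - 1\bigr|$ over the sphere. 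The factor-of-two slack built into the net resolution ($\epsilon/4$ rather than $\epsilon$) is what promotes the $(\epsilon/2)$-bound on the net to the claimed $\epsilon$-bound on the sphere.

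I expect the main obstacle to be the bookkeeping in this last step: one must tune the net resolution, the tolerance used in the concentration inequality, and the combinatorial enumeration so that the final exponent lands on $(\epsilon^2/8 - \epsilon^3/24)$ after the $(12/\epsilon)^k$ and $\binom{d}{2k}$ factors are absorbed into the lower bound on $p$. Everything else---the reduction to $2k$-sparse norm preservation, the covering number estimate, and the per-vector tail bound---follows the template already established in the proof of Theorem~\ref{theorem_JL_lemma} and amounts to routine calculation.
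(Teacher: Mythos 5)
The paper gives no proof of this lemma---it explicitly defers to \cite{baraniuk2006johnson}---and your outline is precisely the argument of that reference: reduce to uniform norm preservation over $2k$-sparse vectors, cover the unit sphere of each coordinate subspace by a net of size $(12/\epsilon)^{|T|}$, apply the single-vector concentration bound at tolerance $\epsilon/2$, union-bound over net points and over the $\binom{d}{2k}$ supports, and lift from the net to the whole sphere via the self-referential inequality for the worst-case distortion $A$. The only bookkeeping caveat is that the paper's own Step-2 tail bound $2e^{-(p/4)(\epsilon^2-\epsilon^3)}$ evaluated at $\epsilon/2$ gives exponent $(p/2)(\epsilon^2/8-\epsilon^3/16)$ rather than the stated $(p/2)(\epsilon^2/8-\epsilon^3/24)$, which instead arises from the two-sided concentration constant $\epsilon^2/4-\epsilon^3/6$ used in \cite{baraniuk2006johnson} (and, strictly, the $(12/\epsilon)^{k}$ factor in the lemma is the per-support count for order $k$, whereas the pairwise statement needs order $2k$); neither point affects the validity of your approach.
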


In the following, we define the Restricted Isometry Property (RIP) and show that a random projection matrix whose elements are sampled from a Gaussian distribution has this property.
\begin{definition}[Restricted Isometry Property \cite{candes2006stable}]
The mapping $f: \mathbb{R}^d \rightarrow \mathbb{R}^p$, $f: \b{x} \mapsto \b{U}^\top \b{x}$ has the Restricted Isometry Property (RIP) of order $k$ and level $\epsilon \in (0,1)$ if:
\begin{align}
(1 - \epsilon) \|\b{x}\|_2^2 \leq \|\b{U}^\top \b{x}\|_2^2 \leq (1 + \epsilon) \|\b{x}\|_2^2,
\end{align}
for all $k$-sparse $\b{x} \in \mathbb{R}^d$.
\end{definition}

\begin{lemma}[\cite{candes2006stable}]
A Gaussian random matrix $\b{U} \in \mathbb{R}^{d \times p}$ has RIP if $p \geq \epsilon^{-2} k \ln(d/k)$.
\end{lemma}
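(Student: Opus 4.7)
The plan is to combine the JL-style concentration of measure for a single fixed vector with a covering-net argument applied to each $k$-dimensional coordinate subspace, then union-bound over all $\binom{d}{k}$ such subspaces. This is the standard route of \citet{baraniuk2006johnson}. At the high level, RIP is a uniform statement over an infinite set (all $k$-sparse unit vectors), so I must discretize using an $\varepsilon$-net, prove the inequality on the net via concentration, and then transfer the bound from the net back to the entire set of $k$-sparse unit vectors.

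First, I would reduce the claim to unit-norm $k$-sparse vectors by homogeneity, and further reduce it to the following statement: for every fixed $T \subset \{1,\dots,d\}$ with $|T|=k$, the inequality $(1-\epsilon)\|\b{x}\|_2^2 \le \|\b{U}^\top \b{x}\|_2^2 \le (1+\epsilon)\|\b{x}\|_2^2$ holds simultaneously for every $\b{x}$ supported on $T$. Fix such a $T$ and let $S_T$ be the unit sphere in the $k$-dimensional coordinate subspace indexed by $T$. Using a standard volumetric argument, there exists a $\delta$-net $\mathcal{N}_T \subset S_T$ of cardinality at most $(1+2/\delta)^k \le (3/\delta)^k$.

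Next I would invoke the Gaussian concentration-of-measure bound from Definition \ref{definition_concentration_of_measure}: for each fixed $\b{x}$,
\begin{align*}
\mathbb{P}\Bigl(\bigl| \|\b{U}^\top \b{x}\|_2^2 - \|\b{x}\|_2^2 \bigr| > \tfrac{\epsilon}{2}\|\b{x}\|_2^2\Bigr) \le 2\,e^{-c\,(\epsilon/2)^2 p}.
\end{align*}
Applying this to every vector of $\mathcal{N}_T$ and taking a union bound gives that, with probability at least $1 - 2(3/\delta)^k e^{-c\epsilon^2 p/4}$, the concentration holds on the whole net with tolerance $\epsilon/2$. A further union bound over the $\binom{d}{k} \le (ed/k)^k$ choices of $T$ yields the same conclusion uniformly in $T$, up to an extra factor $(ed/k)^k$ in the failure probability.

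The last step is the standard net-to-sphere lifting: let $M := \sup_{\b{x}\in \bigcup_T S_T} \|\b{U}^\top \b{x}\|_2^2 / \|\b{x}\|_2^2$; writing an arbitrary $k$-sparse unit $\b{x}$ as $\b{x}_0 + \b{r}$ with $\b{x}_0 \in \mathcal{N}_T$ and $\|\b{r}\|_2 \le \delta$ (both supported on the same $T$), one shows $M \le 1 + \epsilon/2 + 2\delta M$, and hence $M \le 1+\epsilon$ provided $\delta$ is a small absolute constant (e.g.\ $\delta = 1/4$). The same argument gives the lower bound. Taking logarithms of the overall failure probability $2(3/\delta)^k (ed/k)^k e^{-c\epsilon^2 p/4}$ and requiring it to be small forces
\begin{align*}
p \;\gtrsim\; \tfrac{1}{\epsilon^2}\bigl(k\ln(d/k) + k\ln(3/\delta) + k\bigr) \;=\; \mathcal{O}\!\left(\tfrac{k}{\epsilon^2}\ln(d/k)\right),
\end{align*}
which, after absorbing constants, matches the stated bound $p \ge \epsilon^{-2} k \ln(d/k)$.

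The main obstacle is the uniformity: concentration gives the bound pointwise, but RIP is a uniform statement over a continuum. The covering-plus-lifting argument overcomes this, and the delicate part is tracking constants carefully enough to verify that the number of net points $(3/\delta)^k$ and the combinatorial factor $\binom{d}{k}$ are both swallowed by $e^{-c\epsilon^2 p}$ exactly when $p$ meets the stated threshold; everything else is essentially a direct application of the concentration inequality already recorded in Definition \ref{definition_concentration_of_measure}.
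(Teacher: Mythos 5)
The paper itself offers no proof of this lemma: it is stated with a citation only, and the surrounding text explicitly says the proofs in this section ``are omitted for brevity and they can be found in the cited papers.'' So there is nothing in the paper to compare your argument against line by line. On its own merits, your proposal is correct and is the standard route (it is essentially the argument of \cite{baraniuk2006johnson} rather than the original proof in \cite{candes2006stable}): reduce to unit-norm vectors, fix a support $T$ of size $k$, discretize the sphere in that coordinate subspace by a $\delta$-net of size at most $(3/\delta)^k$, apply the pointwise Gaussian concentration bound of Definition \ref{definition_concentration_of_measure} with tolerance $\epsilon/2$, union-bound over the net and over the $\binom{d}{k}\leq (ed/k)^k$ supports, and lift from the net to the whole sphere. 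The one place where you should be slightly more careful is the lifting recursion: if you work with the squared quantity $M$ you actually get $M \leq (1+\epsilon/2) + 2\delta\sqrt{M}\sqrt{1+\epsilon/2} + \delta^2 M$ rather than the cleaner $M \leq 1+\epsilon/2+2\delta M$ you wrote; the standard fix is to run the recursion on $A := \sup \|\b{U}^\top \b{x}\|_2$ itself, obtaining $A \leq \sqrt{1+\epsilon/2} + \delta A$ and hence $A \leq \sqrt{1+\epsilon/2}/(1-\delta) \leq \sqrt{1+\epsilon}$ for a small absolute $\delta$, with the lower bound following from $\|\b{U}^\top\b{x}\|_2 \geq \|\b{U}^\top\b{x}_0\|_2 - A\delta$. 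With that adjustment the argument is complete, and your final requirement $p \gtrsim \epsilon^{-2}\,k\ln(d/k)$ matches the lemma up to the absolute constant that the paper's statement silently absorbs.
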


\begin{theorem}[\cite{krahmer2011new}]\label{theorem_sparse_diagonal_sign_flip}
Consider the linear mapping $f: \mathbb{R}^d \rightarrow \mathbb{R}^p$, $f: \b{x} \mapsto \b{U}^\top \b{x}$ satisfying: 
\begin{align}
(1 - \epsilon) \|\b{x}\|_2^2 \leq \|\b{U}^\top \b{x}\|_2^2 \leq (1 + \epsilon) \|\b{x}\|_2^2,
\end{align}
for all $k$-sparse $\b{x} \in \mathbb{R}^d$ and suppose $\b{D}_\xi \in \mathbb{R}^{d \times d}$ is a diagonal matrix with $\xi = \pm 1$ on its diagonal. We have:
\begin{align}
\mathbb{P}\Big((1-\epsilon) \|\b{x}\|_2^2 \leq \|\b{U}^\top &\b{D}_\xi \b{x}\|_2^2 \leq (1+\epsilon) \|\b{x}\|_2^2\Big) \nonumber\\
&\geq 1 - 2e^{-c \epsilon^2 p / \ln(d)}.
\end{align}
\end{theorem}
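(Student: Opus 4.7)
The plan is to recognize that the randomness in the statement sits entirely in the Rademacher vector $\xi \in \{\pm 1\}^d$, while $\b{U}$ and $\b{x}$ are deterministic. Writing $\b{D}_\xi \b{x} = \mathrm{diag}(\b{x})\,\xi$, one has
\begin{equation*}
\|\b{U}^\top \b{D}_\xi \b{x}\|_2^2 \;=\; \xi^\top \b{M}\, \xi,\qquad \b{M} := \mathrm{diag}(\b{x})\,\b{U}\b{U}^\top \mathrm{diag}(\b{x}),
\end{equation*}
so the target bound becomes a two-sided concentration inequality for a Rademacher quadratic form. I would therefore (i) show that $\mathbb{E}[\xi^\top \b{M} \xi]$ already lies inside the window $[(1-\epsilon),(1+\epsilon)]\|\b{x}\|_2^2$, and then (ii) bound the fluctuation $\xi^\top \b{M} \xi - \mathbb{E}[\xi^\top \b{M} \xi]$.

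For the expectation, note that $\mathbb{E}[\xi^\top \b{M} \xi] = \mathrm{tr}(\b{M}) = \sum_j x_j^2\,\|\b{U}^\top \b{e}_j\|_2^2$. Each $\b{e}_j$ is $1$-sparse, hence by the RIP hypothesis on $\b{U}$ we have $\|\b{U}^\top \b{e}_j\|_2^2 \in [1-\epsilon,1+\epsilon]$. Multiplying by $x_j^2$ and summing in $j$ yields $\mathbb{E}[\xi^\top \b{M} \xi] \in [(1-\epsilon),(1+\epsilon)]\|\b{x}\|_2^2$, so the mean is already inside the desired interval and only the deviation needs to be controlled. A small rescaling of $\epsilon$ (replacing it by $\epsilon/2$, say) decouples the mean-shift from the concentration error.

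For the deviation, the natural tool is a Hanson-Wright / Rademacher-chaos concentration inequality of the form
\begin{equation*}
\mathbb{P}\!\left(|\xi^\top \b{M} \xi - \mathbb{E}\,\xi^\top \b{M} \xi| > t\right) \;\le\; 2\exp\!\left(-c\,\min\!\left(\frac{t^2}{\|\b{M}\|_F^2},\frac{t}{\|\b{M}\|_{\mathrm{op}}}\right)\right),
\end{equation*}
after which choosing $t = \mathcal{O}(\epsilon)\|\b{x}\|_2^2$ reduces the claim to an appropriate bound on $\|\b{M}\|_F$ and $\|\b{M}\|_{\mathrm{op}}$, from which the stated probability $1 - 2e^{-c\epsilon^2 p/\ln(d)}$ should drop out.

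The hard part, and the place where the unusual $\ln(d)$ factor in the conclusion is paid for, is translating the RIP hypothesis on $\b{U}$ into the right bounds on $\|\b{M}\|_F$ and especially $\|\b{M}\|_{\mathrm{op}} = \sup_{\|\b{v}\|_2=1}\|\b{U}^\top \mathrm{diag}(\b{x})\b{v}\|_2^2$. The supremum is taken over arbitrary $\b{v}$, so $\mathrm{diag}(\b{x})\b{v}$ is not sparse and direct application of RIP fails. The standard remedy, as in Krahmer and Ward, is a dyadic decomposition of the coordinates of $\b{x}$ into $\mathcal{O}(\ln d)$ level sets of roughly constant magnitude, combined with a chaining / Dudley-entropy argument over a net of sparse vectors on each level. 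Each level contributes a logarithmic factor, producing exactly the $\ln(d)$ appearing in the denominator of the exponent, and completing the proof.
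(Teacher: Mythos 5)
First, note that the paper itself gives no proof of this theorem: it explicitly states that ``the proofs of these theories are omitted for brevity and they can be found in the cited papers,'' so the only reference point is the original argument of Krahmer and Ward. Your plan does follow that argument's skeleton correctly: the randomness is isolated in the Rademacher vector $\xi$, the quantity $\|\b{U}^\top \b{D}_\xi \b{x}\|_2^2$ is rewritten as the quadratic form $\xi^\top \b{M}\xi$ with $\b{M} = \mathrm{diag}(\b{x})\,\b{U}\b{U}^\top \mathrm{diag}(\b{x})$, the expectation $\mathrm{tr}(\b{M}) = \sum_j x_j^2 \|\b{U}^\top \b{e}_j\|_2^2$ is correctly placed in $[(1-\epsilon),(1+\epsilon)]\|\b{x}\|_2^2$ by applying RIP to the $1$-sparse vectors $\b{e}_j$, and the remaining task is indeed a Hanson--Wright-type bound on the homogeneous (off-diagonal) Rademacher chaos. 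These worked-out pieces are all sound.

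However, the proposal stops exactly where the proof actually lives. The entire content of the theorem is the derivation of bounds on $\|\b{M}\|_F$ and $\|\b{M}\|_{\mathrm{op}}$ from the RIP hypothesis, and you defer this to ``a dyadic decomposition \ldots combined with a chaining / Dudley-entropy argument'' without carrying out any estimate; as written, one cannot check that the claimed exponent emerges. Moreover, your attribution of the $\ln(d)$ factor is not quite right. In Krahmer--Ward, one sorts the coordinates of $\b{x}$ and partitions them into consecutive blocks of size comparable to the RIP order $k$; the block contributions decay geometrically (via $\|\b{x}_{(j)}\|_\infty \leq \|\b{x}_{(j-1)}\|_1/(k/2)$), so the chaining over blocks does \emph{not} cost a logarithm. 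The failure probability over the signs comes out as $e^{-ck}$ in terms of the RIP order $k$, and the $\ln(d)$ in the exponent of the survey's statement appears only when one translates the RIP order into the number of rows $p$ of a concrete (e.g., Gaussian) matrix via $p \gtrsim \epsilon^{-2}\, k \ln(d/k)$ --- a substitution the survey's statement makes implicitly. So the mechanism you name for the logarithm is the wrong one, and the key quantitative step is missing; the proposal is a correct outline of the right strategy but not yet a proof.
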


\begin{corollary}
Comparing Theorem \ref{theorem_sparse_diagonal_sign_flip} with Definition \ref{definition_concentration_of_measure} shows that a random matrix with RIP and random column sign flips implies the JL lemma up to an $\ln(d)$ factor. 
\end{corollary}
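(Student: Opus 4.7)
The plan is a direct comparison of the two inequalities, relying on the fact---already used implicitly in the proof of Theorem \ref{theorem_JL_lemma}---that the concentration-of-measure inequality is essentially equivalent to the JL conclusion once combined with a union bound. First, recall that Definition \ref{definition_concentration_of_measure} encodes the JL property of a random linear map $f(\b{x})=\b{U}^\top\b{x}$ through the bound
\[
\mathbb{P}\bigl((1-\epsilon)\|\b{x}\|_2^2 \le \|\b{U}^\top\b{x}\|_2^2 \le (1+\epsilon)\|\b{x}\|_2^2\bigr) \ge 1 - 2e^{-c\epsilon^2 p},
\]
so producing any bound of this shape for a given map is equivalent to establishing a JL-type statement for it.

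Next, I would simply read off from Theorem \ref{theorem_sparse_diagonal_sign_flip} the analogous inequality for the composite map $\b{x}\mapsto \b{U}^\top\b{D}_\xi\b{x}$, which has exactly the same form but with the exponent weakened from $c\epsilon^2 p$ to $c\epsilon^2 p/\ln(d)$. In other words, the two bounds coincide once $p$ is replaced by $p/\ln(d)$.

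Finally, I would invoke Step~3 of the proof of Theorem \ref{theorem_JL_lemma}: substituting the fixed vector $\b{x}_i-\b{x}_j$ for $\b{x}$ (legitimate, since Theorem \ref{theorem_sparse_diagonal_sign_flip} holds for arbitrary fixed $\b{x}$, with the randomness carried by $\b{U}$ and $\b{D}_\xi$) and applying Bonferroni's union bound over the $\binom{n}{2}$ pairs, one concludes that $\b{x}\mapsto \b{U}^\top\b{D}_\xi\b{x}$ preserves every pairwise distance up to $(1\pm\epsilon)$ with probability at least $1-\delta$ provided $p/\ln(d)\ge \Omega(\epsilon^{-2}\ln n)$, i.e.\ provided $p\ge \Omega(\epsilon^{-2}\ln(n)\ln(d))$. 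This is precisely the JL guarantee, with the required embedding dimension inflated by the advertised $\ln(d)$ factor.

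I do not expect a genuine technical obstacle here: the corollary is essentially a bookkeeping observation that reads off the exponent of Theorem \ref{theorem_sparse_diagonal_sign_flip} and feeds it into the union-bound step of the JL proof. The only care needed is to note that the RIP-based hypothesis on $\b{U}$ is used only through the statement of Theorem \ref{theorem_sparse_diagonal_sign_flip} (so it need not be re-derived), and that the concentration bound is per fixed $\b{x}$, which is exactly what the union bound over pair-difference vectors requires.
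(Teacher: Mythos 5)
Your argument is correct and matches the paper's intent: the paper states this corollary without any written proof, treating it as a direct comparison of the exponent $c\epsilon^2 p/\ln(d)$ in Theorem \ref{theorem_sparse_diagonal_sign_flip} against $c\epsilon^2 p$ in Definition \ref{definition_concentration_of_measure}. Your additional step of feeding the weakened concentration bound through the union bound over the $\binom{n}{2}$ pair-difference vectors (as in Step 3 of the JL proof) is exactly the bookkeeping the paper leaves implicit, and it correctly yields the advertised $\ln(d)$ inflation of the embedding dimension.
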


The following lemma is going to be used in the proof of Corollary \ref{corollary_l1_meaningful_than_l2}. That corollary shows that $\ell_1$ norm makes more sense as a distance than the Euclidean distance for high-dimensional data. 
\begin{lemma}[{\citep[Theorem 2]{hinneburg2000nearest}}]\label{lemma_distance_of_points_wrt_norms}
Suppose we have $n$ i.i.d. random points in $\mathbb{R}^d$. Let $\text{dist}_{\text{min},r}$ and $\text{dist}_{\text{max},r}$ denote the minimum and maximum distances of points from the origin, respectively, with respect to a metric with norm $\ell_r$. We have:
\begin{align}
\lim_{d \rightarrow \infty} \mathbb{E}[\text{dist}_{\text{max},r} - \text{dist}_{\text{min},r}] = d^{1/r - 1/2}.
\end{align}
\end{lemma}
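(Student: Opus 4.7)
The plan is to reduce everything to the classical Central Limit Theorem applied to the $r$-th powers of the coordinates, and then to extract the scaling of the expected range. I read the right-hand side of the lemma as an asymptotic order rather than a literal limit (a literal limit of $d^{1/r-1/2}$ is $0$ or $\infty$ unless $r=2$), so the goal is to show $\mathbb{E}[\mathrm{dist}_{\max,r} - \mathrm{dist}_{\min,r}] = \Theta(d^{1/r - 1/2})$ as $d \to \infty$.

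First, I would write each point as $\b{y}^{(k)} = (y_1^{(k)}, \dots, y_d^{(k)})$ for $k \in \{1,\dots,n\}$, with i.i.d. coordinates satisfying $\mu_r := \mathbb{E}[|y|^r] < \infty$ and $\sigma_r^2 := \mathrm{Var}(|y|^r) < \infty$, and set $S_k := \|\b{y}^{(k)}\|_r^r = \sum_{i=1}^d |y_i^{(k)}|^r$. Independence across coordinates gives $\mathbb{E}[S_k] = d\mu_r$ and $\mathrm{Var}(S_k) = d\sigma_r^2$, so by the Central Limit Theorem, $S_k = d\mu_r + \sqrt{d}\,\sigma_r Z_k + o_P(\sqrt{d})$ with $Z_k$ approximately standard normal and jointly independent across the $n$ samples.

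Next, I would take the $r$-th root via a first-order Taylor expansion of $s \mapsto s^{1/r}$ around $d\mu_r$, obtaining
$\|\b{y}^{(k)}\|_r = (d\mu_r)^{1/r} + \tfrac{\sigma_r\, \mu_r^{1/r - 1}}{r}\, d^{1/r - 1/2}\, Z_k + o_P(d^{1/r - 1/2})$.
The deterministic lead term $(d\mu_r)^{1/r}$ is common to all $n$ samples and therefore cancels in the range $\mathrm{dist}_{\max,r} - \mathrm{dist}_{\min,r}$. What remains is $d^{1/r - 1/2}$ multiplied by a fixed constant and by the range of $n$ approximately i.i.d. standard normals, whose expectation is a positive finite quantity $c_n$ depending only on $n$. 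Taking expectations would then give $\mathbb{E}[\mathrm{dist}_{\max,r} - \mathrm{dist}_{\min,r}] \sim C_{n,r}\, d^{1/r - 1/2}$ with $C_{n,r} = c_n \cdot \sigma_r \mu_r^{1/r-1}/r$, which is the claimed scaling (the $n$-dependent constant is absorbed into the order symbol since $n$ is fixed while $d \to \infty$).

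The main obstacle is the final passage from convergence in distribution of the rescaled distances to convergence of the expectation of $\max_k - \min_k$: pushing through the nonlinear $r$-th root and the max/min operators is not automatic from the CLT alone. I would discharge this with a uniform-integrability argument supplied by a Bernstein-type tail bound on $S_k$ (using finite higher moments of $|y|^r$), together with the fact that $s \mapsto s^{1/r}$ is Lipschitz in a neighbourhood of $d\mu_r$ of radius $\sqrt{d \log d}$; outside that neighbourhood, a crude moment bound on $S_k^{1/r}$ suffices to make the tail contribution to $\mathbb{E}[\max_k - \min_k]$ negligible relative to $d^{1/r-1/2}$.
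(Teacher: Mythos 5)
The paper itself supplies no proof of this lemma; it states at the start of Section \ref{section_sparse_linear_random_projection} that the proofs are omitted and defers to the cited reference, so there is nothing internal to compare your argument against line by line. Your sketch is, in substance, the standard proof of Theorem 2 of the cited work: apply the CLT to $S_k = \sum_{i=1}^d |y_i^{(k)}|^r$, pass through $s \mapsto s^{1/r}$ by a first-order (delta-method) expansion around $d\mu_r$, observe that the deterministic leading term $(d\mu_r)^{1/r}$ cancels in the range over the $n$ points, and read off the $d^{1/r-1/2}$ scaling from the surviving fluctuation term. Two of your editorial decisions are not just acceptable but necessary: (i) the statement as printed cannot hold literally, since a limit in $d$ cannot equal a function of $d$, and the cited theorem indeed asserts $\lim_{d\to\infty}\mathbb{E}\big[(\text{dist}_{\max,r}-\text{dist}_{\min,r})/d^{1/r-1/2}\big] = C_{n,r}$, i.e.\ exactly the $\Theta(d^{1/r-1/2})$ reading you adopt; (ii) the hypothesis that the $d$ coordinates of each point are i.i.d.\ with $\mathbb{E}[|y|^{2r}]<\infty$ is required by the original theorem but silently dropped in the paper's restatement, and without it the claim is false, so you were right to reinstate it. The only place where genuine work remains is the passage from distributional convergence to convergence of the expectation of the range: you correctly identify this as the obstacle, and your plan (an $L^2$/tail bound on $(S_k - d\mu_r)/\sqrt{d}$ for uniform integrability, a local Lipschitz bound on $s^{1/r}$ near $d\mu_r$, and a crude polynomial bound on $S_k^{1/r}$ off that event, whose probability is negligible) is a workable way to discharge it; just note that the Lipschitz constant of $s^{1/r}$ degenerates as $s\to 0$, so you need the exponentially small probability of $S_k < d\mu_r/2$ explicitly. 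With that caveat, the proposal is correct and follows the same route as the source the paper cites.
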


\begin{corollary}[\cite{hinneburg2000nearest}]\label{corollary_l1_meaningful_than_l2}
For high dimensional data, $\ell_1$ norm (such as Manhattan distance) is more meaningful for nearest neighbor comparisons than $\ell_2$ norm (such as Euclidean distance). 
\end{corollary}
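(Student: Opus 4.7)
The plan is to derive the corollary by specializing Lemma \ref{lemma_distance_of_points_wrt_norms} to the two cases $r=1$ and $r=2$ and then interpreting the resulting asymptotics in terms of the concentration-of-distances phenomenon that underlies nearest-neighbor analysis. The key observation is that the quantity $d^{1/r - 1/2}$ from the lemma changes qualitatively at $r=2$: it grows with $d$ when $r<2$, is constant when $r=2$, and decays when $r>2$. This single inequality is essentially all the analytic content required; what remains is to translate the bound on the absolute gap into a statement about the \emph{contrast} between nearest and farthest neighbors.

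Concretely, I would first plug $r=1$ and $r=2$ into Lemma \ref{lemma_distance_of_points_wrt_norms} to obtain
\begin{align*}
\lim_{d \rightarrow \infty} \mathbb{E}[\text{dist}_{\text{max},1} - \text{dist}_{\text{min},1}] &= d^{1/2}, \\
\lim_{d \rightarrow \infty} \mathbb{E}[\text{dist}_{\text{max},2} - \text{dist}_{\text{min},2}] &= 1.
\end{align*}
Next, I would argue that a notion of distance is ``meaningful'' for nearest-neighbor queries precisely when the gap between the nearest and farthest point does not become negligible compared to a typical distance as $d \to \infty$; otherwise all points look essentially equidistant and the nearest neighbor is statistically indistinguishable from any other point. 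Under $\ell_2$, typical distances in $\mathbb{R}^d$ scale like $\sqrt{d}$ (a standard fact for i.i.d.\ coordinates, which I would invoke), so the ratio (gap)/(typical distance) behaves like $1/\sqrt{d} \to 0$. Under $\ell_1$, typical distances scale like $d$ while the gap scales like $\sqrt{d}$, giving a ratio $\sim 1/\sqrt{d}$ as well, but the gap itself diverges, so the \emph{absolute} discriminative power grows instead of staying bounded.

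Finally I would make the comparison explicit: for any $r > 2$ the exponent $1/r - 1/2$ is negative, so the expected gap tends to zero; for $r = 2$ it is constant; and only for $r<2$ does the gap diverge, with the largest effect at $r=1$ among the commonly used integer norms. I would conclude that, in the high-dimensional regime, the Manhattan ($\ell_1$) distance retains a growing nearest-vs-farthest contrast while the Euclidean ($\ell_2$) distance does not, which is exactly the content of the corollary.

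The routine calculations (substituting $r$ and simplifying) are immediate. The main subtlety, and the step I would take the most care with, is the passage from the statement of Lemma \ref{lemma_distance_of_points_wrt_norms} — which bounds only the \emph{absolute} expected gap — to the operational conclusion about nearest-neighbor meaningfulness, which is really a statement about the \emph{relative} gap. I would handle this by making explicit the scaling of typical $\ell_r$ distances with $d$ and by citing \cite{hinneburg2000nearest} for the formalization of ``meaningful'' in this context, so that the qualitative jump from Lemma to Corollary is fully justified rather than asserted.
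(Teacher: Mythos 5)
Your proposal follows essentially the same route as the paper: both specialize Lemma \ref{lemma_distance_of_points_wrt_norms} to $r=1$, $r=2$, and $r>2$, and read off that the expected max--min gap diverges only for $r<2$. If anything, you are more careful than the paper's own proof, which loosely calls the diverging $\ell_1$ gap a ``relative difference'' even though the lemma only controls the absolute gap --- your explicit acknowledgment that the relative contrast still decays like $1/\sqrt{d}$ under both norms, and that the argument really rests on the absolute gap together with the formalization of ``meaningful'' in \cite{hinneburg2000nearest}, is a genuine improvement in rigor rather than a different approach.
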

\begin{proof}
Lemma \ref{lemma_distance_of_points_wrt_norms} shows that for a metric with $\ell_2$ norm (i.e. Euclidean distance), we have $\lim_{d \rightarrow \infty} \mathbb{E}[\text{dist}_{\text{max},r} - \text{dist}_{\text{min},r}] = 1$; therefore, all points are equidistant from the origin up to a constant. For a metric by $\ell_r$ norm with $r > 2$, we have $1/r - 1/2 < 0$ so the limit gets very small for large $d$ values (when $d \rightarrow \infty$). Hence, all points are equidistant from origin completely. 
For $\ell_1$ norm, we have $\lim_{d \rightarrow \infty} \mathbb{E}[\text{dist}_{\text{max},r} - \text{dist}_{\text{min},r}] = d^{1/2}$; hence, relative difference of nearest and farthest points from origin increases with dimension. Q.E.D.
\end{proof}

The above corollary showed that $\ell_1$ norm is more meaningful for nearest neighbor comparisons than $\ell_2$ norm. The below theorem extend the JL lemma to any $\ell_r$ norm with $1 \leq r \leq \infty$. 
\begin{theorem}[Extension of JL lemma to any norm {\citep[Lemma 3.1]{lee2005metric}}, \cite{ward2014dimension}]\label{theorem_linear_mapping_lr_norm}
For embedding into a subspace equipped with $\ell_r$ norm with $1 \leq r \leq \infty$, there exists a linear mapping $f: \mathbb{R}^d \rightarrow \mathbb{R}^p$, $f: \b{x} \mapsto \b{U}^\top \b{x}$, with $p \leq d$, which satisfies:
\begin{align}
\big(\frac{p}{d}\big)^{|1/r - 1/2|} \|\b{x}_i &- \b{x}_j\|_r \leq \|\b{U}^\top (\b{x}_i - \b{x}_j)\|_r \nonumber\\
&\leq \big(\frac{d}{p}\big)^{|1/r - 1/2|} \|\b{x}_i - \b{x}_j\|_r,
\end{align}
for all $\b{x}_i, \b{x}_j \in \mathbb{R}^d$.
\end{theorem}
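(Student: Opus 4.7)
The plan is to construct the desired linear map $\b{U}^\top$ by combining a Johnson-Lindenstrauss style $\ell_2$-preserving projection with the classical norm equivalence inequalities between $\ell_r$ and $\ell_2$ in finite-dimensional coordinate spaces. The JL lemma (Theorem \ref{theorem_JL_lemma}) supplies a $\b{U}$ whose action approximately preserves $\ell_2$ distances, and the extra factors $(p/d)^{|1/r-1/2|}$ and $(d/p)^{|1/r-1/2|}$ in the claim emerge from passing between $\ell_r$ and $\ell_2$ on both the source space $\mathbb{R}^d$, picking up an exponent of $d^{|1/r-1/2|}$, and the target space $\mathbb{R}^p$, picking up an exponent of $p^{|1/r-1/2|}$.

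First I would invoke the JL lemma to obtain a linear map $f(\b{x}) = \b{U}^\top \b{x}$ satisfying $\|\b{U}^\top(\b{x}_i - \b{x}_j)\|_2 \approx \|\b{x}_i - \b{x}_j\|_2$; this serves as the $\ell_2$ backbone of the embedding. Next I would record the basic coordinate-space norm inequalities: for $\b{v} \in \mathbb{R}^k$ and $1 \leq r \leq \infty$, one has $\|\b{v}\|_2 \leq \|\b{v}\|_r \leq k^{1/r - 1/2}\|\b{v}\|_2$ when $r \leq 2$ (by H\"older's inequality) and $k^{1/r - 1/2}\|\b{v}\|_2 \leq \|\b{v}\|_r \leq \|\b{v}\|_2$ when $r \geq 2$ (by monotonicity in $r$ together with H\"older again); uniformly, the ratio $\|\b{v}\|_r / \|\b{v}\|_2$ lies in $[k^{-|1/r-1/2|},\, k^{|1/r-1/2|}]$.

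Then I would apply these equivalences with $k = p$ to relate $\|\b{U}^\top(\b{x}_i - \b{x}_j)\|_r$ to $\|\b{U}^\top(\b{x}_i - \b{x}_j)\|_2$, and with $k = d$ to relate $\|\b{x}_i - \b{x}_j\|_2$ back to $\|\b{x}_i - \b{x}_j\|_r$. Chaining the three estimates through the approximate $\ell_2$ isometry and canceling the common $\|\b{x}_i - \b{x}_j\|_2$ factor produces a lower bound with combined factor $p^{|1/r-1/2|} \cdot d^{-|1/r-1/2|} = (p/d)^{|1/r-1/2|}$ and an upper bound with $p^{-|1/r-1/2|} \cdot d^{|1/r-1/2|} = (d/p)^{|1/r-1/2|}$, which is precisely the claim.

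The main obstacle is keeping straight the direction of each norm conversion across the two regimes $r \leq 2$ and $r \geq 2$: the ordering between $\|\b{v}\|_r$ and $\|\b{v}\|_2$ flips at $r = 2$, and this flip has to be compensated simultaneously in $\mathbb{R}^d$ and in $\mathbb{R}^p$ to ensure that the factors of $d$ and $p$ end up in the right place of the final ratio. I would carry out the two cases $r \in [1,2]$ and $r \in [2,\infty]$ separately to track the signs of the exponents carefully, and then invoke the absolute value $|1/r-1/2|$ to collapse both into the single unified bound displayed in the theorem.
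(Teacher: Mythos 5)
The paper itself does not prove this theorem --- it explicitly defers all proofs in this section to the cited references --- so your proposal can only be judged on its own terms, and as written it does not close. The fatal step is the ``cancellation'' at the end. From the norm equivalences you correctly state, the conversion on the target side gives $\|\b{U}^\top\b{v}\|_r \geq p^{-|1/r-1/2|}\,\|\b{U}^\top\b{v}\|_2$, i.e.\ it \emph{costs} a factor $p^{-|1/r-1/2|}$ in the lower bound; it cannot \emph{gain} the factor $p^{+|1/r-1/2|}$ that your claimed combined constant $(p/d)^{|1/r-1/2|}=p^{|1/r-1/2|}d^{-|1/r-1/2|}$ requires. Carrying out the two regimes honestly: for $r\leq 2$ the chain yields $d^{-|1/r-1/2|}\|\b{v}\|_r \lesssim \|\b{U}^\top\b{v}\|_r \lesssim p^{|1/r-1/2|}\|\b{v}\|_r$, and for $r\geq 2$ it yields $p^{-|1/r-1/2|}\|\b{v}\|_r \lesssim \|\b{U}^\top\b{v}\|_r \lesssim d^{|1/r-1/2|}\|\b{v}\|_r$. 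In both cases the total distortion is $(pd)^{|1/r-1/2|}$, whereas the theorem asserts $(d/p)^{2|1/r-1/2|}$; the former is strictly weaker whenever $p^{3}>d$, and in the regime where the theorem is most meaningful ($p$ comparable to $d$, distortion near $1$) your bound degenerates to roughly $d^{2|1/r-1/2|}$. A quick sanity check: for $r=1$, $p=d$, $\b{U}=\b{I}$ the theorem is an exact equality, but your chain only certifies distortion $d$.

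There is a second, structural gap: the statement is quantified over all $\b{x}_i,\b{x}_j\in\mathbb{R}^d$, while the JL lemma (Theorem \ref{theorem_JL_lemma}) you invoke as the ``$\ell_2$ backbone'' only controls a finite set of $n$ points with high probability. No appeal to JL can produce a two-sided bound on all of $\mathbb{R}^d$; indeed any linear map into $\mathbb{R}^p$ with $p<d$ has a nontrivial kernel, so a strictly positive lower bound over the whole space is impossible and the cited sources necessarily prove the statement on a restricted domain (a finite set, or the restriction to a subspace). More importantly, the factor $(d/p)^{|1/r-1/2|}$ is not obtained by comparing $\ell_r$ with $\ell_2$ separately at dimensions $d$ and $p$ and multiplying; the ratio $d/p$ signals that the norm comparison must happen at the single scale $d/p$ (e.g.\ by handling the $d$ coordinates in $p$ blocks of size $d/p$, as in the construction behind {\citep[Lemma 3.1]{lee2005metric}}). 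That blockwise idea is the missing ingredient, and without it the exponents cannot come out as claimed.
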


One may ask why we often use the $\ell_2$ norm, and not $\ell_1$ norm, for random projection. This is because the following corollary shows that the suitable norm for random projection is $\ell_2$ norm and it is very hard to use other norms for random projection \cite{charikar2002dimension,brinkman2005impossibility}. 
\begin{corollary}[{\citep[Section 3]{lee2005metric}}]\label{corollary_impossible_l1_embedding}
The suitable norm for linear random projection is the $\ell_2$ norm. We have a difficulty or impossibility of dimensionality reduction with a linear map in $\ell_r$ with $r \neq 2$.
\end{corollary}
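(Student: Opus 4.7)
The plan is to derive the corollary as an immediate consequence of Theorem \ref{theorem_linear_mapping_lr_norm}, then argue that the $r=2$ case is the unique sweet spot where the bounds collapse to a multiplicative constant. First I would instantiate Theorem \ref{theorem_linear_mapping_lr_norm} with $r=2$: here the exponent $|1/r - 1/2|$ vanishes, so both $(p/d)^{|1/r-1/2|}$ and $(d/p)^{|1/r-1/2|}$ equal $1$, and the bound reduces to an exact preservation statement (which, when combined with the earlier JL-lemma analysis, recovers the $(1 \pm \epsilon)$ distortion of Eq. (\ref{equation_JL_lemma_correct_projection})). This identifies $\ell_2$ as the distinguished norm for which linear random projection achieves constant-factor distortion regardless of the $d/p$ gap.

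Next I would handle the case $r \neq 2$. Since $1 \leq r \leq \infty$ and $r \neq 2$, the exponent $\alpha := |1/r - 1/2|$ is strictly positive. For any nontrivial dimensionality reduction we take $p \ll d$, hence $d/p \gg 1$, which forces the upper bound $(d/p)^\alpha$ to grow polynomially in $d/p$ and the lower bound $(p/d)^\alpha$ to shrink correspondingly. Thus Theorem \ref{theorem_linear_mapping_lr_norm} only guarantees a distortion of order $(d/p)^\alpha$, which is unbounded as $d/p \to \infty$. This already establishes the ``difficulty'' half of the statement: the best general linear-map guarantee degrades rapidly away from $r=2$.

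For the ``impossibility'' half, I would appeal to the references \cite{charikar2002dimension,brinkman2005impossibility} cited in the corollary. These works prove that for $\ell_1$, any embedding preserving pairwise distances up to a constant factor $D$ requires target dimension $p = d^{\Omega(1/D^2)}$, ruling out JL-style reduction to $p = \mathcal{O}(\epsilon^{-2} \ln n)$. I would quote this lower bound and note that it complements the upper bound of Theorem \ref{theorem_linear_mapping_lr_norm} by showing the blow-up in $(d/p)^\alpha$ is genuinely unavoidable, not merely an artifact of the proof technique. Hence linear random projection as a generic tool can only achieve the JL guarantee in $\ell_2$.

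The main obstacle is the impossibility half: the upper-bound part is a one-line consequence of Theorem \ref{theorem_linear_mapping_lr_norm}, but showing that no linear map does better for $r \neq 2$ requires the deep combinatorial lower bounds of Charikar--Sahai and Brinkman--Charikar, which are not reproved here. I would therefore frame the proof as a direct calculation pinning down the $r=2$ sweet spot, and cite the external impossibility results rather than attempting to reprove them, which keeps the argument self-contained modulo the referenced lower bounds.
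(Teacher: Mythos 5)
Your proposal is correct and follows essentially the same route as the paper's own proof: instantiate Theorem \ref{theorem_linearmapping_lr_norm} at $r=2$, where the exponent $|1/r-1/2|$ vanishes and the distortion is constant, and observe that for $r \neq 2$ the factor $(d/p)^{|1/r-1/2|}$ is unbounded when $p \ll d$ (the paper carries this out only for the $r=1$ example, and in fact contains a typo writing $(p/d)^{1/2}$ on both sides of its sandwich). Your extra step of separating the ``difficulty'' (which follows from the theorem's upper bound) from the genuine ``impossibility'' (which needs the external lower bounds of \cite{charikar2002dimension,brinkman2005impossibility}) is a more careful reading than the paper gives, but it is the same argument, not a different one.
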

\begin{proof}
According to Theorem \ref{theorem_linear_mapping_lr_norm}, the local distances of points after projection is completely sandwiched between their local distances in the input space before projection; hence, $\|\b{U}^\top (\b{x}_i - \b{x}_j)\|_2 \approx \|\b{x}_i - \b{x}_j\|_2$. This shows that embedding into $\ell_2$ norm works very well, with constant distortion, in terms of local distance preservation. Consider $\ell_1$ as an example for $\ell_r$ with $r \neq 2$. In this case, we have $(p/d)^{1/2} \|\b{x}_i - \b{x}_j\|_1 \leq \|\b{U}^\top (\b{x}_i - \b{x}_j)\|_1 \leq (p/d)^{1/2} \|\b{x}_i - \b{x}_j\|_1$ which is not a constant distortion. Q.E.D.
\end{proof}

\begin{definition}[RIP-1 {\citep[Definition 8]{berinde2008combining}}]
For $p \geq c\, \epsilon^{-2} k \ln(d)$, there exists a linear mapping $f: \mathbb{R}^d \rightarrow \mathbb{R}^p$, $f: \b{x} \mapsto \b{U}^\top \b{x}$, with $p \leq d$, which satisfies:
\begin{align}
(1 - 2\epsilon) \|\b{x}\|_1 \leq \|\b{U}^\top \b{x}\|_1 \leq \|\b{x}\|_1,
\end{align}
for all $k$-sparse $\b{x} \in \mathbb{R}^d$, where $\epsilon$ is the error tolerance and $c$ is a constant. 
If this holds for a matrix $\b{U}$, it has the Restricted Isometry Property-1 (RIP-1).
\end{definition}

We saw that in $\ell_1$ norm, random projection is hard to work well and in $\ell_2$ norm, it is not sparse. Hence, we need some norm in between $\ell_1$ and $\ell_2$ norms to have sparse random projection which performs well enough and is not that hard to be performed. 
In the following, we define an interpolation norm. 
\begin{definition}[Interpolation Norm {\citep[Appendix]{krahmer2016unified}}]
Suppose we rearrange the elements of data $\b{x} \in \mathbb{R}^d$ from largest to smallest values. 
We divide the support of $\b{x}$ into $s$ disjoint subsets $\{\mathcal{S}_i\}_{i=1}^s$ where $\mathcal{S}_1$ corresponds to the largest values of $\b{x}$. Let $\b{x}_{\mathcal{S}_i}$ denote the values of $\b{x}$ which fall in $\mathcal{S}_i$ represented as a vector. 
The interpolation norm, denoted by $\ell_{1,2,s}$, is defined as:
\begin{align}
\|\b{x}\|_{1,2,s} := \sqrt{\sum_{i=1}^{\lceil n/s \rceil} \|\b{x}_{\mathcal{S}_i}\|_1^2}.
\end{align}
\end{definition}
Note that if $s=1$ and $s=d$, we have $\|\b{x}\|_{1,2,1} = \|\b{x}\|_2$ and $\|\b{x}\|_{1,2,d} = \|\b{x}\|_1$, respectively. Also, for any $s$, if $\b{x}$ is $k$-sparse, we have $\|\b{x}\|_{1,2,s} = \|\b{x}\|_1$. 

The following theorem is a version of JL lemma which makes use the interpolation norm. 
\begin{theorem}[{\citep[Theorem 2]{krahmer2016unified}}]\label{theorem_linear_mapping_interpolation_norm}
There exists the mapping $f: \mathbb{R}^d \rightarrow \mathbb{R}^p$, $f: \b{x} \mapsto \b{U}^\top \b{x}$ such that with probability $(1 - 2d e^{-\epsilon^2 p / s})$, we have:
\begin{align}
(0.63 - \epsilon) \|\b{x}\|_{1,2,s} \leq \|\b{U}^\top \b{x}\|_1 \leq (1.63 + \epsilon) \|\b{x}\|_{1,2,s}.
\end{align}
\end{theorem}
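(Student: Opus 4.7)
The plan is to reduce the claim to a per-block statement along the partition that defines the interpolation norm, and then to aggregate the per-block estimates. Sort the coordinates of $\b{x}$ in decreasing magnitude and let $\{\mathcal{S}_i\}$ be the induced partition of the support into blocks of size $s$, with the last block possibly smaller. By the definition of the interpolation norm we have $\|\b{x}\|_{1,2,s}^2 = \sum_i \|\b{x}_{\mathcal{S}_i}\|_1^2$, and each restriction $\b{x}_{\mathcal{S}_i}$ is $s$-sparse in $\mathbb{R}^d$, which will let us invoke RIP-type concentration on each block.

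First I would choose $\b{U}$ to be a random matrix with properly scaled sub-Gaussian (e.g.\ Gaussian of entrywise variance $1/p$) entries and establish a single-vector $\ell_1$-concentration inequality: for any fixed $s$-sparse $\b{v}\in\mathbb{R}^d$,
\[
(c_1-\epsilon)\,\|\b{v}\|_2 \;\le\; \tfrac{1}{\sqrt{p}}\,\|\b{U}^\top \b{v}\|_1 \;\le\; (c_2+\epsilon)\,\|\b{v}\|_2,
\]
with failure probability at most $2e^{-\epsilon^2 p / s}$, where $c_1,c_2$ are absolute constants close to the Gaussian ratio $\sqrt{2/\pi}\approx 0.6366$. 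This is the natural $\ell_1$-analogue of Step~2 of the JL lemma proof: the $1/s$ in the exponent tracks the metric complexity of $s$-sparse vectors (via a covering net of size roughly $e^{O(s)}$), while the constants reflect $\mathbb{E}\|\b{U}^\top\b{v}\|_1=\sqrt{2/\pi}\,\sqrt{p}\,\|\b{v}\|_2$ for Gaussian $\b{U}$.

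Next I would union-bound this event over the at most $d$ blocks $\mathcal{S}_i$; this is exactly what produces the factor $2d$ in the probability $1-2de^{-\epsilon^2 p /s}$ stated in the theorem. On the resulting high-probability event, every $\tfrac{1}{\sqrt{p}}\|\b{U}^\top\b{x}_{\mathcal{S}_i}\|_1$ is sandwiched between $(c_1-\epsilon)\|\b{x}_{\mathcal{S}_i}\|_2$ and $(c_2+\epsilon)\|\b{x}_{\mathcal{S}_i}\|_2$, and since $\b{x}_{\mathcal{S}_i}$ is $s$-sparse the quantities $\|\b{x}_{\mathcal{S}_i}\|_2$ and $\|\b{x}_{\mathcal{S}_i}\|_1/\sqrt{s}$ are comparable, so we can reexpress the estimate in terms of either norm.

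The remaining step, and the main obstacle, is aggregating these per-block estimates into a statement about $\|\b{U}^\top\b{x}\|_1 = \|\sum_i \b{U}^\top \b{x}_{\mathcal{S}_i}\|_1$. A naive triangle inequality produces the $\ell_1$-combination $\sum_i \|\b{x}_{\mathcal{S}_i}\|_2$ of block norms, whereas the theorem demands the $\ell_2$-combination $\sqrt{\sum_i \|\b{x}_{\mathcal{S}_i}\|_1^2}=\|\b{x}\|_{1,2,s}$. Closing this gap requires exploiting sign cancellation between the block contributions — concretely, a Khintchine- or second-moment-type computation on $\|\sum_i \b{U}^\top \b{x}_{\mathcal{S}_i}\|_1$ together with the per-block concentration above, or the aggregation lemma of \cite{krahmer2016unified}. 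The slack introduced in passing from the $\ell_1$-aggregation of block contributions to the $\ell_2$-aggregation is what converts the Gaussian constant $\sqrt{2/\pi}\approx 0.6366$ into the stated lower constant $0.63$ and dually produces $1.63\approx 1+2/\pi$ on the upper side, up to the $\pm\epsilon$ correction.
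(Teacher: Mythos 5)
The paper itself gives no proof of this theorem --- it is quoted from the cited reference with the proof explicitly omitted --- so your attempt stands or falls on its own. It falls, and the decisive problem is your choice of $\b{U}$ as a dense Gaussian matrix. For such a matrix, $\|\b{U}^\top\b{x}\|_1=\sum_{t=1}^p|\b{u}_t^\top\b{x}|$ is a sum of $p$ i.i.d.\ folded Gaussians of scale proportional to $\|\b{x}\|_2$, so after any fixed normalization it concentrates around a constant multiple of $\|\b{x}\|_2$ --- a quantity that depends on $\b{x}$ only through its $\ell_2$ norm. But the ratio $\|\b{x}\|_{1,2,s}/\|\b{x}\|_2$ ranges over $[1,\sqrt{s}]$: it equals $1$ for $\b{x}=\b{e}_1$ and $\sqrt{s}$ for a flat $s$-sparse vector. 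Hence no scaling of a dense Gaussian matrix can satisfy the two-sided bound $(0.63-\epsilon)\|\b{x}\|_{1,2,s}\le\|\b{U}^\top\b{x}\|_1\le(1.63+\epsilon)\|\b{x}\|_{1,2,s}$ for both test vectors once $s\gtrsim 7$; the statement is simply false for the matrix you propose, and the construction in the cited work must be a structured or sparse random matrix whose $\ell_1$ response distinguishes vectors of equal $\ell_2$ norm but different interpolation norms. Relatedly, your per-block lemma controls $\|\b{U}^\top\b{x}_{\mathcal{S}_i}\|_1$ by multiples of $\|\b{x}_{\mathcal{S}_i}\|_2$, and the passage to $\|\b{x}_{\mathcal{S}_i}\|_1$ that you wave through as ``comparable'' costs exactly the factor $\sqrt{s}$ that the interpolation norm is designed to capture, so absolute constants like $0.63$ and $1.63$ cannot survive that substitution.

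The second gap is the one you name yourself: aggregating the per-block estimates into a bound on $\bigl\|\sum_i\b{U}^\top\b{x}_{\mathcal{S}_i}\bigr\|_1$ featuring the $\ell_2$-combination $\sqrt{\sum_i\|\b{x}_{\mathcal{S}_i}\|_1^2}$ rather than the $\ell_1$-combination is the entire content of the theorem, and you defer it to ``a Khintchine-type computation or the aggregation lemma of the reference.'' As written, the proposal delivers only the routine union bound (which does plausibly account for the factor $2d$ and the rate $e^{-\epsilon^2 p/s}$) while leaving open the one step that distinguishes $\|\cdot\|_{1,2,s}$ from $\|\cdot\|_1$ or $\|\cdot\|_2$. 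A smaller slip worth fixing: $\sqrt{2/\pi}\approx 0.80$, whereas $0.6366\approx 2/\pi$, so your own accounting of where the constant $0.63$ originates is internally inconsistent.
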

For $s=1$ ($\ell_2$ norm) and $s=d$ ($\ell_1$ norm), Theorem \ref{theorem_linear_mapping_interpolation_norm} is reduced to Definition \ref{definition_concentration_of_measure} (random projection) and Theorem \ref{theorem_linear_mapping_lr_norm} with $\ell_1$ norm, respectively. 
For $s$-sparse $\b{x}$, Theorem \ref{theorem_linear_mapping_interpolation_norm} is reduced to embedding into a subspace equipped with $\ell_1$ norm. 
According to \cite{krahmer2016unified}, in $s=d$ case ($\ell_1$ norm) in Theorem \ref{theorem_linear_mapping_interpolation_norm}, we have $p \geq c d \ln(d)$ where $c$ is a constant. This contradicts $p \leq d$ so this coincides with Corollary \ref{corollary_impossible_l1_embedding}. 
However, for the $s$-sparse data in Theorem \ref{theorem_linear_mapping_interpolation_norm}, we need $p = \mathcal{O}(s \ln(d)) < \mathcal{O}(d)$ \cite{krahmer2016unified}. This shows that we can embed data onto a subspace equipped with $\ell_1$ norm, using linear projection, if the data are sparse enough. 

It is shown in \cite{baraniuk2006johnson} that the sparse random projection and JL lemma are related to compressed sensing \cite{donoho2006compressed}.
Some other works on sparse random projection are sparse Cauchy random projections \cite{li2007nonlinear,ramirez2012reconstruction}, Bernoulli random projection \cite{achlioptas2003database}, and very sparse random projection \cite{li2006very}. 
The sparse random projection can also be related to random projection on hypercubes {\citep[Chapter 7.2]{vempala2005random}}. This will be explained in Section \ref{section_random_projection_hypercube}. 

\section{Applications of Linear Random Projection}\label{section_applications_of_random_projection}

There exists various applications for random projection. Two moet well-known applications are low-rank matrix approximation and approximate nearest neighbor search which we explain in the following. 

\subsection{Low-Rank Matrix Approximation Using Random Projection}


One of the applications of random projection is low-rank approximation of a matrix which we explain in the following.
According to the Eckart-Young-Mirsky theorem \cite{eckart1936approximation}, Singular Value Decomposition (SVD) can be used for low-rank approximation of a matrix. Consider the matrix $\b{X} \in \mathbb{R}^{d \times n}$ where $d \geq n$. 
The time complexity for SVD of this matrix $\mathcal{O}(d n^2)$ which grows if the matrix is large. We can improve the time complexity to $\mathcal{O}(d n \ln(n))$ using random projections \cite{papadimitriou2000latent}. In this technique, the low-rank approximation is not completely optimal but good enough. 
This method has two steps: First, it finds a smaller matrix $\b{Y}$ by random projection:
\begin{align}\label{equation_lowRankApprox_Y_projection}
\mathbb{R}^{p \times n} \ni \b{Y} := \frac{1}{\sqrt{p}} \b{U}^\top \b{X},
\end{align}
where $\b{U} \in \mathbb{R}^{d \times p}$ is the random projection matrix whose elements are independently drawn from standard normal distribution. Note that $p$ should satisfy Eq. (\ref{equation_JL_lemma_p_lower_bound}); for example, $p \geq c \ln(n) / \epsilon^2$ where $c$ is a positive constant.
As we have $p \ll d$, the matrix $\b{Y}$ is much smaller than the matrix $\b{X}$. We calculate the SVD of $\b{Y}$:
\begin{align}\label{equation_lowRankApprox_Y}
\b{Y} = \b{A} \b{\Lambda} \b{B}^\top = \sum_{i=1}^p \lambda_i \b{a}_i \b{b}_i^\top,
\end{align}
where $\b{A} = [\b{a}_1, \dots, \b{a}_p] \in \mathbb{R}^{p \times p}$ and $\b{B} = [\b{b}_1, \dots, \b{b}_p] \in \mathbb{R}^{n \times p}$ are the matrices of singular vectors and $\b{\Lambda} = \textbf{diag}([\lambda_1, \dots, \lambda_p]^\top) \in \mathbb{R}^{p \times p}$ contains the singular values. 
Note that the SVD of $\b{Y}$ is much faster than the SVD of $\b{X}$ because of the smaller size of matrix. 
The matrix $\b{X}$ can be approximated as its projection as:
\begin{align}\label{equation_low_rank_approx_X}
\mathbb{R}^{d \times n} \ni \widetilde{\b{X}}_p \approx \b{X} \Big(\sum_{i=1}^p \b{b}_i \b{b}_i^\top\Big).
\end{align}
Note that the right singular values $\b{b}_i$'s, which are used here, are equal to the eigenvectors of $\b{Y}^\top \b{Y}$ \cite{ghojogh2019unsupervised}. 
The rank of Eq. (\ref{equation_low_rank_approx_X}) is $p$ because $p \ll d, n$; hence, it is a low-rank approximation of $\b{X}$. 
In the following, we show why this is a valid low-rank approximation.

\begin{lemma}[{\citep[Lemma 3]{papadimitriou2000latent}}]
Let the SVD of $\b{X}$ be $\b{X} = \b{C} \b{\Sigma} \b{E}^\top = \sum_{i=1}^{d} \sigma_i \b{c}_i \b{e}_i^\top$ where $\b{C} = [\b{c}_1, \dots, \b{c}_{d}] \in \mathbb{R}^{d \times d}$, $\b{E} = [\b{e}_1, \dots, \b{e}_{d}] \in \mathbb{R}^{n \times d}$, and $\b{\Sigma} = \textbf{diag}([\sigma_1, \dots, \sigma_{d}]^\top) \in \mathbb{R}^{d \times d}$ are the left singular vectors, right singular vectors, and singular values, respectively. 
Also, let the SVD of $\b{X}$ with top $p$ singular values be $\b{X}_p := \sum_{i=1}^{p} \sigma_i \b{c}_i \b{e}_i^\top$. 
If $p \geq c \ln(n) / \epsilon^2$, the singular values of $\b{Y}$ are not much smaller than the singular values of $\b{X}$, i.e.:
\begin{align}\label{equation_lowRankApprox_lambda_squared_geq}
\sum_{i=1}^p \lambda_i^2 \geq (1 - \epsilon) \sum_{i=1}^p \sigma_i^2 = (1 - \epsilon) \|\b{X}_p\|_F^2,
\end{align}
where $\|.\|_F$ is the Frobenius norm. 
\end{lemma}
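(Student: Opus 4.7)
The plan is to express $\|\b{Y}\|_F^2$ in terms of the SVD of $\b{X}$ and then apply the concentration of measure / JL bound to the top $p$ left singular vectors of $\b{X}$. Substituting $\b{X}=\sum_{i=1}^{d}\sigma_i\b{c}_i\b{e}_i^\top$ into $\b{Y}=(1/\sqrt{p})\b{U}^\top\b{X}$ gives $\b{Y}=\sum_{i=1}^{d}\sigma_i\b{v}_i\b{e}_i^\top$, where $\b{v}_i:=(1/\sqrt{p})\b{U}^\top\b{c}_i\in\mathbb{R}^{p}$. Because the right singular vectors $\{\b{e}_i\}$ are orthonormal, cross-terms vanish in the Frobenius inner product, and one obtains the clean identity
\begin{align*}
\|\b{Y}\|_F^2=\sum_{i=1}^{d}\sigma_i^2\,\|\b{v}_i\|_2^2\geq \sum_{i=1}^{p}\sigma_i^2\,\|\b{v}_i\|_2^2.
\end{align*}

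Next I would use that $\b{Y}$ is $p\times n$ with $p\leq n$, so it has at most $p$ nonzero singular values and therefore $\sum_{i=1}^{p}\lambda_i^2=\|\b{Y}\|_F^2$. The remaining task is to lower-bound each $\|\b{v}_i\|_2^2$ for $i=1,\dots,p$. This is exactly what the concentration-of-measure inequality in Definition \ref{definition_concentration_of_measure} (equivalently, the JL lemma applied to the set $\{\b{0},\b{c}_1,\dots,\b{c}_p\}$) yields: since each $\b{c}_i$ is a unit vector, for the stated scaling of $\b{U}$ we have $\mathbb{P}(\|\b{v}_i\|_2^2<(1-\epsilon))\leq 2e^{-c\epsilon^2 p}$. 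A union bound over the $p$ events gives, with probability at least $1-2p\,e^{-c\epsilon^2 p}$, the simultaneous lower bound $\|\b{v}_i\|_2^2\geq 1-\epsilon$ for all $i=1,\dots,p$.

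Chaining these pieces gives $\sum_{i=1}^{p}\lambda_i^2=\|\b{Y}\|_F^2\geq(1-\epsilon)\sum_{i=1}^{p}\sigma_i^2=(1-\epsilon)\|\b{X}_p\|_F^2$, which is Eq. (\ref{equation_lowRankApprox_lambda_squared_geq}). The hypothesis $p\geq c\ln(n)/\epsilon^2$ is used at the union-bound step: it makes the failure probability $2p\,e^{-c\epsilon^2 p}=\mathcal{O}(\ln(n)/n^{c'})$ vanish as $n\to\infty$, so the inequality holds with high probability.

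The main obstacle I anticipate is notational rather than conceptual: one must be careful that the JL-style bound is being applied to \emph{norms} (vectors compared against $\b{0}$) rather than to pairwise distances within the data set $\mathcal{X}$, and that the scaling $1/\sqrt{p}$ baked into $\b{Y}$ matches the variance normalization in the form of JL/concentration being invoked. Once those bookkeeping points are aligned, the argument is essentially a one-line identity on $\|\b{Y}\|_F^2$ followed by a union bound over the top $p$ left singular vectors of $\b{X}$.
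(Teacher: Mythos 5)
Your proof is correct. Note that the paper itself does not prove this lemma --- its ``proof'' merely refers the reader to the appendix of \cite{papadimitriou2000latent} --- so there is no in-text argument to compare against; what you have written is essentially the standard argument from that reference, and it fills a gap the paper leaves open. The two points that make it work are exactly the ones you isolate: (i) since $\b{Y}\in\mathbb{R}^{p\times n}$ has rank at most $p$, the quantity $\sum_{i=1}^p\lambda_i^2$ is all of $\|\b{Y}\|_F^2$, which by orthonormality of the $\b{e}_i$ equals $\sum_{i=1}^{d}\sigma_i^2\|\b{v}_i\|_2^2\geq\sum_{i=1}^{p}\sigma_i^2\|\b{v}_i\|_2^2$; and (ii) each $\b{c}_i$ is a fixed unit vector independent of $\b{U}$, so $\|\b{v}_i\|_2^2$ is distributed as $(1/p)\chi_p^2$ and the lower-tail bound from Step 2 of the JL proof gives $\|\b{v}_i\|_2^2\geq 1-\epsilon$ simultaneously for $i=1,\dots,p$ after a union bound over only $p$ events, which the hypothesis $p\geq c\ln(n)/\epsilon^2$ makes hold with high probability. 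Your observation (i) is a nice shortcut: it replaces the variational characterization $\sum_{i=1}^{p}\lambda_i^2=\max_{P}\|\b{Y}P\|_F^2\geq\big\|\b{Y}\sum_{i=1}^{p}\b{e}_i\b{e}_i^\top\big\|_F^2$ (maximum over rank-$p$ orthogonal projections $P$) that one would otherwise invoke. One cosmetic remark: only the lower tail of the concentration inequality is needed, so the factor $2$ in your per-vector failure probability is harmless but unnecessary.
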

\begin{proof}
Refer to {\citep[Appendix]{papadimitriou2000latent}}.
\end{proof}

\begin{theorem}[{\citep[Theorem 5]{papadimitriou2000latent}}]
The low-rank approximation $\widetilde{\b{X}}$ approximates $\b{X}$ well enough:
\begin{align}
\|\b{X} - \widetilde{\b{X}}_p\|_F^2 \leq \|\b{X} - \b{X}_p\|_F^2 + 2 \epsilon \|\b{X}_p\|_F^2.
\end{align}
\end{theorem}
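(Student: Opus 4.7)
The plan is to reduce the claim to the singular-value bound from the preceding lemma. First, I notice that $\b{P} := \sum_{i=1}^{p}\b{b}_i\b{b}_i^\top$ is an orthogonal projector, since the right singular vectors $\b{b}_i$ of $\b{Y}$ are orthonormal, so $\widetilde{\b{X}}_p = \b{X}\b{P}$ and the Pythagorean identity gives
\begin{align*}
\|\b{X}-\widetilde{\b{X}}_p\|_F^2 = \|\b{X}(\b{I}-\b{P})\|_F^2 = \|\b{X}\|_F^2 - \|\b{X}\b{P}\|_F^2,
\end{align*}
while the same decomposition applied to the optimal rank-$p$ SVD truncation gives $\|\b{X}-\b{X}_p\|_F^2 = \|\b{X}\|_F^2 - \|\b{X}_p\|_F^2$. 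Subtracting these two expressions, the theorem is algebraically equivalent to the single lower bound
\begin{align*}
\|\b{X}\b{P}\|_F^2 \geq (1-2\epsilon)\,\|\b{X}_p\|_F^2.
\end{align*}

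For that estimate, the plan is to split $\|\b{X}\b{P}\|_F^2 = \sum_{i=1}^{p}\|\b{X}\b{b}_i\|_2^2$ and transfer each norm to the corresponding singular value $\lambda_i = \|\b{Y}\b{b}_i\|_2$ of $\b{Y}$ using the Johnson-Lindenstrauss concentration of Definition \ref{definition_concentration_of_measure}. Since $\b{Y} = (1/\sqrt{p})\b{U}^\top \b{X}$ with i.i.d.\ standard Gaussian entries of $\b{U}$, that concentration delivers $\|\b{Y}\b{v}\|_2^2 \leq (1+\epsilon)\|\b{X}\b{v}\|_2^2$, equivalently $\|\b{X}\b{v}\|_2^2 \geq \|\b{Y}\b{v}\|_2^2/(1+\epsilon)$. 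Applied at $\b{v}=\b{b}_i$ and summed,
\begin{align*}
\|\b{X}\b{P}\|_F^2 \geq \frac{1}{1+\epsilon}\sum_{i=1}^{p}\lambda_i^2 \geq \frac{1-\epsilon}{1+\epsilon}\|\b{X}_p\|_F^2 \geq (1-2\epsilon)\|\b{X}_p\|_F^2,
\end{align*}
where the middle inequality is exactly Eq.~(\ref{equation_lowRankApprox_lambda_squared_geq}) from the preceding lemma, and the last uses the elementary fact $(1-\epsilon)/(1+\epsilon) \geq 1-2\epsilon$ for $\epsilon \in (0,1)$. Combined with the reduction above, this gives the desired bound.

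The main obstacle is the pointwise application of the JL concentration to the $\b{b}_i$, which are themselves functions of the random matrix $\b{U}$ through the SVD of $\b{Y}$, so the concentration inequality stated for a fixed target vector does not apply verbatim. The standard remedy is to upgrade JL from fixed vectors to a uniform subspace embedding: because each $\b{b}_i$ lies in the row span of $\b{Y}$, which is contained in the row span of $\b{X}$, a covering-net argument over the unit sphere of that fixed finite-dimensional subspace (with the union-bound factor absorbed into the constant $c$ in $p \geq c\ln(n)/\epsilon^2$) extends the two-sided concentration to hold simultaneously at every vector of the subspace. Once this $\epsilon$-subspace embedding is in hand, the dependence of $\b{b}_i$ on $\b{U}$ becomes harmless, and the chain of inequalities above closes with the claimed success probability.
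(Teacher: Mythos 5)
Your proof follows essentially the same route as the paper: the same Pythagorean decomposition reducing the claim to $\sum_{i=1}^p\|\b{X}\b{b}_i\|_2^2 \geq (1-2\epsilon)\|\b{X}_p\|_F^2$, the same transfer to $\sum_{i=1}^p\lambda_i^2$ via the JL concentration applied at $\b{v}=\b{b}_i$, and the same combination of the preceding lemma with $(1-\epsilon)/(1+\epsilon)\geq 1-2\epsilon$. Your closing observation that the $\b{b}_i$ depend on the random matrix $\b{U}$ (so the fixed-vector JL bound does not apply verbatim) identifies a subtlety the paper's proof silently glosses over, and the subspace-embedding/covering-net fix you sketch is the standard way to make that step rigorous.
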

\begin{proof}
Consider the complete SVD with $\b{A} = [\b{a}_1, \dots, \b{a}_p] \in \mathbb{R}^{p \times p}$, $\b{B} = [\b{b}_1, \dots, \b{b}_{n}] \in \mathbb{R}^{n \times n}$, and $\b{\Lambda} \in \mathbb{R}^{p \times n}$.
We have:
\begin{align}
&\|\b{X} - \widetilde{\b{X}}_p\|_F^2 \overset{(a)}{=} \sum_{i=1}^{n} \|(\b{X} - \widetilde{\b{X}}_p) \b{b}_i\|_2^2 \nonumber\\
&= \sum_{i=1}^{n} \|\b{X} \b{b}_i - \widetilde{\b{X}}_p \b{b}_i \|_2^2 
\nonumber\\
&\overset{(\ref{equation_low_rank_approx_X})}{=} \sum_{i=1}^{n} \|\b{X} \b{b}_i - \b{X} \big(\sum_{i=1}^p \b{b}_i \underbrace{\b{b}_i^\top\big) \b{b}_i}_{=1}\|_2^2 \overset{(b)}{=} \sum_{i=p+1}^{n} \|\b{X} \b{b}_i\|_2^2 \nonumber\\
&= \|\b{X}\|_F^2 - \sum_{i=1}^p \|\b{X} \b{b}_i\|_2^2, \label{equation_lowRankApprox_X_minus_Xtilde}
\end{align}
where $(a)$ and $(b)$ are because the singular vectors $\{\b{a}_i\}$ and $\{\b{b}_i\}$ are orthonormal. 
Similarly, we can show:
\begin{align}
\|\b{X} - \b{X}_p\|_F^2 = \|\b{X}\|_F^2 - \|\b{X}_p\|_F^2. \label{equation_lowRankApprox_X_minus_Xp}
\end{align}
From Eqs. (\ref{equation_lowRankApprox_X_minus_Xtilde}) and (\ref{equation_lowRankApprox_X_minus_Xp}), we have:
\begin{align}\label{equation_lowRankApprox_X_minus_Xp_tilde_equals_sth}
\|\b{X} - \widetilde{\b{X}}_p\|_F^2 = \|\b{X} - \b{X}_p\|_F^2 + \|\b{X}_p\|_F^2 - \sum_{i=1}^p \|\b{X} \b{b}_i\|_2^2.
\end{align}
We also have:
\begin{align}
&\sum_{i=1}^p \|\b{Y} \b{b}_i\|_2^2 = \sum_{i=1}^p \b{b}_i^\top \b{Y}^\top \b{Y} \b{b}_i \nonumber\\
&\overset{(\ref{equation_lowRankApprox_Y})}{=} \sum_{i=1}^p \b{b}_i^\top \sum_{j=1}^p \lambda_j \b{b}_j \b{a}_j^\top \sum_{k=1}^p \lambda_k \b{a}_k \b{b}_k^\top \b{b}_i \nonumber\\
&= \sum_{i=1}^p \sum_{j=1}^p \sum_{k=1}^p \lambda_j \lambda_k \underbrace{\b{b}_i^\top \b{b}_j}_{=\delta_{ij}} \underbrace{\b{a}_j^\top \b{a}_k}_{=\delta_{jk}} \underbrace{\b{b}_k^\top \b{b}_i}_{=\delta_{ki}} \overset{(a)}{=} \sum_{i=1}^p \lambda_i^2, \label{equation_lowRankApprox_sum_lambda_squared}
\end{align}
where $\delta_{ij}$ is the Kronecker delta and $(a)$ is because the singular vectors $\{\b{a}_i\}$ and $\{\b{b}_i\}$ are orthonormal. 
On the other hand, we have:
\begin{align}
&\sum_{i=1}^p \|\b{Y} \b{b}_i\|_2^2 \overset{(\ref{equation_lowRankApprox_Y_projection})}{=} \sum_{i=1}^p \|\frac{1}{\sqrt{p}} \b{U}^\top \b{X} \b{b}_i\|_2^2 \nonumber \\
&\overset{(\ref{equation_lowRankApprox_sum_lambda_squared})}{\implies} \lambda_i^2 = \sum_{i=1}^p \|\frac{1}{\sqrt{p}} \b{U}^\top \b{X} \b{b}_i\|_2^2. \label{equation_lowRankApprox_lambda_squared}
\end{align}
According to Eq. (\ref{equation_JL_lemma_correct_projection}), with high probability, we have:
\begin{align}\label{equation_lowRankApprox_JL_inequality}
(1-\epsilon) \|\b{X}\b{b}_i\|_2^2 \leq \|\frac{1}{\sqrt{p}} \b{U}^\top \b{X} \b{b}_i\|_2^2 \leq (1+\epsilon) \|\b{X}\b{b}_i\|_2^2.
\end{align}
From Eqs. (\ref{equation_lowRankApprox_lambda_squared}) and (\ref{equation_lowRankApprox_JL_inequality}), with high probability, we have:
\begin{align*}
&\lambda_i^2 \leq (1+\epsilon) \|\b{X}\b{b}_i\|_2^2 \\
&\implies \sum_{i=1}^p (1+\epsilon) \|\b{X}\b{b}_i\|_2^2 \geq \sum_{i=1}^p \lambda_i^2 \overset{(\ref{equation_lowRankApprox_lambda_squared_geq})}{\geq} (1 - \epsilon) \|\b{X}_p\|_F^2 \\
&\implies \sum_{i=1}^p \|\b{X}\b{b}_i\|_2^2 \geq \frac{1-\epsilon}{1+\epsilon} \|\b{X}_p\|_F^2 \geq (1-2\epsilon) \|\b{X}_p\|_F^2 \\
&\implies \|\b{X}_p\|_F^2 - \sum_{i=1}^p \|\b{X} \b{b}_i\|_2^2 \leq 2\epsilon\, \|\b{X}_p\|_F^2 \\
&\overset{(\ref{equation_lowRankApprox_X_minus_Xp_tilde_equals_sth})}{\implies} \|\b{X} - \widetilde{\b{X}}_p\|_F^2 \leq \|\b{X} - \b{X}_p\|_F^2 + 2\epsilon\, \|\b{X}_p\|_F^2.
\end{align*}
Q.E.D.
\end{proof}

\begin{lemma}[\cite{papadimitriou2000latent}]
The time complexity of low-rank approximation (with rank $p$) of matrix $\b{X}$ using random projection is $\mathcal{O}(d n \ln(n))$ which is much better than the complexity of SVD on $\b{X}$ which is $\mathcal{O}(d n^2)$. 
\end{lemma}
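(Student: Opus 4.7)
The plan is to bound the cost of each of the three substeps of the randomized low-rank approximation procedure and then observe that the dominant term is $\mathcal{O}(dn\ln n)$. Throughout I treat $\epsilon$ as a constant, so by Eq. (\ref{equation_JL_lemma_p_lower_bound_relaxed}) the target dimension satisfies $p=\Theta(\epsilon^{-2}\ln n)=\Theta(\ln n)$, and I use the standard $\mathcal{O}(mnk)$ cost for multiplying an $m\times k$ matrix by a $k\times n$ matrix and the textbook $\mathcal{O}(m n\min(m,n))$ cost for the thin SVD of an $m\times n$ matrix.

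First I would bound the cost of the random projection Eq. (\ref{equation_lowRankApprox_Y_projection}): forming $\b{Y}=\tfrac{1}{\sqrt p}\b{U}^\top\b{X}$ is a product of a $p\times d$ matrix with a $d\times n$ matrix, which costs $\mathcal{O}(pdn)=\mathcal{O}(dn\ln n)$. Next, Eq. (\ref{equation_lowRankApprox_Y}) requires the thin SVD of $\b{Y}\in\mathbb{R}^{p\times n}$, and since $p\ll n$ this costs $\mathcal{O}(p^2 n)=\mathcal{O}(n\ln^2 n)$, which is dominated by $\mathcal{O}(dn\ln n)$ because $d\ge n\ge \ln n$. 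Finally I would argue that Eq. (\ref{equation_low_rank_approx_X}) must \emph{not} be evaluated by first assembling the $n\times n$ matrix $\sum_{i=1}^p\b{b}_i\b{b}_i^\top$ (which would be $\mathcal{O}(pn^2)$ and ruin the bound); instead, letting $\b{B}_p:=[\b{b}_1,\dots,\b{b}_p]\in\mathbb{R}^{n\times p}$, we compute $\widetilde{\b{X}}_p=(\b{X}\b{B}_p)\b{B}_p^\top$ as two matrix products of costs $\mathcal{O}(dnp)$ and $\mathcal{O}(dnp)$, i.e.\ $\mathcal{O}(dn\ln n)$ in total.

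Summing the three contributions gives $\mathcal{O}(dn\ln n)+\mathcal{O}(n\ln^2 n)+\mathcal{O}(dn\ln n)=\mathcal{O}(dn\ln n)$, which is the claimed bound. For the comparison, recall that the thin SVD of $\b{X}\in\mathbb{R}^{d\times n}$ with $d\ge n$ costs $\mathcal{O}(dn\min(d,n))=\mathcal{O}(dn^2)$, and the ratio of the two bounds is $\mathcal{O}(n/\ln n)$, confirming the improvement.

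The main obstacle, and really the only subtle point, is the bracketing order in the third step: a naive reading of Eq. (\ref{equation_low_rank_approx_X}) as ``form the projector, then multiply'' costs $\Theta(pn^2)$ and already exceeds $\mathcal{O}(dn\ln n)$ as soon as $n\gtrsim d/\ln n$. Once one observes that right-to-left multiplication keeps every intermediate object of size at most $d\times p$ or $p\times n$, all three stages fit into $\mathcal{O}(dn\ln n)$ and the lemma follows.
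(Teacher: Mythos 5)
Your proposal is correct and follows essentially the same route as the paper: tally the cost of each stage, note that the matrix product $\b{U}^\top\b{X}$ at $\mathcal{O}(dnp)$ dominates, and substitute $p=\Theta(\ln n)$ to get $\mathcal{O}(dn\ln n)$ versus $\mathcal{O}(dn^2)$ for a direct SVD. The one substantive difference is that the paper's proof counts only the projection and the SVD of $\b{Y}$ and silently omits the cost of forming $\widetilde{\b{X}}_p$; your point that Eq. (\ref{equation_low_rank_approx_X}) must be evaluated as $(\b{X}\b{B}_p)\b{B}_p^\top$ rather than by assembling and applying the $n\times n$ projector (which would cost $\mathcal{O}(dn^2)$ and destroy the bound) is a correct and genuinely needed addition, though your stated threshold for when the naive bracketing fails is slightly off --- under the paper's standing assumption $d\ge n$ it is the $\mathcal{O}(dn^2)$ application of the projector, not the $\Theta(pn^2)$ cost of forming it, that breaks the bound.
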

\begin{proof}
Complexities of Eqs. (\ref{equation_lowRankApprox_Y_projection}) and (\ref{equation_lowRankApprox_Y}) are $\mathcal{O}(d n p)$ and $\mathcal{O}(n p^2)$, respectively, noticing that $p \ll n \leq d$. We also have $p \geq c \ln(n) / \epsilon^2$ where $c$ is a constant. Hence, the total complexity is $\mathcal{O}(d n p) + \mathcal{O}(n p^2) \in \mathcal{O}(d n p) = \mathcal{O}(d n \ln(n))$. Q.E.D.
\end{proof}

Some other works, such as \cite{frieze2004fast,martinsson2007fast}, have improved the time complexity of using random projection for low-rank applications.
For more information on using random projection for low-rank approximation, one can refer to {\citep[Chapter 8]{vempala2005random}}. 

\subsection{Approximate Nearest Neighbor Search}

\subsubsection{Random Projection onto Hypercube}\label{section_random_projection_hypercube}

Let $\mathbb{Z}^d_w$ denote the set of $d$-dimensional integer vectors with $w$ possible values; for example, $\mathbb{Z}^d_2 := \{0,1\}^d$ is the set of binary values. Also let $\|\b{x}_i - \b{x}_j\|_H$ denote the Hamming distance between two binary vectors $\b{x}_i$ and $\b{x}_j$. We can have random projection of a binary dataset onto a hypercube $\mathbb{Z}^d_w$ where $\ell_1$ norm or Hamming distances are used \cite{kushilevitz1998algorithm,kushilevitz2000efficient}. 

\begin{theorem}[Random projection onto hypercube \cite{kushilevitz1998algorithm,kushilevitz2000efficient}]\label{theorem_random_projection_hypercube}
Let $\text{mod}$ denote the modulo operation.
Consider a binary vector $\b{x} \in \mathbb{Z}^d_2$ which is projected as $f(\b{x}) = \b{U}^\top \b{x} \text{ mod } 2$ with a random binary projection matrix $\mathbb{R} \in \{0,1\}^{d \times p}$, where $p = O(\ln(n) / \epsilon^2)$ (usually $p \ll d$). The elements of $\b{U}$ are i.i.d. with Bernoulli distribution having probability $\xi = (\epsilon^2 / \ell)$ to be one and probability $(1-\xi)$ to be zero. We have:
\begin{align}
& \text{if }\,\, \|\b{x}_i - \b{x}_j\|_H < \frac{\ell}{4} \nonumber\\
&\quad\quad\quad\implies \|f(\b{x}_i) - f(\b{x}_j)\|_H < (1+\epsilon)p\,\xi\,\frac{\ell}{4}, \\
& \text{if }\,\, \frac{\ell}{4} \leq \|\b{x}_i - \b{x}_j\|_H \leq \frac{\ell}{2\epsilon} \nonumber\\
&\implies (1-\epsilon)p\,\xi\leq\frac{\|f(\b{x}_i) - f(\b{x}_j)\|_H}{\|\b{x}_i - \b{x}_j\|_H} < (1+\epsilon)p\,\xi, \\
& \text{if }\,\, \|\b{x}_i - \b{x}_j\|_H > \frac{\ell}{2\epsilon} \nonumber\\
&\quad\quad\quad\implies \|f(\b{x}_i) - f(\b{x}_j)\|_H > (1-\epsilon)p\,\xi\,\frac{\ell}{2\epsilon},
\end{align}
for all $\b{x}_i, \b{x}_j \in \mathbb{Z}^d_2$, with probability at least $(1-e^{-c \epsilon^4 p})$ where $c$ is a positive constant. 
\end{theorem}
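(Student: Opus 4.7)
The plan is to reduce the whole statement to a Chernoff argument on $p$ independent Bernoulli coin flips, where the success probability depends only on the Hamming weight $h := \|\b{x}_i-\b{x}_j\|_H$. Set $\b{z} := \b{x}_i \oplus \b{x}_j \in \{0,1\}^d$, a vector of Hamming weight $h$. For each column index $k \in \{1,\dots,p\}$ of $\b{U}$, the $k$-th coordinate of $f(\b{x}_i)-f(\b{x}_j) \bmod 2$ equals $\b{u}_k^\top \b{z} \bmod 2 = \bigoplus_{j:z_j=1} u_{jk}$, an XOR of exactly $h$ independent Bernoulli$(\xi)$ variables. A standard identity (or direct induction on $h$) gives
\begin{align*}
q_h := \mathbb{P}\big(\b{u}_k^\top \b{z} \equiv 1 \bmod 2\big) = \tfrac{1}{2}\bigl(1 - (1-2\xi)^h\bigr),
\end{align*}
and because the columns of $\b{U}$ are independent, $\|f(\b{x}_i)-f(\b{x}_j)\|_H$ is a sum of $p$ i.i.d.\ Bernoulli$(q_h)$ random variables with mean $p\,q_h$. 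The whole theorem will follow by (i) bounding $q_h$ in terms of $\xi h$ in each of the three Hamming-distance regimes and (ii) applying a multiplicative Chernoff bound around the mean.

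For step (i), I would use Bernoulli's inequality $(1-2\xi)^h \ge 1 - 2\xi h$ to obtain the clean upper bound $q_h \le \xi h$ that is valid for all $h$, and a matching lower bound via the Taylor estimate $(1-2\xi)^h \le 1 - 2\xi h + \binom{h}{2}(2\xi)^2$, giving $q_h \ge \xi h - h^2\xi^2$. Plugging in $\xi = \epsilon^2/\ell$ and the hypothesis $h \le \ell/(2\epsilon)$ for the middle regime yields $h^2\xi^2 \le (\xi h)\cdot(\epsilon/2)$, so $q_h \in \bigl[(1-\epsilon/2)\xi h,\ \xi h\bigr]$ in that range. For the first regime ($h<\ell/4$) only the upper bound $q_h \le \xi h < \xi\ell/4$ is needed, and for the third regime ($h>\ell/(2\epsilon)$) monotonicity of $q_h$ in $h$ plus the middle-regime estimate at $h=\ell/(2\epsilon)$ gives $q_h \ge (1-\epsilon/2)\,\xi\,\ell/(2\epsilon)$.

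For step (ii), I would apply the standard multiplicative Chernoff bound $\mathbb{P}\bigl(|S-\mu|\ge \tfrac{\epsilon}{2}\mu\bigr) \le 2\exp(-c'\epsilon^2\mu)$ with $S=\|f(\b{x}_i)-f(\b{x}_j)\|_H$ and $\mu=p\,q_h$. The worst case for the exponent is the smallest relevant value of $q_h$, which occurs at the boundary of the first regime: $q_h \asymp \xi\ell/4 = \epsilon^2/4$, giving $\mu \asymp p\epsilon^2/4$ and thus a failure probability of order $\exp(-c\,\epsilon^4 p)$, exactly matching the stated confidence. Combining the $(1\pm\epsilon/2)$ slack from the $q_h$-estimates of step (i) with the $(1\pm\epsilon/2)$ slack from Chernoff then yields the claimed $(1\pm\epsilon)$ factors in each of the three cases.

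The main obstacle is the bookkeeping in step (i) and its matching with step (ii): one must verify that the Taylor approximation for $q_h$ is sharp enough throughout the middle regime, that monotonicity of $q_h$ genuinely transfers the estimate into the third regime (where $q_h$ could be as large as $1/2$, much bigger than $\xi h$, but the theorem only asks for a lower bound), and that the Chernoff exponent is always at least $c\,\epsilon^4 p$. A minor subtlety is that in the third regime the "correct" scale of $\|f(\b{x}_i)-f(\b{x}_j)\|_H$ is $\Theta(p\epsilon)$ rather than $\Theta(p\xi h)$, so one only needs a one-sided lower bound there, which is what the statement reflects.
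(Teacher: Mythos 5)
Your proposal is correct: the reduction to a parity of $h$ i.i.d.\ Bernoulli$(\xi)$ bits per output coordinate, the identity $q_h = \tfrac{1}{2}\bigl(1-(1-2\xi)^h\bigr)$, the regime-by-regime bounds $q_h \le \xi h$ and $q_h \ge (1-\epsilon/2)\xi h$ for $h \le \ell/(2\epsilon)$, and the multiplicative Chernoff bound with worst-case mean $\mu \asymp p\epsilon^2$ giving the $e^{-c\epsilon^4 p}$ failure probability all check out. The paper offers no proof of its own for this theorem (it only cites Kushilevitz--Ostrovsky--Rabani and Vempala/Liberty), and your argument is a faithful, self-contained reconstruction of the standard proof found in those references.
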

\begin{proof}
Proof is available in \cite{vempala2005random,liberty2006random}.
\end{proof}


\subsubsection{Approximate Nearest Neighbor Search by Random Projection}

Consider a dataset $\mathcal{X} := \{\b{x}_i \in \mathbb{R}^d\}_{i=1}^n$. 
The nearest neighbor search problem refers to finding the closest point of dataset $\b{x}^* \in \mathcal{X}$ to a query point $\b{q} \in \mathbb{R}^d$.
One solution to this problem is to calculate the distances of the query point from all $n$ points of dataset and return the point with the smallest distance. However, its time and space complexities are both $\mathcal{O}(nd)$ which are not good. There is an algorithm for nearest neighbor search \cite{meiser1993point} with time complexity $\mathcal{O}(\text{poly}(d, \ln(n)))$ where $\text{ploy}()$ is a polynomial combination of its inputs. However, the space complexity of this algorithm is $\mathcal{O}(n^d)$. 

To have better time and space complexities, approximate nearest neighbor search is used. Approximate nearest neighbor search returns a point $\b{x}^* \in \mathcal{X}$ which satisfies:
\begin{align}
\|\b{q} - \b{x}^*\|_2 \leq (1 + \epsilon)\, (\min_{x \in \mathcal{X}} \|\b{q} - \b{x}\|_2),
\end{align}
where $\epsilon >0$ is the error tolerance. This problem is named $\epsilon$-approximate nearest neighbor search problem \cite{andoni2006near}.
We can relax this definition if we take the acceptable distance $r$ from the user:
\begin{align}\label{equation_approx_knn_with_r}
\|\b{q} - \b{x}^*\|_2 \leq (1 + \epsilon)\, r.
\end{align}
If no such point is found in $\mathcal{X}$, null is returned. 
This relaxation is valid because the smallest $r$ can be found using a binary search whose number of iterations is constant with respect to $d$ and $n$. 

We can use the introduced random projection onto hypercube for $\epsilon$-approximate nearest neighbor search \cite{kushilevitz1998algorithm,kushilevitz2000efficient}. 
We briefly explain this algorithm in the following. 
Assume the dataset is binary, i.e., $\mathcal{X} := \{\b{x}_i \in \mathbb{Z}_2^d\}_{i=1}^n$. If not, the values of vector elements are quantized to binary strings and the values of a vector are reshaped to become a binary vector. 
Let the dimensionality of binary (or quantized and reshaped) vectors is $d$. 
Recall the binary search required for finding the smallest distance $r$ in Eq. (\ref{equation_approx_knn_with_r}). 
In the binary search for finding the smallest distance $r$, we should try several distances. For every distance, we perform $k$ independent random projections onto $k$ hypercubes and then we select one of these random projections randomly. In every random projection, the points $\mathcal{X}$ are projected onto a random $p$-dimensional hypercube to have $\{f(\b{x}_i) \in \mathbb{Z}_2^p\}_{i=1}^n$. 
In the low-dimensional projected subspace, the comparison of points is very fast because (I) the subspace dimensionality $p$ is much less than the original dimensionality $d$ and (II) calculation of Hamming distance is faster and easier than Euclidean distance. Therefore, random projections onto hypercubes are very useful for approximate nearest neighbor search. 

\begin{lemma}
The above algorithm for approximate nearest neighbor search is correct with probability $(1 - \delta)$ where $0< \delta \ll 1$. 
The time complexity of the above algorithm is $\mathcal{O}(\frac{d}{\epsilon^4} \ln(\frac{n}{\delta}) \ln(d))$ and its space complexity is $\mathcal{O}(d^2 (c_1 n \ln(d))^{c_2 / \epsilon^4})$ where $c_1$ and $c_2$ are constants. 
\end{lemma}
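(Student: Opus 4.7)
The plan is to deduce each of the three claims from Theorem \ref{theorem_random_projection_hypercube} plus bookkeeping. The algorithm consists of $O(\ln d)$ binary-search levels for the radius $r$ in Eq. (\ref{equation_approx_knn_with_r}), each of which runs $k$ independent hypercube projections; a single union bound over query-to-database pairs, binary-search levels, and the $k$ repetitions handles correctness, while the per-query cost and per-level preprocessing supply the time and space bounds, respectively.

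For correctness I would fix a level $r$ in the binary search and apply Theorem \ref{theorem_random_projection_hypercube} to each of the $n$ pairs $(\b{q}, \b{x}_i)$: the theorem separates ``close'' ($<\ell/4$) from ``far'' ($>\ell/(2\epsilon)$) Hamming distances via their projected Hamming distances, with per-pair failure probability at most $e^{-c\epsilon^4 p}$. Union-bounding over the $n \cdot O(\ln d) \cdot k$ events and demanding total failure at most $\delta$ forces $p = \Theta(\epsilon^{-4}\ln(n\ln(d)/\delta))$; taking $k = \Theta(\ln(1/\delta))$ then ensures that, with probability at least $1-\delta$, every level produces a projection in which any database point within distance $r$ projects below the relevant Hamming threshold and any point farther than $(1+\epsilon)r$ projects above it, so the returned point automatically satisfies Eq. (\ref{equation_approx_knn_with_r}).

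The time and space bounds follow from these parameters. Each query computes $\b{U}^\top \b{q}$ in time $O(dp)$ per projection and then performs an $O(p)$ hash-table lookup against the preprocessed buckets, repeated over the $O(\ln d)$ binary-search levels, which after substituting $p$ gives $\mathcal{O}\!\left(\tfrac{d}{\epsilon^4}\ln(n/\delta)\ln d\right)$. For space, each of the $O(\ln d)$ levels stores, for each of the $k$ projections, a hash table keyed by the projected images in $\mathbb{Z}_2^p$; the table can hold up to $2^p$ keys, which with $p = \Theta(\epsilon^{-4}\ln(n\ln d))$ contributes the factor $(c_1 n\ln d)^{c_2/\epsilon^4}$, and the $d^2$ absorbs both the per-bucket pointers to the $d$-dimensional database vectors and the logarithmic binary-search overhead. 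The main obstacle is keeping the union bound sharp enough that $p$ remains $\Theta(\epsilon^{-4}\ln(n/\delta))$ without acquiring extra $\ln\ln$ factors, and verifying that the radii probed by the binary search always land in a regime where Theorem \ref{theorem_random_projection_hypercube}'s three-case separation is informative rather than vacuous; neither issue requires a new probabilistic ingredient beyond that theorem.
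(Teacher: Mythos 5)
For this lemma the paper itself offers no argument at all---its ``proof'' is a pointer to \cite{kushilevitz1998algorithm,kushilevitz2000efficient,vempala2005random,liberty2006random}---so you have actually reconstructed the argument the paper only cites, and your reconstruction follows the standard Kushilevitz--Ostrovsky--Rabani route: a binary search over $O(\ln d)$ radii, Theorem \ref{theorem_random_projection_hypercube} applied per level to separate close from far points in Hamming distance, a union bound to set $p = \Theta(\epsilon^{-4}\ln(n/\delta))$, and the observation that precomputing answers for all $2^p$ hypercube points yields the $(c_1 n\ln d)^{c_2/\epsilon^4}$ space while the query cost is dominated by the $O(dp)$ projections across the $O(\ln d)$ levels. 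The arithmetic checks out against the stated bounds at the level of a sketch. The one soft spot is your treatment of the $k$ independent projections per level: you present $k=\Theta(\ln(1/\delta))$ as a routine amplification for a fixed query, but in the cited construction the data structure must serve an \emph{arbitrary} query not known at preprocessing time, so the guarantee needed is that for \emph{every} $\b{q}\in\mathbb{Z}_2^d$ a large fraction of the $k$ projections classify all $n$ database points correctly; establishing this requires either a union bound over all $2^d$ possible queries or a covering argument, which forces $k$ to grow with $d$ (roughly $k=O(d\ln d)$ in the original papers) rather than only with $\ln(1/\delta)$, and this is precisely where the extra $\ln(d)$ factors in the stated time and space bounds come from. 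Your sketch would go through for a query fixed in advance; to match the lemma as an algorithmic guarantee you need to make that ``for all queries'' step explicit.
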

\begin{proof}
Proof is available in \cite{kushilevitz1998algorithm,kushilevitz2000efficient,vempala2005random,liberty2006random}.
\end{proof}


Random projection for approximate nearest neighbor search is also related to hashing; for example, see locality sensitive hashing \cite{slaney2008locality}.
There exist some other works on random projection for approximate nearest neighbor search \cite{indyk1998approximate,ailon2006approximate}. 
For more information on using random projection for approximate nearest neighbor search, one can refer to {\citep[Chapter 7]{vempala2005random}}.

\section{Random Fourier Features and Random Kitchen Sinks for Nonlinear Random Projection}\label{section_RFF_and_RKS}

So far, we explained linear random projection in which a linear projection is used. We can have nonlinear random projection which is a much harder task to analyze theoretically. A nonlinear random projection can be modeled as a linear random projection followed by a nonlinear function. Two fundamental works on nonlinear random projection are RFF \cite{rahimi2007random} and RKS \cite{rahimi2008weighted}, explained in the following. 

\subsection{Random Fourier Features for Learning with Approximate Kernels}

When the pattern of data is nonlinear, wither a nonliner algorithm should be used or the nonlinear data should be transformed using kernels to be able to use the linear methods for nonlinear patterns \cite{ghojogh2021reproducing}. 
Computation of kernels is a time-consuming task because the points are pulled to the potentially high dimensional space and then the inner product of pulled points to the Reproducing Kernel Hilbert Space (RKHS) are calculated (see \cite{ghojogh2021reproducing} for more details). 
Random Fourier Features (RFF) are used for accelerating kernel methods \cite{rahimi2007random}. 
For this, RFF transforms data to a low-dimensional space in contrast to the kernels which transform data to a potentially high dimensional space. RFF approximates the kernel, which is the inner product of pulled data to RKHS, by inner product of low-dimensional feature maps of data. This feature map is a random feature map and is $z: \mathbb{R}^d \rightarrow \mathbb{R}^{2p}$ where $p \ll d$; therefore, its computation of inner products is much faster than computing the inner product of data in RKHS. It satisfies the following approximation:
\begin{align}\label{equation_RFF_kernel_approximation}
k(\b{x}, \b{y}) = \b{\phi}(\b{x})^\top \b{\phi}(\b{y}) \approx z(\b{x})^\top z(\b{y}),
\end{align}
where $k(.,.)$ denotes the kernel function and $\b{\phi}(.)$ is the pulling function to RKHS. 
Note that $z$ is a nonlinear mapping which can be seen as a nonlinear random projection. It is a linear random projection $f : \mathbb{R}^d \rightarrow \mathbb{R}^{p}$ followed by nonlinear sine and cosine functions. We will see the formulation of $z$ and $f$ functions later. 

RFF works with positive definite kernels which are shift-invariant (also called stationary kernels), i.e., $k(\b{x}, \b{y}) = k(\b{x} - \b{y})$ \cite{ghojogh2021reproducing}.
Consider the inverse Fourier transform of the kernel function $k$:
\begin{equation}\label{equation_kernel_inverse_Fourier_transform}
\begin{aligned}
k(\b{x} - \b{y}) = \int_{\mathbb{R}^d} \widehat{k}(\b{u}) e^{j \b{u}^\top (\b{x} - \b{y})} d\b{u} \\
\overset{(a)}{=} \mathbb{E}_{\b{u}}[\zeta_{\b{u}}(\b{x}) \zeta_{\b{u}}(\b{y})^*],
\end{aligned}
\end{equation}
where $\b{u} \in \mathbb{R}^d$ is the frequency, $j$ is the imaginary unit, the superscript $^*$ denotes conjugate transpose, $\mathbb{E}[.]$ is the expectation operator, $(a)$ is because we define $\mathbb{R} \ni \zeta_{\b{u}}(\b{x}) := e^{j \b{u}^\top \b{x}}$, and $\widehat{k}(\b{u})$ is the Fourier transform of kernel function:
\begin{align}\label{equation_kernel_Fourier_transform}
\widehat{k}(\b{u}) = \frac{1}{2\pi} \int_{\mathbb{R}^d} e^{-j \b{u}^\top \b{x}}\, k(\b{x})\, d\b{x}. 
\end{align}
In Eq. (\ref{equation_kernel_inverse_Fourier_transform}), the kernel function $k$ and the transformed kernel $\widehat{k}$ are real valued so the sine part of $e^{j \b{u}^\top (\b{x} - \b{y})}$ can be ignored (see Euler's equation) and can be replaced with $\cos(\b{u}^\top (\b{x} - \b{y}))$; hence, $\zeta_{\b{u}}(\b{x}) = \cos(\b{u}^\top \b{x})$. Let:
\begin{align}\label{equation_RFF_f_w}
\mathbb{R}^2 \ni z_u(\b{x}) := [\cos(\b{u}^\top \b{x}), \sin(\b{u}^\top \b{x})]^\top.
\end{align}
In Eq. (\ref{equation_kernel_inverse_Fourier_transform}), $\widehat{k}(\b{u})$ can be seen as a $d$-dimensional probability density function. 
We draw $p$ i.i.d. random projection vectors, $\{\b{u}_t \in \mathbb{R}^d\}_{t=1}^p$, from $\widehat{k}(\b{u})$. 
We define a normalized vector version of Eq. (\ref{equation_RFF_f_w}) as:
\begin{equation}\label{equation_RFF_z_vector}
\begin{aligned}
\mathbb{R}^{2p} \ni z(\b{x}) := \frac{1}{\sqrt{p}} [&\cos(\b{u}_1^\top \b{x}), \dots, \cos(\b{u}_p^\top \b{x}), \\
&\sin(\b{u}_1^\top \b{x}), \dots, \sin(\b{u}_p^\top \b{x})]^\top.
\end{aligned}
\end{equation}
If we take linear random projection as:
\begin{align}
\mathbb{R}^{p} \ni f(\b{x}) := \b{U}^\top \b{x} = [\b{u}_1^\top \b{x}, \dots, \b{u}_p^\top \b{x}]^\top,
\end{align}
with the projection matrix $\b{U} := [\b{u}_1, \dots, \b{u}_p] \in \mathbb{R}^{d \times p}$, we see that the function $z(.)$ is applying sine and cosine functions to a linear random projection $f(.)$ of data $\b{x}$.

According to $\cos(a-b) = \cos(a) \cos(b) + \sin(a) \sin(b)$, we have:
\begin{align}\label{equation_RFF_f_innser_products}
z(\b{x})^\top z(\b{y}) = \cos(\b{u}^\top (\b{x} - \b{y})).
\end{align}
Considering $\zeta_{\b{u}}(\b{x}) = \cos(\b{u}^\top \b{x})$, we have:
\begin{align}
k(\b{x}, \b{y}) &= k(\b{x} - \b{y}) \overset{(\ref{equation_kernel_inverse_Fourier_transform})}{=} \mathbb{E}_{\b{u}}[\zeta_{\b{u}}(\b{x}) \zeta_{\b{u}}(\b{y})^*] \nonumber \\
&= \mathbb{E}_{\b{u}}[z(\b{x})^\top z(\b{y})]. \label{equation_RFF_kernel_approx_expectation}
\end{align}
Thus, Eq. (\ref{equation_RFF_kernel_approximation}) holds in expectation. 
We show in the following that the variance of Eq. (\ref{equation_RFF_kernel_approx_expectation}) is small to have Eq. (\ref{equation_RFF_kernel_approximation}) as a valid approximation. 
According to Eq. (\ref{equation_RFF_f_innser_products}), we have $-1 \leq z(\b{x})^\top z(\b{y}) \leq 1$; hence, we can use the Hoeffding's inequality \cite{hoeffding1963probability}:
\begin{align}\label{equation_RFF_Hoeffding}
\mathbb{P}\big(|z(\b{x})^\top z(\b{y}) - k(\b{x}, \b{y})| \geq \epsilon\big) \leq 2 \exp(-\frac{p\,\epsilon^2}{2}),
\end{align}
which is a bound on the probability of error in Eq. (\ref{equation_RFF_kernel_approximation}) with error tolerance $\epsilon \geq 0$. 
Therefore, the approximation in Eq. (\ref{equation_RFF_kernel_approximation}) holds and we can approximate kernel computation with inner product in a random lower dimensional subspace. This improves the speed of kernel machine learning methods because $p$ is much less than both the dimensionality of input space, i.e. $d$, and the dimensionality of RKHS. 
Although the Eq. (\ref{equation_RFF_Hoeffding}) proves the correctness of approximation in RFF, the following theorem shows more strongly that the approximation in RFF is valid. 

\begin{theorem}[{\citep[Claim 1]{rahimi2007random}}]
Let $\mathcal{M}$ be a compact subset of $\mathbb{R}^d$ with diameter $\text{diam}(\mathcal{M})$. We have:
\begin{align}
&\mathbb{P}\Big(\sup_{\b{x}, \b{y} \in \mathcal{M}} |z(\b{x})^\top z(\b{y}) - k(\b{x}, \b{y})| \geq \epsilon\Big) \nonumber\\
&~~~~~~~~~\leq 2^8 \big(\frac{\sigma\, \text{diam}(\mathcal{M})}{\epsilon}\big)^2 \exp\big(\!-\frac{p\,\epsilon^2}{4(d+2)}\big),
\end{align}
\end{theorem}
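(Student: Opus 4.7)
The plan is to convert the supremum over $\mathcal{M} \times \mathcal{M}$ into a pointwise bound on a finite net, controlled by a Lipschitz argument, and then apply the Hoeffding bound already established in Eq.~(\ref{equation_RFF_Hoeffding}) at each net point. First I would introduce the error function $s(\b{\Delta}) := z(\b{x})^\top z(\b{y}) - k(\b{x}-\b{y})$, noting that by the shift-invariance of $k$ and the product-to-difference identity $z(\b{x})^\top z(\b{y}) = (1/p)\sum_{t=1}^p \cos(\b{u}_t^\top(\b{x}-\b{y}))$, the quantity $s$ depends on $\b{x}, \b{y}$ only through $\b{\Delta} := \b{x} - \b{y}$. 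Thus the supremum over the compact product set can be recast as a supremum of $|s(\b{\Delta})|$ over $\mathcal{M}_{\b{\Delta}} := \mathcal{M} - \mathcal{M}$, which has diameter at most $2\,\text{diam}(\mathcal{M})$.

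Next I would build an $r$-net $\{\b{\Delta}_i\}_{i=1}^T$ of $\mathcal{M}_{\b{\Delta}}$ with $T \leq (4\,\text{diam}(\mathcal{M})/r)^d$ anchor points (using a standard covering-number bound for a Euclidean ball). Applying Eq.~(\ref{equation_RFF_Hoeffding}) at each anchor with tolerance $\epsilon/2$ and taking a union bound yields
\begin{align*}
\mathbb{P}\Big(\max_i |s(\b{\Delta}_i)| \geq \tfrac{\epsilon}{2}\Big) \leq 2T \exp\!\big(-\tfrac{p\,\epsilon^2}{8}\big).
\end{align*}
To convert this pointwise control into a uniform one, I would bound the Lipschitz constant $L_s$ of $s$: since $\nabla s(\b{\Delta}) = -(1/p)\sum_t \b{u}_t \sin(\b{u}_t^\top \b{\Delta}) - \nabla k(\b{\Delta})$ and $k$ is differentiable with $\nabla k(0)=0$ (as $k$ is real and even), one can check $\mathbb{E}_{\b{u}}[L_s^2] \leq 2\,\mathbb{E}_{\b{u}}[\|\b{u}\|^2] = 2\sigma^2$. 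Markov's inequality then gives $\mathbb{P}(L_s \geq \epsilon/(2r)) \leq 4\sigma^2 r^2/\epsilon^2$ (absorbing constants).

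Combining the two events: if every anchor satisfies $|s(\b{\Delta}_i)| < \epsilon/2$ and $L_s \cdot r < \epsilon/2$, then for any $\b{\Delta} \in \mathcal{M}_{\b{\Delta}}$ we have $|s(\b{\Delta})| < \epsilon$ by the triangle inequality. Hence
\begin{align*}
\mathbb{P}\big(\sup |s| \geq \epsilon\big) \leq 2\,(4\,\text{diam}(\mathcal{M})/r)^d \exp\!\big(-\tfrac{p\,\epsilon^2}{8}\big) + \tfrac{4\sigma^2 r^2}{\epsilon^2}.
\end{align*}
The last and most delicate step is to choose $r$ to balance the two terms: setting $r$ proportional to $(\sigma\,\text{diam}(\mathcal{M})/\epsilon)^{-2/(d+2)}$ (times a factor involving $\exp(-p\epsilon^2/\cdot)$) makes both terms of comparable size, and careful bookkeeping produces the exponent $p\epsilon^2/(4(d+2))$ together with the prefactor $2^8(\sigma\,\text{diam}(\mathcal{M})/\epsilon)^2$.

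The main obstacle is this final optimization: the $r^d$ covering factor competes with the $r^{-2}$ Lipschitz tail, and one must show that the optimal $r$ is attained within $\mathcal{M}_{\b{\Delta}}$ (otherwise a trivial bound suffices) and that the resulting constants cleanly assemble into the stated form. The Lipschitz moment bound itself is routine, but tracking the constants so that the exponent comes out to exactly $p\epsilon^2/(4(d+2))$ — rather than $p\epsilon^2/(c\,d)$ for some looser $c$ — requires the specific choice of $r$ and careful use of the inequality $a^{d/(d+2)} b^{2/(d+2)} \leq \max(a,b)$ to collapse the two terms into the single stated bound.
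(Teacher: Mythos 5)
Your proposal is correct and is essentially the argument the paper defers to: the paper gives no proof of its own beyond citing Appendix A of \cite{rahimi2007random}, and your net-plus-Lipschitz decomposition (reduce to $\b{\Delta}=\b{x}-\b{y}$, cover $\mathcal{M}-\mathcal{M}$ with $T\leq(4\,\text{diam}(\mathcal{M})/r)^d$ anchors, apply Hoeffding with tolerance $\epsilon/2$ plus a union bound, control the Lipschitz constant via $\mathbb{E}[L_s^2]\lesssim\sigma^2$ and Markov, then optimize $r$ to balance the $r^{-d}$ and $r^2$ terms) is exactly the proof given there. The only deviations are harmless constant-level ones (e.g., $2\sigma^2$ in place of the sharper $\sigma^2$ obtained from the variance decomposition $\mathbb{E}\|\nabla s-\nabla k\|^2=\mathbb{E}\|\nabla s\|^2-\|\nabla k\|^2$), which you correctly flag as absorbed in the final bookkeeping.
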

where $\sigma^2 := \mathbb{E}_{\b{u}}[\b{u}^\top \b{u}]$ with probability density function $\widehat{k}(\b{u})$ used in expectation. The required lower bound on the dimensionality of random subspace is:
\begin{align}
p \geq \Omega\Big(\frac{d}{\epsilon^2} \ln\big(\frac{\sigma\, \text{diam}(\mathcal{M})}{\epsilon}\big)\Big).
\end{align}
\begin{proof}
Proof is available in {\citep[Appendix A]{rahimi2007random}}.
\end{proof}

In summary, the algorithm of RFF is as follows. Given a positive definite stationary kernel function $k$, we calculate its Fourier transform by Eq. (\ref{equation_kernel_Fourier_transform}). Then, we treat this Fourier transform as a probability density function and draw $p$ i.i.d. random projection vectors $\{\b{u}_t \in \mathbb{R}^d\}_{t=1}^p$ from that. For every point $\b{x}$, the mapping $z(\b{x})$ is calculated using Eq. (\ref{equation_RFF_z_vector}). Then, for every two points $\b{x}$ and $\b{y}$, their kernel is approximated by Eq. (\ref{equation_RFF_kernel_approximation}). The approximated kernel can be used in kernel machine learning algorithms. 
As was explained, RFF can be interpreted as nonlinear random projection because it applied nonlinear sine and cosine functions to linear random projections. 

Finally, it is noteworthy that there also exists another method for kernel approximation, named random binning features, which we refer the interested reader to \cite{rahimi2007random} for information about it. Similar to RFF, random binning features can also be interpreted as nonlinear random projection. 

It is shown in \cite{yang2012nystrom} that there is a connection between the Nystr{\"o}m method \cite{ghojogh2021reproducing} and RFF. 
Some other works on RFF are \cite{chitta2012efficient,sutherland2015error,sriperumbudur2015optimal,li2019towards}.

\subsection{Random Kitchen Sinks for Nonlinear Random Projection}


Random Kitchen Sinks (RKS) \cite{rahimi2008weighted} was proposed after development of RFF \cite{rahimi2007random}. 
RKS is a nonlinear random projection where a nonlinear function is applied to a linear random projection. 
RKS models this nonlinear random projection as a random layer of neural network connecting $d$ neurons to $p$ neurons. 
A random network layer is actually a linear random projection followed by a nonlinear activation function:
\begin{align}\label{equation_RKS_layer}
g(\b{x}) := \phi\Big(\sum_{j=1}^d \sum_{t=1}^p u_{jt}\, x_j\Big) \overset{(\ref{equation_linear_random_projection})}{=} \phi(\b{U}^\top \b{x}) = \phi\big(f(\b{x})\big),
\end{align}
where $g(.)$ is the function representing the whole layer, $\phi(.)$ is the possibly nonlinear activation function, $f(.)$ is the linear random projection, $\b{U} \in \mathbb{R}^{d \times p}$ is the matrix of random weights of network layer, $u_{jt}$ is the $(j,t)$-th element of $\b{U}$, and $x_j$ is the $j$-th element of $\b{x} \in \mathbb{R}^d$. 
Therefore, a random network layer can be seen as a nonlinear random projection.
According to the universal approximation theorem \cite{hornik1989multilayer,huang2006universal}, the Eq. (\ref{equation_RKS_layer}) can fit any possible decision function mapping continuous inputs to a finite set of classes to any desired level of accuracy.  
According to the representer theorem \cite{aizerman1964theoretical}, we can state the function $g(.)$ in RKHS as \cite{ghojogh2021reproducing}:
\begin{align}\label{equation_RKS_representer_theorem}
g(\b{x}) = \sum_{i=1}^\infty \alpha_i\, \phi(\b{x}; w_i),
\end{align}
where $\{w_i\} \in \Omega$ are the parameters ($\Omega$ is the space of parameters), $\phi(.)$ is the pulling function to RKHS, $\{\alpha_i\}$ are the weights. 
Comparing Eqs. (\ref{equation_RKS_layer}) and (\ref{equation_RKS_representer_theorem}) shows that there is a connection between RKS and kernels \cite{rahimi2008uniform} if we consider the pulling function $\phi(.)$ as the activation function, the parameters $\{w_i\}$ as the parameters of activation function, and the weights $\{\alpha_i\}$ as the weights of network layer. 

Consider a classification task where $l_i$ is the label of $\b{x}_i$. The empirical risk between the labels and the output of network layer is:
\begin{align}\label{equation_RKS_empirical_risk}
R_e(g) := \frac{1}{n} \sum_{i=1}^n \ell\big(g(\b{x}_i), l_i\big),
\end{align}
where $\ell$ is a loss function. 
The true risk is:
\begin{align}\label{equation_RKS_true_risk}
R_t(g) := \mathbb{E}_{\{\b{x}_i, l_i\}}\Big[ \ell\big(g(\b{x}_i), l_i\big)\Big],
\end{align}
where $\mathbb{E}[.]$ is the expectation operator. 
We want to minimize the empirical risk; according to Eq. (\ref{equation_RKS_representer_theorem}), we have:
\begin{align}\label{equation_RKS_empirical_risk_minimize_twoVars}
\underset{\{w_i, \alpha_i\}_{i=1}^p}{\text{minimize}}\quad R_e\Big(\sum_{i=1}^p \alpha_i\, \phi(\b{x}; w_i)\Big),
\end{align}
which is a joint minimization over $\{w_i\}_{i=1}^p$ and $\{\alpha_i\}_{i=1}^p$. 
In RKS, we sample the parameters of activation function, $\{w_i\}_{i=1}^p$, randomly from a distribution $\mathbb{P}(w)$ in space $\Omega$. Then, we minimize the empirical risk over only the network weights $\{\alpha_i\}_{i=1}^p$. In other words, we eliminate the optimization variables $\{w_i\}_{i=1}^p$ by their random selection and minimize over only $\{\alpha_i\}_{i=1}^p$. In practice, the distribution $\mathbb{P}(w)$ can be any distribution in the space of parameters of activation functions. 
Let $\b{\alpha} := [\alpha_1, \dots, \alpha_p]^\top \in \mathbb{R}^p$. If $\{w_i\}_{i=1}^p$ are randomly sampled for all $i \in \{1, \dots, n\}$, suppose $\b{g}_i := [\phi(\b{x}_i, \b{w}_1), \dots, \phi(\b{x}_i, \b{w}_p)]^\top \in \mathbb{R}^p, \forall i$. 
According to Eqs. (\ref{equation_RKS_representer_theorem}), (\ref{equation_RKS_empirical_risk}), and (\ref{equation_RKS_empirical_risk_minimize_twoVars}), the optimization in RKS is:
\begin{equation}\label{equation_RKS_optimziation}
\begin{aligned}
& \underset{\b{\alpha} \in \mathbb{R}^p}{\text{minimize}}
& & \frac{1}{n} \sum_{i=1}^n \ell\big(\b{\alpha}^\top \b{g}_i, l_i\big) \\
& \text{subject to}
& & \|\b{\alpha}\|_\infty \leq \frac{c}{p},
\end{aligned}
\end{equation}
where $\|.\|_\infty$ is the maximum norm and $c$ is a positive constant. In practice, RKS relaxes the constraint of this optimization to a quadratic regularizer: 
\begin{equation}\label{equation_RKS_optimziation_practice}
\begin{aligned}
& \underset{\b{\alpha} \in \mathbb{R}^p}{\text{minimize}}
& & \frac{1}{n} \sum_{i=1}^n \ell\big(\b{\alpha}^\top \b{g}_i, l_i\big) + \lambda\|\b{\alpha}\|_2^2,
\end{aligned}
\end{equation}
where $\lambda >0$ is the regularization parameter. 
Henceforth, let the solution of Eq. (\ref{equation_RKS_optimziation}) or (\ref{equation_RKS_optimziation_practice}) be $\{\alpha_i\}_{i=1}^p$ and the randomly sampled parameters be $\{w_i\}_{i=1}^p$.
We put into into Eq. (\ref{equation_RKS_representer_theorem}) but with only $p$ components of summation to result the solution of RKS, denoted by $\widehat{g}'$:
\begin{align}\label{equation_RKS_final_solution}
\widehat{g}'(\b{x}) = \sum_{i=1}^p \alpha_i\, \phi(\b{x}; w_i).
\end{align}
In the following, we show that Eq. (\ref{equation_RKS_final_solution}) minimizes the empirical risk very well even when the parameters $\{w_i\}_{i=1}^p$ are randomly selected. Hence, RKS is a nonlinear random projection which estimates the labels $\{l_i\}_{i=1}^n$ with a good approximation. 

Consider the set of continuous-version of functions in Eq. (\ref{equation_RKS_representer_theorem}) as:
\begin{align}
\mathcal{G} := \Big\{g(\b{x}) = \int_\Omega \alpha(w)\, \phi(\b{x}; w)\, dw \, \Big| \, |\alpha(w)| \leq c\, \mathbb{P}(w) \Big\}.
\end{align}
Let $g^*$ be a function in $\mathcal{G}$ and $\{w_i\}_{i=1}^p$ be sampled i.i.d. from $\mathbb{P}(w)$. 
The output of RKS, which is Eq. (\ref{equation_RKS_final_solution}), lies in the random set:
\begin{align}
\widehat{\mathcal{G}} := \Big\{\widehat{g} = \sum_{i=1}^p \alpha_i\, \phi(\b{x}; w_i)\, \Big| \, |\alpha_i| \leq \frac{c}{p}, \forall i \Big\}.
\end{align}


\begin{lemma}[Bound on the approximation error {\citep[Lemma 2]{rahimi2008weighted}}]\label{lemma_RKS_approximation_error_bound}
Suppose the loss $\ell(l,l')$ is $L$-Lipschitz. Let $g^* \in \mathcal{G}$ be the minimizer of the true risk over $\mathcal{G}$.
For $\delta >0$, with probability at least $(1-\delta)$, there exists $\widehat{g} \in \widehat{G}$ satisfying:
\begin{align}
R_t(\widehat{g}) - R_t(g^*) \leq \frac{Lc}{\sqrt{p}} \Bigg(1 + \sqrt{2 \ln(\frac{1}{\delta}})\Bigg).
\end{align}
The term $R_t(\widehat{g}) - R_t(g^*)$ is the approximation error and the above equation is an upper-bound on it. 
\end{lemma}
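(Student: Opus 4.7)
The plan is to exhibit a specific $\widehat{g}\in\widehat{\mathcal{G}}$ built from the same i.i.d.\ sample $\{w_i\}_{i=1}^p$ that defines $\widehat{\mathcal{G}}$, show that its expected excess risk is at most $Lc/\sqrt{p}$ by an importance-sampling/variance argument, and then promote the expectation bound to a high-probability bound via McDiarmid's inequality.

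First I would construct $\widehat{g}$ by Monte Carlo. Rewrite
\begin{align*}
g^*(\b{x}) \;=\; \int_\Omega \alpha(w)\,\phi(\b{x};w)\,dw \;=\; \mathbb{E}_{w\sim\mathbb{P}}\!\big[\beta(w)\,\phi(\b{x};w)\big],\qquad \beta(w):=\frac{\alpha(w)}{\mathbb{P}(w)},
\end{align*}
where the admissibility constraint $|\alpha(w)|\le c\,\mathbb{P}(w)$ in the definition of $\mathcal{G}$ gives $|\beta(w)|\le c$. For the given $w_1,\dots,w_p\stackrel{\text{i.i.d.}}{\sim}\mathbb{P}$, set $\alpha_i := \beta(w_i)/p$ and define $\widehat{g}(\b{x}) := \sum_{i=1}^p \alpha_i\,\phi(\b{x};w_i)$. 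Then $|\alpha_i|\le c/p$, so $\widehat{g}\in\widehat{\mathcal{G}}$, and $\mathbb{E}_{w_{1:p}}[\widehat{g}(\b{x})] = g^*(\b{x})$ pointwise.

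Next I would bound the expected excess true risk. The $L$-Lipschitzness of $\ell$ yields
\begin{align*}
R_t(\widehat{g}) - R_t(g^*) \;\le\; L\,\mathbb{E}_{(\b{x},l)}\!\big[|\widehat{g}(\b{x}) - g^*(\b{x})|\big],
\end{align*}
and Jensen gives $\mathbb{E}_{w_{1:p}}|\widehat{g}(\b{x})-g^*(\b{x})|\le\sqrt{\mathrm{Var}_{w_{1:p}}(\widehat{g}(\b{x}))}$. Since $\widehat{g}(\b{x})$ is an average of $p$ i.i.d.\ terms each bounded in magnitude by $c$ (using the standing normalization $\|\phi(\cdot;w)\|_\infty\le 1$ customary in the RKS analysis), the variance is at most $c^2/p$. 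Fubini then delivers $\mathbb{E}_{w_{1:p}}[R_t(\widehat{g})-R_t(g^*)] \le Lc/\sqrt{p}$.

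Finally I would concentrate. Viewing $F(w_1,\dots,w_p):=R_t(\widehat{g})$ as a function of the sample, swapping any single coordinate $w_i$ alters $\widehat{g}(\b{x})$ by at most $2c/p$ and hence $F$ by at most $2Lc/p$. McDiarmid's bounded-differences inequality then gives
\begin{align*}
\mathbb{P}\!\big(F - \mathbb{E} F \ge t\big)\;\le\;\exp\!\Big(-\frac{p\,t^2}{2L^2c^2}\Big),
\end{align*}
and choosing $t = Lc\sqrt{2\ln(1/\delta)/p}$ drives the right side down to $\delta$. Adding this to the expectation bound gives exactly $(Lc/\sqrt{p})\big(1+\sqrt{2\ln(1/\delta)}\big)$, as claimed. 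The main obstacle is really bookkeeping around the uniform bound on $\phi$: both the variance bound $c^2/p$ and the bounded-differences constant $2Lc/p$ silently rely on $\|\phi(\cdot;w)\|_\infty\le 1$. Without that normalization, McDiarmid no longer applies directly and one must swap in a concentration inequality for unbounded summands, which complicates the constants but preserves the $1/\sqrt{p}$ scaling.
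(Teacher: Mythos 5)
The paper does not actually prove this lemma---it is stated with a citation to Lemma 2 of \cite{rahimi2008weighted}---so there is no in-paper argument to compare against; your proposal correctly reconstructs essentially the original proof. The importance-sampling construction $\alpha_i=\alpha(w_i)/(p\,\mathbb{P}(w_i))$ (admissible since $|\alpha(w)|\leq c\,\mathbb{P}(w)$ gives $|\alpha_i|\leq c/p$), the Lipschitz-plus-Jensen variance bound $Lc/\sqrt{p}$ on the expected excess risk, and the McDiarmid step with bounded differences $2Lc/p$ all check out and reproduce the stated constant exactly; you are also right that the argument silently uses $\sup_{\b{x},w}|\phi(\b{x};w)|\leq 1$, which this paper only makes explicit later in Theorem \ref{theorem_RKS_error}. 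The one minor difference from the cited source is that Rahimi and Recht first concentrate the $L_2(\mu)$ distance $\|\widehat{g}-g^*\|$ and then pass to risks via Lipschitzness, whereas you apply McDiarmid to $R_t(\widehat{g})$ directly; the two routes yield the same bound.
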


\begin{lemma}[Bound on the estimation error {\citep[Lemma 3]{rahimi2008weighted}}]\label{lemma_RKS_estimation_error_bound}
Suppose the loss can be stated as $\ell(l,l') = \ell(l l')$ and it is $L$-Lipschitz. 
For $\delta >0$ and for all $\widehat{g} \in \widehat{\mathcal{G}}$, with probability at least $(1-\delta)$, we have:
\begin{align}\label{equation_RKS_estimation_error_bound}
|R_t(\widehat{g}) - R_e(\widehat{g})| \leq \frac{1}{\sqrt{n}} \Bigg(4Lc + 2|\ell(0)| + Lc \sqrt{\frac{1}{2} \ln(\frac{1}{\delta}})\Bigg).
\end{align}
The term $|R_t(\widehat{g}) - R_e(\widehat{g})|$ is the estimation error and the above equation is an upper-bound on it. 
\end{lemma}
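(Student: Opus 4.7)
The plan is to establish uniform convergence of the empirical risk to the true risk over the random class $\widehat{\mathcal{G}}$. Since $\widehat{\mathcal{G}}$ is conditioned on the draw of $\{w_i\}_{i=1}^p$, the only randomness in the lemma is over the i.i.d. sample $\{(\b{x}_i, l_i)\}_{i=1}^n$. I would follow the standard Rademacher-complexity recipe: concentrate the sup via McDiarmid, symmetrize, apply a contraction inequality to peel off the Lipschitz loss, and then bound the Rademacher complexity of the resulting linear class whose coefficients obey an $\ell_\infty$ constraint.

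First I would establish pointwise boundedness. Assuming the features are normalized so that $|\phi(\b{x};w)| \leq 1$, the constraint $|\alpha_i|\leq c/p$ gives $|\widehat{g}(\b{x})| \leq c$ for all $\widehat{g}\in\widehat{\mathcal{G}}$. Combined with the $L$-Lipschitz property of $\ell$ and $\ell(l,l')=\ell(ll')$, this yields $|\ell(\widehat{g}(\b{x}) l)| \leq |\ell(0)| + Lc$, so every realization of the loss lies in an interval of width at most $2(|\ell(0)| + Lc)$. Setting $F(S) := \sup_{\widehat{g} \in \widehat{\mathcal{G}}} |R_t(\widehat{g}) - R_e(\widehat{g})|$, swapping a single sample alters $R_e(\widehat{g})$ by at most $2(|\ell(0)|+Lc)/n$ uniformly in $\widehat{g}$. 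McDiarmid's inequality then gives $F(S) \leq \mathbb{E}[F(S)] + (|\ell(0)|+Lc)\sqrt{2\ln(1/\delta)/n}$ with probability at least $1-\delta$, producing the $\sqrt{\ln(1/\delta)}$ fluctuation term in the bound.

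Next I would bound $\mathbb{E}[F(S)]$ by standard symmetrization with Rademacher variables $\sigma_1,\ldots,\sigma_n$:
\begin{align*}
\mathbb{E}[F(S)] \leq 2\, \mathbb{E}\Big[\sup_{\widehat{g} \in \widehat{\mathcal{G}}} \Big|\frac{1}{n}\sum_{i=1}^n \sigma_i\, \ell\big(\widehat{g}(\b{x}_i) l_i\big)\Big|\Big].
\end{align*}
Applying the Ledoux--Talagrand contraction lemma to absorb the $L$-Lipschitz map $u\mapsto \ell(u\,l_i)$ reduces the right side to the Rademacher complexity of $\widehat{\mathcal{G}}$ itself, at the price of a factor $2L$ and a boundary term of order $|\ell(0)|/\sqrt{n}$ arising from the constant $\ell(0)$. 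To finish, since $\widehat{g}=\sum_k \alpha_k \phi(\b{x};w_k)$ with $|\alpha_k|\leq c/p$, the triangle inequality gives
\begin{align*}
\mathbb{E}_\sigma \sup_{\widehat{g}} \Big|\frac{1}{n}\sum_i \sigma_i \widehat{g}(\b{x}_i)\Big| \leq \frac{c}{p} \sum_{k=1}^p \mathbb{E}_\sigma \Big|\frac{1}{n}\sum_i \sigma_i \phi(\b{x}_i;w_k)\Big| \leq \frac{c}{\sqrt{n}},
\end{align*}
using $\mathbb{E}_\sigma|\tfrac{1}{n}\sum_i\sigma_i a_i|\leq (\tfrac{1}{n^2}\sum a_i^2)^{1/2}\leq 1/\sqrt{n}$. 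Bookkeeping the constants reproduces the $4Lc/\sqrt{n}$ plus $2|\ell(0)|/\sqrt{n}$ plus $Lc\sqrt{\ln(1/\delta)/(2n)}$ structure of the stated inequality.

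The main obstacle I anticipate is the contraction step, where some care is needed because the loss is applied to the product $\widehat{g}(\b{x}_i)\,l_i$ rather than to $\widehat{g}(\b{x}_i)$ directly. One has to verify that $u\mapsto \ell(u\,l_i)$ remains $L$-Lipschitz uniformly in $l_i$, which holds under the standard normalization $|l_i|\leq 1$. Beyond that, the argument is a routine assembly of McDiarmid concentration, symmetrization, and a one-line Rademacher bound for a linear class with $\ell_\infty$-bounded coefficients; the remaining work is purely bookkeeping of constants so that the three pieces combine into the compact form stated in the lemma.
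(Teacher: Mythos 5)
Your proposal is correct and takes essentially the same approach as the source: the paper itself states this lemma without proof, deferring to the cited reference, and that reference establishes it by exactly the uniform-convergence recipe you describe (McDiarmid concentration of the supremum, symmetrization, Lipschitz contraction to peel off the loss, and the bound $c/\sqrt{n}$ on the Rademacher complexity of the $\ell_\infty$-constrained linear class $\widehat{\mathcal{G}}$). You also correctly identify the two real subtleties --- the need for $\sup_{\b{x},w}|\phi(\b{x};w)|\leq 1$, which the paper only states later in Theorem \ref{theorem_RKS_error}, and the Lipschitzness of $u \mapsto \ell(u\,l_i)$ uniformly in $|l_i|\leq 1$ --- and the only remaining discrepancies are in the exact numerical constants, which you rightly flag as bookkeeping.
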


\begin{theorem}[Bound on error of RKS {\citep[Theorem 1]{rahimi2008weighted}}]\label{theorem_RKS_error}
Let the activation function $\phi(.)$ be bounded, i.e., $\sup_{\b{x}, w}|\phi(\b{x}; w)| \leq 1$. 
Suppose the loss can be stated as $\ell(l,l') = \ell(l l')$ and it is $L$-Lipschitz. 
The output of RKS, which is Eq. (\ref{equation_RKS_final_solution}), is $\widehat{g}' \in \widehat{\mathcal{G}}$ satisfying:
\begin{align}
R_t(\widehat{g}') - \min_{g \in \mathcal{G}} R_t(g) \leq \mathcal{O}\Bigg(\Big(\frac{1}{\sqrt{n}} + \frac{1}{\sqrt{p}}\Big) Lc \sqrt{\ln(\frac{1}{\delta})}\Bigg),
\end{align}
with probability at least $(1 - 2\delta)$.
\end{theorem}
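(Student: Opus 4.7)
The plan is to bound the excess true risk of the RKS solution $\widehat{g}'$ by the standard approximation/estimation decomposition, using Lemmas \ref{lemma_RKS_approximation_error_bound} and \ref{lemma_RKS_estimation_error_bound} as black boxes. Let $g^* \in \mathcal{G}$ achieve $\min_{g \in \mathcal{G}} R_t(g)$, and let $\widehat{g} \in \widehat{\mathcal{G}}$ be the function whose existence is guaranteed by Lemma \ref{lemma_RKS_approximation_error_bound}. Recall that by construction $\widehat{g}'$ minimizes the empirical risk $R_e$ over $\widehat{\mathcal{G}}$ (this is the content of the optimization in Eq. (\ref{equation_RKS_optimziation})), so in particular $R_e(\widehat{g}') \leq R_e(\widehat{g})$ since $\widehat{g} \in \widehat{\mathcal{G}}$.

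The key algebraic step is the four-term decomposition
\begin{align*}
R_t(\widehat{g}') - R_t(g^*)
&= \big(R_t(\widehat{g}') - R_e(\widehat{g}')\big)
+ \big(R_e(\widehat{g}') - R_e(\widehat{g})\big) \\
&\quad + \big(R_e(\widehat{g}) - R_t(\widehat{g})\big)
+ \big(R_t(\widehat{g}) - R_t(g^*)\big).
\end{align*}
The second bracket is nonpositive by the optimality of $\widehat{g}'$ and can simply be discarded. The fourth bracket is exactly the approximation error bounded by Lemma \ref{lemma_RKS_approximation_error_bound}, giving an $\mathcal{O}(Lc/\sqrt{p}\cdot\sqrt{\ln(1/\delta)})$ contribution on an event of probability at least $1-\delta$ (over the draw of $\{w_i\}$). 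The first and third brackets are each instances of the estimation error bounded by Lemma \ref{lemma_RKS_estimation_error_bound}, applied to $\widehat{g}'$ and $\widehat{g}$ respectively; since that lemma is uniform over $\widehat{\mathcal{G}}$ with probability at least $1-\delta$ (over the draw of the training sample), both brackets are simultaneously at most $\mathcal{O}(Lc/\sqrt{n}\cdot\sqrt{\ln(1/\delta)}+|\ell(0)|/\sqrt{n})$.

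Combining the three nontrivial terms and absorbing the numerical constants into the $\mathcal{O}(\cdot)$ gives
\[
R_t(\widehat{g}') - R_t(g^*) \leq \mathcal{O}\!\left(\Big(\tfrac{1}{\sqrt{n}} + \tfrac{1}{\sqrt{p}}\Big)\, Lc\,\sqrt{\ln(1/\delta)}\right).
\]
Finally, because the approximation-error event (randomness in $\{w_i\}$) and the estimation-error event (randomness in the sample $\{(\b{x}_i, l_i)\}$) are each of probability at least $1-\delta$, a union bound yields that the whole inequality holds with probability at least $1-2\delta$, matching the statement.

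The main obstacle is not really algebraic; the decomposition is a one-line trick and the optimality of $\widehat{g}'$ disposes of the $R_e(\widehat{g}')-R_e(\widehat{g})$ term for free. The substantive work lives inside Lemmas \ref{lemma_RKS_approximation_error_bound} and \ref{lemma_RKS_estimation_error_bound} themselves, and the only subtlety at the theorem level is making sure the two high-probability events are combined correctly (hence $2\delta$ rather than $\delta$) and that Lemma \ref{lemma_RKS_estimation_error_bound}'s uniformity over $\widehat{\mathcal{G}}$ is genuinely needed, since it is applied simultaneously to the data-dependent minimizer $\widehat{g}'$ and to the auxiliary $\widehat{g}$.
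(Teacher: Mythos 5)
Your proof is correct and follows essentially the same approximation/estimation decomposition as the paper, with the same use of Lemmas \ref{lemma_RKS_approximation_error_bound} and \ref{lemma_RKS_estimation_error_bound}, the optimality of $\widehat{g}'$ to discard the empirical-risk gap, and a union bound to get $1-2\delta$. The only cosmetic difference is that the paper pivots through $\widehat{g}^*$, the true-risk minimizer over $\widehat{\mathcal{G}}$, whereas you pivot through the specific $\widehat{g}$ produced by Lemma \ref{lemma_RKS_approximation_error_bound}; the two choices are interchangeable here.
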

\begin{proof}
Let $g^* \in \mathcal{G}$ be a minimizer of true risk over $\mathcal{G}$, $\widehat{g}' \in \widehat{\mathcal{G}}$ be a minimizer of empirical risk over $\widehat{\mathcal{G}}$, and $\widehat{g}^* \in \widehat{\mathcal{G}}$ be a minimizer of true risk over $\widehat{\mathcal{G}}$.
We have:
\begin{align}
R_t(\widehat{g}')& - R_t(g^*) \overset{(a)}{=} R_t(\widehat{g}') - R_t(\widehat{g}^*) + R_t(\widehat{g}^*) - R_t(g^*) \nonumber\\
&\leq | R_t(\widehat{g}') - R_t(\widehat{g}^*) | + R_t(\widehat{g}^*) - R_t(g^*). \label{equation_RKS_risk_diff_0}
\end{align}
By Lemma \ref{lemma_RKS_estimation_error_bound}, with probability at least $(1-\delta)$, we have:
\begin{align}
& | R_t(\widehat{g}^*) - R_e(\widehat{g}^*) | \leq \epsilon_\text{est}, \label{equation_RKS_risk_diff_1} \\
& | R_t(\widehat{g}') - R_e(\widehat{g}') | \leq \epsilon_\text{est}, \label{equation_RKS_risk_diff_2}
\end{align}
where $\epsilon_\text{est}$ is the right-hand side of Eq. (\ref{equation_RKS_estimation_error_bound}):
\begin{align}\label{equation_RKS_epsilon_est}
\epsilon_\text{est} := \frac{1}{\sqrt{n}} \Bigg(4Lc + 2|\ell(0)| + Lc \sqrt{\frac{1}{2} \ln(\frac{1}{\delta}})\Bigg).
\end{align}
We said that $\widehat{g}'$ is the minimizer of empirical risk over $\widehat{\mathcal{G}}$ so:
\begin{align}\label{equation_RKS_risk_diff_3}
R_e(\widehat{g}') \leq R_e(\widehat{g}^*).
\end{align}
From Eqs. (\ref{equation_RKS_risk_diff_1}), (\ref{equation_RKS_risk_diff_2}), and (\ref{equation_RKS_risk_diff_3}), with probability at least $(1-\delta)$, we have:
\begin{align}
| R_t(\widehat{g}')& - R_t(\widehat{g}^*) | \leq 2 \epsilon_\text{est} \nonumber\\
&\overset{(\ref{equation_RKS_epsilon_est})}{=} \frac{2}{\sqrt{n}} \Bigg(4Lc + 2|\ell(0)| + Lc \sqrt{\frac{1}{2} \ln(\frac{1}{\delta}})\Bigg). \label{equation_RKS_risk_diff_4}
\end{align}
On the other hand, the second term in Eq. (\ref{equation_RKS_risk_diff_0}) is the approximation error and by Lemma \ref{lemma_RKS_approximation_error_bound}, with probability at least $(1-\delta)$, we have:
\begin{align}\label{equation_RKS_risk_diff_5}
R_t(\widehat{g}^*) - R_t(g^*) \leq \epsilon_\text{app} := \frac{Lc}{\sqrt{p}} \Bigg(1 + \sqrt{2 \ln(\frac{1}{\delta}})\Bigg).
\end{align}
Using Eqs. (\ref{equation_RKS_risk_diff_0}), (\ref{equation_RKS_risk_diff_4}), and (\ref{equation_RKS_risk_diff_5}) and by Bonferroni's inequality or the so-called union bound \cite{bonferroni1936teoria}, i.e. Eq. (\ref{equation_Bonferroni_inequality}), we have:
\begin{align*}
R_t(\widehat{g}')& - R_t(g^*) \leq 2\epsilon_\text{est} + \epsilon_\text{app} \\
&= \frac{2}{\sqrt{n}} \Bigg(4Lc + 2|\ell(0)| + Lc \sqrt{\frac{1}{2} \ln(\frac{1}{\delta}})\Bigg) \\
&~~~~~ + \frac{Lc}{\sqrt{p}} \Bigg(1 + \sqrt{2 \ln(\frac{1}{\delta}})\Bigg) \\
&= \mathcal{O}\Bigg(\Big(\frac{1}{\sqrt{n}} + \frac{1}{\sqrt{p}}\Big) Lc \sqrt{\ln(\frac{1}{\delta})}\Bigg),
\end{align*}
with probability at least $(1-\delta-
\delta) = (1-2\delta)$. Q.E.D.
\end{proof}

\section{Other Methods for Nonlinear Random Projection}\label{section_other_nonlinear_random_projections}

In addition to RFF and RKS, there exist other, but similar, approaches for nonlinear random projection, such as ELM, random weights in neural network, and ensemble of random projections. In the following, we introduce these approaches. 

\subsection{Extreme Learning Machine}

Extreme Learning Machine (ELM) was initially proposed for regression as a feed-forward neural network with one hidden layer \cite{huang2006extreme}. 
It was then improved for a multi-layer perceptron network \cite{tang2015extreme}. Its extension to multi-class classification was done in \cite{huang2011extreme}. 
ELM is a feed-forward neural network whose all layers except the last layer are random. Let the sample size of training data be $n$, their dimensionality be $d$-dimensional, the one-to-last layer have $d'$ neurons, and the last layer has $p$ neurons where $p$ is the dimensionality of labels. Note that in classification task, labels are one-hot encoded so $p$ is the number of classes. Every layer except the last layer has a possibly nonlinear activation function and behaves like an RKS; although, it is learned by backpropagation and not Eq. (\ref{equation_RKS_optimziation_practice}). 
Let $\b{\beta} \in \mathbb{R}^{d' \times p}$ be the matrix of weights for the last layer. If the $n$ outputs of one-to-last layer be stacked in the matrix $\b{H} \in \mathbb{R}^{d' \times n}$ and their target desired labels are $\b{T} \in \mathbb{R}^{p \times n}$, we desire to have $\b{\beta}^\top \b{H} = \b{T}$.
After randomly sampling the weights of all layers except $\b{\beta}$, ELM learns the weights $\b{\beta}$ by solving a least squares problem:
\begin{equation}\label{equation_ELM_optimization}
\begin{aligned}
& \underset{\b{\beta} \in \mathbb{R}^{d' \times p}}{\text{minimize}}
& & \|\b{\beta}^\top \b{H} - \b{T}\|_F^2 + \lambda\|\b{\beta}\|_F^2,
\end{aligned}
\end{equation}
where $\|.\|_F$ is the Frobenius norm and $\lambda>0$ is the regularization parameter. In other words, the last layer of ELM behaves like a (Ridge) linear regression \cite{hoerl1970ridge}. 

It is shown in \cite{huang2006universal} that the random weights in neural network with nonlinear activation functions are universal approximators \cite{hornik1989multilayer}. Therefore, ELM works well enough for any classification and regression task. 
This shows the connection between ELM and RKS because both work on random weights on network but with slightly different approaches. This connection can be seen more if we interpret ELM using kernels \cite{huang2014insight}. 
There exists several surveys on ELM, such as \cite{huang2011extreme2,huang2015trends}.

\subsection{Randomly Weighted Neural Networks}

After proposal of RKS and ELM, it was shown that a feed-forward or convolutional neural network whose all layers are random also work very well \cite{jarrett2009best}. In other words, a stack of several RKS models works very well because a random network can be seen as a stack of several nonlinear random projections. As each of these nonlinear random layers has an upper-bound on their probability of error (see Theorem \ref{theorem_RKS_error}), the total probability of error in the network is also bounded. 
Note that it does not matter whether the network is feed-forward or convolutional because in both architectures, the output of activation functions are projection by the weights of layers. 
It is shown in \cite{saxe2011random} that the convolutional pooling architectures are frequency selective and translation invariant even if their weights are random; therefore, randomly weighed convolutional network work well enough. 
This also explains why random initialization of neural networks before backpropagation is a good and acceptable initialization. 
The randomly weighed neural network has been used for object recognition \cite{jarrett2009best,chung2016random} and face recognition \cite{cox2011beyond}. 
Some works have even put a step further and have made almost everything in network, including weights and hyper-parameters like architecture, learning rate, and number of neurons random \cite{pinto2009high}. 

\subsubsection{Distance Preservation by Deterministic Layers}

Consider a layer of neural network with $p$ neurons to $d$ neurons. Let $\b{U} \in \mathbb{R}^{d \times p}$ be the weight matrix of layer and $g(.)$ be the Rectified Linear Unit (ReLU) activation function \cite{glorot2011deep}. 
The layer can be modeled as a linear projection followed by ReLU activation function, i.e., $g(\b{U}^\top \b{x}_j)$.
The following lemma shows that for deterministic weights $\b{U}$, the distances are preserved. 
\begin{lemma}[Preservation of Euclidean distance by a layer \cite{bruna2013learning,bruna2014signal}]
Consider a network layer $\b{U} \in \mathbb{R}^{d \times p}$.
If the output of the previous layer for two points are $\b{x}_i, \b{x}_j \in \mathbb{R}^d$, we have:
\begin{align}
L_1 \|\b{x}_i - \b{x}_j\|_2 \leq \|g(\b{U}^\top \b{x}_i) -& g(\b{U}^\top \b{x}_j)\|_2 \nonumber\\
&\leq L_2 \|\b{x}_i - \b{x}_j\|_2, 
\end{align}
where $0 < L_1 \leq L_2$ are the Lipschitz constants. 
\end{lemma}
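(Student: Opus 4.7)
The plan is to establish the two sides of the bi-Lipschitz inequality separately, leveraging the fact that the ReLU nonlinearity $g$ is $1$-Lipschitz and acts coordinate-wise. First I would dispatch the upper bound: since $|g(a) - g(b)| \leq |a - b|$ for all scalars, squaring and summing yields the corresponding $\ell_2$ inequality for the vector application of $g$, giving $\|g(\b{U}^\top \b{x}_i) - g(\b{U}^\top \b{x}_j)\|_2 \leq \|\b{U}^\top (\b{x}_i - \b{x}_j)\|_2 \leq \|\b{U}\|_2 \|\b{x}_i - \b{x}_j\|_2$. This identifies $L_2$ with the spectral norm (largest singular value) of $\b{U}$, and requires no structural assumption beyond finiteness of $\|\b{U}\|_2$.

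For the lower bound, I would use an activation-pattern decomposition. For each pair $\b{x}_i, \b{x}_j$, define $S_{ij} \subseteq \{1, \dots, p\}$ as the set of coordinates where both pre-activations $\b{U}^\top \b{x}_i$ and $\b{U}^\top \b{x}_j$ are nonnegative, so that on $S_{ij}$ the nonlinearity acts as the identity. On the complementary coordinates, either both entries are nonpositive (and ReLU maps their difference to zero, contributing nothing negative) or they straddle zero, in which case the contribution to the post-activation $\ell_2$ norm can be bounded below by a fraction of the pre-activation difference on those coordinates. Restricting attention to $S_{ij}$, the post-activation difference equals $\b{U}_{S_{ij}}^\top (\b{x}_i - \b{x}_j)$ where $\b{U}_{S_{ij}}$ denotes the column-submatrix of $\b{U}$ indexed by $S_{ij}$. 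Invoking the smallest singular value $\sigma_{\min}(\b{U}_{S_{ij}})$ then gives the candidate constant $L_1 = \inf_{S} \sigma_{\min}(\b{U}_S)$, where the infimum ranges over all activation patterns that the data at hand can induce.

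The main obstacle — and indeed the reason why the lemma rests on \cite{bruna2013learning,bruna2014signal} — is to guarantee that $L_1$ is strictly positive. Without further structure on $\b{U}$ the infimum over activation patterns can collapse to zero, because ReLU is genuinely non-invertible and can annihilate arbitrarily many coordinates; in that case no bi-Lipschitz estimate is possible on all of $\mathbb{R}^d$. The way to close the argument is to impose a frame-like condition on $\b{U}$, akin to the Restricted Isometry Property introduced earlier in the excerpt, which asserts that every sufficiently large column-submatrix $\b{U}_S$ is well-conditioned with a uniform lower bound on $\sigma_{\min}(\b{U}_S)$. Establishing such a condition (or restricting the domain so that only a controlled family of patterns $S_{ij}$ can arise from the inputs) is the technical heart of the lower bound; once it is in place, the sign-pattern decomposition combined with the uniform singular-value estimate yields the stated inequality with an explicit, positive $L_1$.
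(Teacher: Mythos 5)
The paper itself gives no proof of this lemma---it simply defers to \cite{bruna2014signal}---so your proposal is being measured against the cited source rather than against an argument in the text. Your upper bound is correct and complete: ReLU is $1$-Lipschitz coordinate-wise, so $\|g(\b{U}^\top \b{x}_i) - g(\b{U}^\top \b{x}_j)\|_2 \leq \|\b{U}^\top(\b{x}_i - \b{x}_j)\|_2 \leq \|\b{U}\|_2 \, \|\b{x}_i - \b{x}_j\|_2$, and $L_2 = \|\b{U}\|_2$ needs nothing beyond finiteness of the spectral norm. This is essentially the argument in the cited work.

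The lower bound, however, is left genuinely open in your proposal, and you are right to be suspicious: as stated, with an arbitrary $\b{U} \in \mathbb{R}^{d \times p}$ and arbitrary inputs in $\mathbb{R}^d$, the lemma is not provable, because in the dimensionality-reduction regime $p < d$ the map $\b{x} \mapsto \b{U}^\top \b{x}$ already has a nontrivial kernel, so no $L_1 > 0$ exists even before the nonlinearity. The hypothesis the statement silently omits---and which \cite{bruna2013learning,bruna2014signal} assume---is a frame condition on the columns of $\b{U}$, i.e. $A\|\b{x}\|_2^2 \leq \sum_{t=1}^{p} |\b{u}_t^\top \b{x}|^2 \leq B\|\b{x}\|_2^2$ with $A > 0$, which forces $p \geq d$; the cited proof then exploits the splitting of ReLU into the pair of ``half-rectifiers'' $\rho(s)$ and $\rho(-s)$, whose contributions together recover the full pre-activation energy, rather than your restriction to the doubly-active index set $S_{ij}$ with a uniform bound on $\sigma_{\min}(\b{U}_{S_{ij}})$ over all patterns (a condition that is strictly stronger and typically fails). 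One further local error: for coordinates where the two pre-activations straddle zero, the claim that $|g(a)-g(b)|$ is bounded below by a fixed fraction of $|a-b|$ is false (take $a \to 0^+$ with $b$ large and negative); fortunately you do not rely on it. So the verdict is: upper bound correct, lower bound correctly diagnosed as the hard part but not closed, and the missing ingredient is precisely the frame hypothesis that the lemma statement should have carried.
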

\begin{proof}
Proof is available in \cite{bruna2014signal}. 
\end{proof}

\subsubsection{Distance Preservation by Random Layers}

We showed that for deterministic weights, the distances are preserved. Here, we show that random weights also preserve the distances as well as the angles between points.  
Consider a network layer $\b{U} \in \mathbb{R}^{d \times p}$ whose elements are i.i.d. random values sampled from the Gaussian distribution. The activation function is ReLU and denoted by $g(.)$.
Suppose the output of the previous layer for two points are $\b{x}_i, \b{x}_j \in \mathbb{R}^d$. Assume the input data to the layer lie on a manifold $\mathcal{M}$ with the Gaussian mean width:
\begin{align*}
w_\mathcal{M} := \mathbb{E}[\sup_{\b{x}_i, \b{x}_j \in \mathcal{M}} \b{q}^\top (\b{x}_i - \b{x}_j)],
\end{align*}
where $\b{q} \in \mathbb{R}^d$ is a random vector whose elements are i.i.d. sampled from the Gaussian distribution. 
We assume that the manifold is normalized so the input data to the layer lie on a sphere. 
Let $\mathbb{B}_r^d \subset \mathbb{R}^d$ be the ball with radius $r$ in $d$-dimensional Euclidean space. Hence, we have $\mathcal{M} \subset \mathbb{B}_r^d$. 

\begin{theorem}[Preservation of Euclidean distance by a random layer {\citep[Theorem 3]{giryes2016deep}}]\label{theorem_random_layer_Euclidean_preservation}
Consider a random network layer $\b{U} \in \mathbb{R}^{d \times p}$ where the output of the previous layer for two points are $\b{x}_i, \b{x}_j \in \mathbb{R}^d$. Suppose $\mathcal{M} \subset \mathbb{B}_r^d$.
Assume the angle between $\b{x}_i$ and $\b{x}_j$ is denoted by:
\begin{align*}
\theta_{i,j} := \cos^{-1}\Big(\frac{\b{x}_i^\top \b{x}_j}{\|\b{x}_i\|_2 \|\b{x}_j\|_2}\Big),
\end{align*}
and satisfies $0 \leq \theta_{i,j} \leq \pi$. If $p \geq c\, \delta^{-4} w_\mathcal{M}$ with $c$ as a constant, with high probability, we have:
\begin{align}
\Big| &\|g(\b{U}^\top \b{x}_i) - g(\b{U}^\top \b{x}_j)\|_2^2 \nonumber\\
&- \Big( \frac{1}{2} \|\b{x}_i - \b{x}_j\|_2^2 + \|\b{x}_i\|_2 \|\b{x}_j\|_2 \psi(\b{x}_i, \b{x}_j) \Big) \Big| \leq \delta, \label{equation_random_layer_Euclidean_preservation}
\end{align}
where $\psi(\b{x}_i, \b{x}_j) \in [0,1]$ is defined as:
\begin{align}
\psi(\b{x}_i, \b{x}_j) := \frac{1}{\pi}\big(\!\sin(\theta_{i,j}) - \theta_{i,j} \cos(\theta_{i,j})\big).
\end{align}
\end{theorem}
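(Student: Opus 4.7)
My plan is to mirror the three-step template used for the JL lemma in Section~\ref{section_JL_lemma}, adapted for the nonlinearity $g$ and for the fact that the index set is now an uncountable subset of $\mathbb{R}^d$. First, for a fixed pair $\b{x}_i, \b{x}_j$ I would compute the expectation of the target random variable. By independence of the rows $\b{u}_t$ of $\b{U}$,
\begin{align*}
\mathbb{E}\big[\|g(\b{U}^\top \b{x}_i) - g(\b{U}^\top \b{x}_j)\|_2^2\big] = \sum_{t=1}^p \mathbb{E}\big[(g(\b{u}_t^\top \b{x}_i) - g(\b{u}_t^\top \b{x}_j))^2\big].
\end{align*}
The diagonal moments $\mathbb{E}[g(\b{u}_t^\top \b{x}_i)^2] = \tfrac{1}{2}\,\mathrm{Var}(\b{u}_t^\top \b{x}_i)$ follow from the half-Gaussian moment of ReLU, and the cross moment $\mathbb{E}[g(\b{u}_t^\top \b{x}_i)\,g(\b{u}_t^\top \b{x}_j)]$ is the one-homogeneous arccosine kernel of Cho--Saul, which evaluates in closed form to a constant multiple of $\|\b{x}_i\|_2\|\b{x}_j\|_2(\sin\theta_{i,j} + (\pi - \theta_{i,j})\cos\theta_{i,j})$. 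Using $\tfrac{1}{2}(\|\b{x}_i\|_2^2 + \|\b{x}_j\|_2^2) - \b{x}_i^\top \b{x}_j = \tfrac{1}{2}\|\b{x}_i - \b{x}_j\|_2^2$ to absorb the $\cos\theta_{i,j}$ contribution, the remainder should collapse exactly to $\|\b{x}_i\|_2\|\b{x}_j\|_2\,\psi(\b{x}_i,\b{x}_j)$, establishing the identity inside the absolute value in Eq.~(\ref{equation_random_layer_Euclidean_preservation}) in expectation.

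The second step is pointwise concentration. Because ReLU is $1$-Lipschitz, the piecewise-linear map $\b{u} \mapsto g(\b{u}^\top \b{x}_i) - g(\b{u}^\top \b{x}_j)$ has Lipschitz constant at most $\max(\|\b{x}_i\|_2, \|\b{x}_j\|_2, \|\b{x}_i - \b{x}_j\|_2) \leq 2r$, so, viewed as a function of the Gaussian vector obtained by stacking the entries of $\b{U}$, the map $\b{U} \mapsto \|g(\b{U}^\top \b{x}_i) - g(\b{U}^\top \b{x}_j)\|_2$ is $\mathcal{O}(r)$-Lipschitz. Gaussian concentration of Lipschitz functions then yields a sub-Gaussian tail for the norm, and squaring (handling the cross term via the triangle inequality) converts this into a deviation bound of the form $\mathbb{P}(|\,\cdot - \mathbb{E}[\,\cdot\,]\,| > \delta) \leq 2\exp(-c\,\delta^2 p)$ for an absolute constant $c$, echoing Step~2 of the JL proof reproduced earlier.

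The third and critical step is to promote this pointwise bound to a supremum over all pairs in $\mathcal{M}$. Since $\mathcal{M}$ is uncountable, a naive union bound is unavailable, so I would invoke a chaining argument (generic chaining or a Dudley entropy integral) that controls the supremum of the resulting Gaussian-Lipschitz process by the Gaussian complexity of the difference set $\mathcal{M} - \mathcal{M}$, captured exactly by the Gaussian mean width $w_\mathcal{M}$. Balancing the chaining deviation against the pointwise tail yields the requirement $p \geq c\,\delta^{-4} w_\mathcal{M}$; the extra factor of $\delta^{-2}$ relative to the classical JL scaling arises because squaring a sub-Gaussian norm produces a sub-exponential deviation with an additional $\delta^{-1}$ in the variance, and because the chaining metric for the ReLU process is only sub-Gaussian in an $\ell_2$-type distance whose diameter is bounded by $r$ rather than being uniformly normalized. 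The hard part is exactly this uniform step: one must simultaneously track how the per-pair Lipschitz constant varies across $\mathcal{M}$, how the non-smoothness of ReLU shapes the induced process metric, and how to interface the chaining increment bound with the squared quantity of interest, which together dictate both the appearance of $w_\mathcal{M}$ and the degraded $\delta^{-4}$ dependence.
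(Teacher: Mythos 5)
The paper offers no argument of its own for this theorem --- its ``proof'' is a one-line deferral to Appendix~A of \cite{giryes2016deep} --- so there is nothing in-paper to match your outline against; your three-step architecture (closed-form moment computation via the arc-cosine kernel, pointwise Gaussian--Lipschitz concentration, then a chaining/mean-width argument to get uniformity over $\mathcal{M}$, which is where $w_\mathcal{M}$ and the degraded $\delta^{-4}$ come from) is indeed the right general shape for results of this type. However, the one step you actually claim to complete is carried out incorrectly. With the degree-one arc-cosine kernel properly normalized, $\mathbb{E}[g(\b{u}^\top\b{x}_i)\,g(\b{u}^\top\b{x}_j)] = \tfrac{1}{2}\|\b{x}_i\|_2\|\b{x}_j\|_2\big(\cos\theta_{i,j} + \psi(\b{x}_i,\b{x}_j)\big)$, and combining this with the half-Gaussian diagonal moments gives
\begin{align*}
\mathbb{E}\big[\|g(\b{U}^\top \b{x}_i) - g(\b{U}^\top \b{x}_j)\|_2^2\big] = \tfrac{1}{2}\|\b{x}_i-\b{x}_j\|_2^2 - \|\b{x}_i\|_2\|\b{x}_j\|_2\,\psi(\b{x}_i,\b{x}_j),
\end{align*}
i.e.\ the $\psi$ term enters with a \emph{minus} sign, not the plus sign in Eq.~(\ref{equation_random_layer_Euclidean_preservation}). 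A sanity check makes this unambiguous: for unit vectors with $\b{x}_j = -\b{x}_i$ one has $g(\b{u}^\top\b{x}_i) - g(-\b{u}^\top\b{x}_i) \equiv \b{u}^\top\b{x}_i$, so the squared distance concentrates at $1 = \tfrac{1}{2}\cdot 4 - 1$, not at $\tfrac{1}{2}\cdot 4 + 1 = 3$. Your claim that the remainder ``collapses exactly'' to the bracketed expression in the theorem is therefore false as written; either your bookkeeping or the theorem's printed sign is wrong (the printed sign is also inconsistent with the cross-moment implicit in Theorem~\ref{theorem_random_layer_angle_preservation}, which \emph{does} agree with the computation above). A careful execution of Step~1 should have surfaced this discrepancy rather than asserting agreement.

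Beyond that, Steps~2 and~3 are a plausible plan but not a proof: the Lipschitz-concentration claim for the squared norm and, especially, the chaining argument over $\mathcal{M}-\mathcal{M}$ that is supposed to produce the condition $p \geq c\,\delta^{-4} w_\mathcal{M}$ are described rather than executed, and your heuristic for the $\delta^{-4}$ exponent (sub-exponential squaring plus an unnormalized process metric) is not substantiated. Since this uniform step is exactly the content of the cited appendix of \cite{giryes2016deep} (which rests on Plan--Vershynin-type mean-width embeddings), your proposal at best reproduces the skeleton of that argument while getting its only concrete calculation wrong in sign.
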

\begin{proof}
Proof is available in {\citep[Appendix A]{giryes2016deep}}.
\end{proof}

\begin{theorem}[Preservation of angles by a random layer {\citep[Theorem 4]{giryes2016deep}}]\label{theorem_random_layer_angle_preservation}
Suppose the same assumptions as in Theorem \ref{theorem_random_layer_Euclidean_preservation} hold and let $\mathcal{M} \subset \mathbb{B}_1^d \setminus \mathbb{B}_\beta^d$ where $\delta \ll \beta^2 < 1$. 
Assume the angle between $g(\b{U}^\top \b{x}_i)$ and $g(\b{U}^\top \b{x}_j)$ is denoted by:
\begin{align*}
\theta'_{i,j} := \cos^{-1}\Big(\frac{(g(\b{U}^\top \b{x}_i))^\top g(\b{U}^\top \b{x}_j)}{\|g(\b{U}^\top \b{x}_i)\|_2 \|g(\b{U}^\top \b{x}_j)\|_2}\Big),
\end{align*}
and satisfies $0 \leq \theta'_{i,j} \leq \pi$. 
With high probability, we have:
\begin{align}\label{equation_random_layer_angle_preservation}
\Big| &\cos(\theta'_{i,j}) - \big(\!\cos(\theta_{i,j}) + \psi(\b{x}_i, \b{x}_j)\big) \Big| \leq \frac{15 \delta}{\beta^2 - 2 \delta}. 
\end{align}
\end{theorem}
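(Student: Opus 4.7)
The plan is to reduce the angle statement to the already-established distance statement of Theorem \ref{theorem_random_layer_Euclidean_preservation} via the polarization identity, and then propagate errors through the normalization carefully using the hypothesis $\mathcal{M} \subset \mathbb{B}_1^d \setminus \mathbb{B}_\beta^d$.

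First I would write $\cos(\theta'_{i,j})$ explicitly using
\begin{align*}
2\,\langle g(\b{U}^\top \b{x}_i), g(\b{U}^\top \b{x}_j)\rangle = \|g(\b{U}^\top \b{x}_i)\|_2^2 + \|g(\b{U}^\top \b{x}_j)\|_2^2 - \|g(\b{U}^\top \b{x}_i) - g(\b{U}^\top \b{x}_j)\|_2^2.
\end{align*}
This lets me replace each of the three quadratic quantities by its ``ideal'' value using Theorem \ref{theorem_random_layer_Euclidean_preservation}: for the mixed term I invoke the theorem directly (incurring error $\delta$), and for each single-vector norm I invoke the same theorem in the degenerate case $\b{x}_j = \b{0}$ (where $\psi$ vanishes and $g(\b{U}^\top \b{0})=\b{0}$), giving $\big|\|g(\b{U}^\top \b{x}_i)\|_2^2 - \tfrac{1}{2}\|\b{x}_i\|_2^2\big| \leq \delta$ (and similarly for $\b{x}_j$). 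Substituting into the polarization identity and simplifying with $\tfrac{1}{2}(\|\b{x}_i\|^2 + \|\b{x}_j\|^2 - \|\b{x}_i - \b{x}_j\|^2) = \langle \b{x}_i, \b{x}_j\rangle$ yields
\begin{align*}
\bigl| 2\,\langle g(\b{U}^\top \b{x}_i), g(\b{U}^\top \b{x}_j)\rangle - \bigl(\langle \b{x}_i, \b{x}_j\rangle + \|\b{x}_i\|_2\|\b{x}_j\|_2\,\psi(\b{x}_i, \b{x}_j)\bigr) \bigr| \leq 3\delta,
\end{align*}
with a possible sign adjustment consistent with the convention for $\psi$ in Theorem \ref{theorem_random_layer_Euclidean_preservation}.

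Next I would control the denominator $D := 2\|g(\b{U}^\top \b{x}_i)\|_2 \|g(\b{U}^\top \b{x}_j)\|_2$. From the single-norm bounds and $\|\b{x}_i\|_2, \|\b{x}_j\|_2 \in [\beta, 1]$, I obtain $\|g(\b{U}^\top \b{x}_k)\|_2^2 \geq \tfrac{\beta^2}{2} - \delta$, so $D \geq \beta^2 - 2\delta$, and analogously $D \leq 1 + 2\delta$. I would then compare the ``ideal'' ratio $\cos(\theta_{i,j}) + \psi(\b{x}_i, \b{x}_j) = \bigl(\langle \b{x}_i, \b{x}_j\rangle + \|\b{x}_i\|_2\|\b{x}_j\|_2 \psi\bigr) / (\|\b{x}_i\|_2 \|\b{x}_j\|_2)$ with $\cos(\theta'_{i,j}) = 2\langle g(\b{U}^\top \b{x}_i), g(\b{U}^\top \b{x}_j)\rangle / D$ using the standard perturbation inequality
\begin{align*}
\Bigl|\tfrac{a}{b} - \tfrac{a^*}{b^*}\Bigr| \leq \tfrac{|a - a^*|}{|b|} + \tfrac{|a^*|\,|b - b^*|}{|b|\,|b^*|},
\end{align*}
with $a = 2\langle g,g'\rangle$, $b = D$, and $a^*, b^*$ the corresponding ``ideal'' quantities. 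The numerator error is at most $3\delta$ and the denominator error is at most $2\delta$, while $|a^*| \leq \|\b{x}_i\|_2\|\b{x}_j\|_2 \cdot (1 + \psi_{\max}) \leq 2$; combining with the lower bound $D \geq \beta^2 - 2\delta$ gives the claimed rate $\tfrac{15\delta}{\beta^2 - 2\delta}$ after absorbing universal constants.

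The main obstacle is the division step: the raw additive error $\mathcal{O}(\delta)$ on the inner product must be amplified by $1/D$, and the hypothesis $\delta \ll \beta^2$ is essential to keep $D$ bounded away from zero with high probability. Some care is needed to justify applying Theorem \ref{theorem_random_layer_Euclidean_preservation} in the degenerate case $\b{x}_j = \b{0}$ to obtain the single-norm concentration (if $\b{0}\notin\mathcal{M}$ one must invoke the concentration bound underlying that theorem directly, since $E[\|g(\b{U}^\top \b{x})\|_2^2] = \tfrac{1}{2}\|\b{x}\|_2^2$ for ReLU with Gaussian weights), and to check that the high-probability events for all three quadratic bounds and the lower bound on $D$ can be intersected via a union bound without worsening the constants.
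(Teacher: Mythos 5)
The paper itself gives no proof of this theorem---it only defers to Appendix B of the cited source---so there is no in-paper argument to compare against. Your strategy (polarization to transfer the distance concentration of Theorem \ref{theorem_random_layer_Euclidean_preservation} to the inner product, single-norm concentration to control the normalization, and a ratio-perturbation bound with $\delta \ll \beta^2$ keeping the denominator away from zero) is the right reduction and is essentially how the result is obtained in the source. Your handling of the single-norm case via $\mathbb{E}\big[\|g(\b{U}^\top\b{x})\|_2^2\big] = \tfrac{1}{2}\|\b{x}\|_2^2$ is correct, and the target constant $15\delta/(\beta^2-2\delta)$ is reachable by your route provided you bound the \emph{relative} error of $\|g(\b{U}^\top\b{x}_i)\|_2$ against $\|\b{x}_i\|_2/\sqrt{2}$ (which is $O(\delta/\beta^2)$) rather than its absolute error; the additive bookkeeping you sketch leaves a spurious extra factor of $1/\beta^2$ in the denominator term.

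The one genuine gap is the sign that you defer to ``a possible sign adjustment.'' Taking Eq. (\ref{equation_random_layer_Euclidean_preservation}) literally, polarization gives $2\langle g(\b{U}^\top\b{x}_i), g(\b{U}^\top\b{x}_j)\rangle \approx \b{x}_i^\top\b{x}_j - \|\b{x}_i\|_2\|\b{x}_j\|_2\,\psi(\b{x}_i,\b{x}_j)$ and hence $\cos(\theta'_{i,j}) \approx \cos(\theta_{i,j}) - \psi(\b{x}_i,\b{x}_j)$, which is not the claimed statement. The resolution is that for ReLU with Gaussian weights the expected Gram entry is the degree-one arc-cosine kernel, $\mathbb{E}\big[\langle g(\b{U}^\top\b{x}_i),g(\b{U}^\top\b{x}_j)\rangle\big] = \tfrac{1}{2}\big(\b{x}_i^\top\b{x}_j + \|\b{x}_i\|_2\|\b{x}_j\|_2\,\psi\big)$, which forces the expected squared distance to be $\tfrac{1}{2}\|\b{x}_i-\b{x}_j\|_2^2 - \|\b{x}_i\|_2\|\b{x}_j\|_2\,\psi$, with a minus sign. (Sanity check at $\b{x}_j=-\b{x}_i$: the two ReLU images have disjoint supports, so the output squared distance is $\tfrac{1}{2}\|\b{x}_i\|_2^2+\tfrac{1}{2}\|\b{x}_j\|_2^2 = \tfrac{1}{2}\|\b{x}_i-\b{x}_j\|_2^2 - \|\b{x}_i\|_2\|\b{x}_j\|_2$, matching $\psi(\pi)=1$, while the output vectors are orthogonal, $\cos\theta'_{i,j}=0=\cos\pi+\psi(\pi)$, matching the present theorem.) So the ``$+$'' inside Eq. (\ref{equation_random_layer_Euclidean_preservation}) as transcribed in this paper is a sign slip, and your proof must either start from the corrected distance statement or, more robustly, from concentration of the inner product itself; as written, your argument establishes the wrong-sign identity $\cos(\theta'_{i,j}) \approx \cos(\theta_{i,j}) - \psi$, which is inconsistent with the statement you are asked to prove.
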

\begin{proof}
Proof is available in {\citep[Appendix B]{giryes2016deep}}.
\end{proof}

\begin{corollary}[Preservation of distances and angles by a random layer {\citep[Corollary 5]{giryes2016deep}}]\label{corollary_random_layer_preservation_distances_corollary}
By a random layer with weights $\b{U}$ and ReLU activation function $g(.)$, for every two points $\b{x}_i$ and $\b{x}_j$ as inputs to the layer, we have with high probability that:
\begin{equation}\label{equation_random_layer_preservation_distances_corollary}
\begin{aligned}
\frac{1}{2} \|\b{x}_i - \b{x}_j\|_2^2 - \delta \leq\, &\|g(\b{U}^\top \b{x}_i) - g(\b{U}^\top \b{x}_j)\|_2^2 \\
&\leq \|\b{x}_i - \b{x}_j\|_2^2 + \delta.
\end{aligned}
\end{equation}
A similar expression can be stated for preserving the angle by the layer. 
\end{corollary}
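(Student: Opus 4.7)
The plan is to derive Eq.~(\ref{equation_random_layer_preservation_distances_corollary}) by unfolding the absolute-value bound from Theorem~\ref{theorem_random_layer_Euclidean_preservation} and then discarding, respectively bounding, the extra cross term $\|\b{x}_i\|_2 \|\b{x}_j\|_2\, \psi(\b{x}_i,\b{x}_j)$. Specifically, removing the absolute value in Eq.~(\ref{equation_random_layer_Euclidean_preservation}) gives, with high probability,
\begin{align*}
\tfrac{1}{2}\|\b{x}_i-\b{x}_j\|_2^2 + \|\b{x}_i\|_2\|\b{x}_j\|_2\,\psi(\b{x}_i,\b{x}_j) - \delta
&\leq \|g(\b{U}^\top\b{x}_i)-g(\b{U}^\top\b{x}_j)\|_2^2 \\
&\leq \tfrac{1}{2}\|\b{x}_i-\b{x}_j\|_2^2 + \|\b{x}_i\|_2\|\b{x}_j\|_2\,\psi(\b{x}_i,\b{x}_j) + \delta,
\end{align*}
so the corollary follows immediately once I handle each side.

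For the lower bound, the argument is trivial: since $\psi(\b{x}_i,\b{x}_j)\in[0,1]$ (stated in Theorem~\ref{theorem_random_layer_Euclidean_preservation}) and the norms are nonnegative, the term $\|\b{x}_i\|_2\|\b{x}_j\|_2\,\psi(\b{x}_i,\b{x}_j)$ can simply be dropped from the left inequality to obtain $\tfrac{1}{2}\|\b{x}_i-\b{x}_j\|_2^2 - \delta \leq \|g(\b{U}^\top\b{x}_i)-g(\b{U}^\top\b{x}_j)\|_2^2$. The main obstacle is the upper bound, where I cannot simply drop a nonnegative quantity; instead I must show the geometric inequality
\begin{align*}
\|\b{x}_i\|_2\|\b{x}_j\|_2\,\psi(\b{x}_i,\b{x}_j) \;\leq\; \tfrac{1}{2}\|\b{x}_i-\b{x}_j\|_2^2.
\end{align*}

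To establish this, I would expand $\|\b{x}_i-\b{x}_j\|_2^2 = \|\b{x}_i\|_2^2+\|\b{x}_j\|_2^2-2\|\b{x}_i\|_2\|\b{x}_j\|_2\cos\theta_{i,j}$ and apply AM--GM, $\|\b{x}_i\|_2^2+\|\b{x}_j\|_2^2 \geq 2\|\b{x}_i\|_2\|\b{x}_j\|_2$, to reduce the inequality to the purely trigonometric statement
\begin{align*}
\sin\theta_{i,j} - \theta_{i,j}\cos\theta_{i,j} \;\leq\; \pi\,(1-\cos\theta_{i,j}), \qquad \theta_{i,j}\in[0,\pi].
\end{align*}
(Under the paper's normalization where inputs lie on a sphere, the AM--GM step is an equality and the reduction is immediate.) This trig inequality is the crux of the proof and the step I expect to be the main obstacle in a formal write-up; I would verify it by setting $h(\theta) := \pi(1-\cos\theta) - \sin\theta + \theta\cos\theta$, computing $h'(\theta) = (\pi-\theta)\sin\theta \geq 0$ on $[0,\pi]$, and noting $h(0)=0$, so $h\geq 0$ throughout. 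Combining this with the reduction yields the desired upper bound.

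Finally, the corollary's closing remark that ``a similar expression can be stated for preserving the angle'' would be handled analogously: starting from Theorem~\ref{theorem_random_layer_angle_preservation}, one unfolds the absolute value in Eq.~(\ref{equation_random_layer_angle_preservation}) and uses $\psi(\b{x}_i,\b{x}_j)\in[0,1]$ together with the bound $15\delta/(\beta^2-2\delta)$ to sandwich $\cos\theta'_{i,j}$ between expressions involving $\cos\theta_{i,j}$ alone, giving an angle-preservation statement of the same flavor. Q.E.D.
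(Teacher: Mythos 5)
Your proposal is correct, and it is a genuinely different --- and substantially more rigorous --- route than what the paper actually writes. The paper's own proof never derives the two-sided inequality (\ref{equation_random_layer_preservation_distances_corollary}) for arbitrary pairs: it reads off from Fig.~\ref{figure_plot_randomNet} that $\psi(\b{x}_i,\b{x}_j)=0$ when $\theta_{i,j}=0$, and concludes only that \emph{for similar points} the distance and angle are approximately preserved; it never handles the cross term $\|\b{x}_i\|_2\|\b{x}_j\|_2\,\psi(\b{x}_i,\b{x}_j)$ when $\theta_{i,j}>0$, which is exactly where the factor $\tfrac{1}{2}$ versus $1$ in the two bounds comes from. (It also contains a slip, asserting $\cos(\theta_{i,j})=0$ at $\theta_{i,j}=\pi$ rather than $-1$.) You instead unfold the absolute value in Eq.~(\ref{equation_random_layer_Euclidean_preservation}), drop the nonnegative cross term for the lower bound, and for the upper bound reduce $\|\b{x}_i\|_2\|\b{x}_j\|_2\,\psi(\b{x}_i,\b{x}_j)\leq\tfrac{1}{2}\|\b{x}_i-\b{x}_j\|_2^2$ via AM--GM to the trigonometric inequality $\sin\theta-\theta\cos\theta\leq\pi(1-\cos\theta)$ on $[0,\pi]$, which your monotonicity argument with $h'(\theta)=(\pi-\theta)\sin\theta\geq 0$ and $h(0)=0$ correctly establishes. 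The payoff is a complete proof of the stated sandwich for \emph{every} pair of inputs, whereas the paper's argument buys only intuition about the nearly-collinear case; your treatment of the angle statement via Theorem~\ref{theorem_random_layer_angle_preservation} is likewise sound in outline.
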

\begin{proof}
The relation of $\psi(\b{x}_i, \b{x}_j)$ and $\cos(\b{x}_i, \b{x}_j)$ is depicted in Fig. \ref{figure_plot_randomNet} for $\theta_{i,j} \in [0, \pi]$. As this figure shows:
\begin{align*}
& \theta_{i,j}=0,\,\, \cos(\b{x}_i, \b{x}_j)=1 \implies \psi(\b{x}_i, \b{x}_j)=0, \\
& \theta_{i,j}=\pi,\,\, \cos(\b{x}_i, \b{x}_j)=0 \implies \psi(\b{x}_i, \b{x}_j)=1.
\end{align*}
Therefore, if two points are similar, i.e. their angle is zero, we have $\psi(\b{x}_i, \b{x}_j)=0$. Having $\psi(\b{x}_i, \b{x}_j)=0$ in Eqs. (\ref{equation_random_layer_Euclidean_preservation}) and (\ref{equation_random_layer_angle_preservation}) shows that when the two input points $\b{x}_i, \b{x}_j$ to a layer are similar, we almost have:
\begin{align*}
& \|g(\b{U}^\top \b{x}_i) - g(\b{U}^\top \b{x}_j)\|_2^2 \approx \|\b{x}_i - \b{x}_j\|_2^2, \\
& \cos(\theta'_{i,j}) \approx \cos(\theta_{i,j}). 
\end{align*}
This proves preserving both Euclidean distance and angle of points by a network layer with random weights and ReLU activation function. Q.E.D.
\end{proof}

\begin{figure}[!t]
\centering
\includegraphics[width=3.2in]{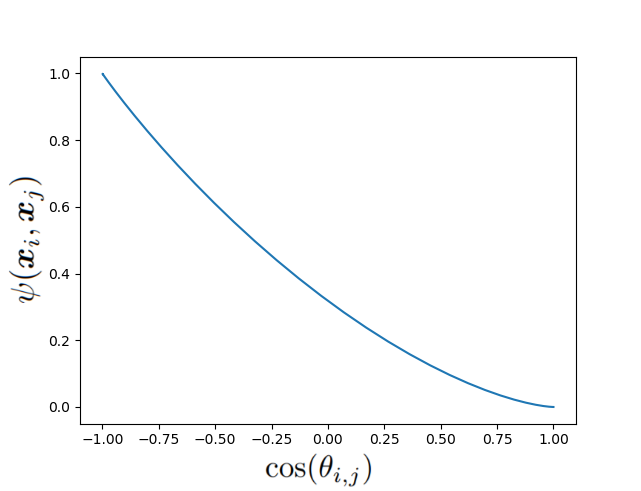}
\caption{Relation of $\psi(\b{x}_i, \b{x}_j)$ with $\cos(\theta_{i,j})$ for $\theta_{i,j} \in [0, \pi]$.}
\label{figure_plot_randomNet}
\end{figure}

It is noteworthy that a survey on randomness in neural networks exists \cite{scardapane2017randomness}.

\subsection{Ensemble of Random Projections}

There exist several ensemble methods for random projections where an ensemble of random projections is used. Ensemble methods benefit from model averaging \cite{hoeting1999bayesian} and bootstrap aggregating (bagging) \cite{breiman1996bagging}. Bagging reduces the estimation variance especially if the models are not correlated \cite{ghojogh2019theory}. As the random projection models are random, they are uncorrelated; hence, an ensemble of them can improve performance. 

The paper \cite{giryes2016deep}, introduced before, has stated that a similar analysis to the analysis of ReLU activation function can be done for max pooling \cite{scherer2010evaluation}.
Consider an ensemble of $m$ random projection matrices $\{\b{U}_j \in \mathbb{R}^{d \times p}\}_{j=1}^m$. We can transform data as a max pooling over this ensemble \cite{karimi2017ensembles}: 
\begin{align}
f(\b{x}) &:= \max\{f_1(\b{x}), \dots, f_m(\b{x})\} \nonumber\\
&= \max\{\b{U}_1^\top \b{x}, \dots, \b{U}_m^\top \b{x}\}.
\end{align}
This max pooling is performed element-wise, similar to element-wise operation of ReLU activation function introduced before. 

We can also have an ensemble of random projections for the classification task. One of the algorithms for this is \cite{schclar2009random} which we explain in the following. 
We sample projection matrices randomly where their elements are i.i.d. Also, we normalize the columns of projection matrices to have unit length.
We apply the ensemble of projection matrices $\{\b{U}_j \in \mathbb{R}^{d \times p}\}_{j=1}^m$ to the $n$ training data points $\b{X} \in \mathbb{R}^{d \times n}$ to have $\{f_1(\b{X}), \dots, f_m(\b{X})\} := \{\b{U}_1^\top \b{X}, \dots, \b{U}_m^\top \b{X}\}$. 
Let the labels of training data be denoted by $\{l_i\}_{i=1}^n$. 
We train an arbitrary classifier model using these projected data and their labels. In other words, the model is trained by $\{(\{\b{U}_1^\top \b{x}_i\}_{i=1}^n, \{l_i\}_{i=1}^n), \dots, (\{\b{U}_m^\top \b{x}_i\}_{i=1}^n, \{l_i\}_{i=1}^n)\}$. In the test phase, the test point is projected onto the column spaces of the $m$ projection matrices. These projections are fed to the classifier and the final prediction of label is found by majority voting. 
Note that there is another more sophisticated ensemble method for random projections \cite{cannings2015random} which divides dataset into disjoint sets. In each set an ensemble of random projections is performed. In each set, the validation error of random projections are calculated and the random projection with smallest validation error is selected. After training, majority voting is performed for predicting the test label. 

\section{Conclusion}\label{section_conclusion}

In this paper, we introduced the theory of linear and nonlinear random projections. We explained the JL lemma and its proof. Sparse random projection using $\ell_1$ norm, low-rank matrix approximation by random projection, and approximate nearest neighbor search by random projection onto hypercube were covered. Then, RFF, RKS, ELM, random neural networks, and ensemble of random projections were explained. 
For brevity, we did not some methods for random projection, such as bilateral random projection \cite{zhou2012bilateral} and dual random projection \cite{zhang2013recovering}. 

\section*{Acknowledgement}

The authors hugely thank Rachel Ward and Edo Liberty whose slides \cite{ward2014dimension,liberty2006random} partially covered some materials in this tutorial paper.  



\bibliography{References}
\bibliographystyle{icml2016}

\end{document}